\documentclass{article}

\usepackage{microtype}
\usepackage{graphicx}
\usepackage{booktabs} 

\usepackage{hyperref}



\usepackage[accepted]{icml2025}

\usepackage{amsmath}
\usepackage{amssymb}
\usepackage{mathtools}
\usepackage{amsthm}

\usepackage[capitalize,noabbrev]{cleveref}

\usepackage{amsfonts}       
\usepackage{nicefrac}       
\usepackage{xcolor}         
\usepackage{mathrsfs,yfonts}
\usepackage{subcaption}
\theoremstyle{plain}
\newtheorem{theorem}{Theorem}[section]

\newtheorem{lemma}[theorem]{Lemma}

\theoremstyle{definition}
\newtheorem{definition}[theorem]{Definition}
\newtheorem{assumption}[theorem]{Assumption}
\theoremstyle{remark}
\newtheorem{remark}[theorem]{Remark}


\newcommand{\R}{\mathbb{R}}
\DeclareMathOperator{\E}{\mathbb{E}}

\newcommand{\p}{\mathbb{P}}
\DeclareMathOperator{\argmin}{\operatorname{argmin}}
\DeclareMathOperator{\argmax}{\operatorname{argmax}}
\allowdisplaybreaks

\usepackage[textsize=tiny]{todonotes}

\icmltitlerunning{Self-Interested Agents in Collaborative Machine Learning: An Incentivized Adaptive Data-Centric Framework}

\begin{document}

\twocolumn[
\icmltitle{
Self-Interested Agents in Collaborative Machine Learning:\\An Incentivized Adaptive Data-Centric Framework}


\icmlsetsymbol{equal}{*}

\begin{icmlauthorlist}
    \icmlauthor{Nithia Vijayan}{yyy}
    \icmlauthor{Bryan Kian Hsiang Low}{yyy}
\end{icmlauthorlist}

\icmlaffiliation{yyy}{Department of Computer Science, National University of  Singapore, Republic of Singapore}

\icmlcorrespondingauthor{Nithia Vijayan}{nithia@comp.nus.edu.sg}

\icmlkeywords{collaborative machine learning, self-interested agents, incentives}

\vskip 0.3in
]



\printAffiliationsAndNotice{}  
\begin{abstract}
We propose a framework for adaptive data-centric collaborative machine learning among self-interested agents, coordinated by an arbiter. Designed to handle the incremental nature of real-world data, the framework operates in an online manner: at each time step, the arbiter collects a batch of data from agents, trains a machine learning model, and provides each agent with a distinct model reflecting its data contributions. This setup establishes a feedback loop where shared data influence model updates, and the resulting models guide future data-sharing policies. Agents evaluate and partition their data, selecting a partition to share using a stochastic parameterized policy, learned via policy gradient methods to optimize the utility of the received model as defined by agent-specific evaluation functions. On the arbiter side, the expected loss function over the true data distribution is optimized, incorporating agent-specific weights to account for distributional differences arising from diverse sources and selective sharing. A bilevel optimization algorithm jointly learns the model parameters and agent-specific weights. Mean-zero noise, computed using a distortion function that adjusts these agent-specific weights, is introduced to generate distinct agent-specific models, promoting valuable data sharing without requiring separate training. Our framework is underpinned by non-asymptotic analyses, ensuring convergence of the agent-side policy optimization to an approximate stationary point of the evaluation functions and convergence of the arbiter-side optimization to an approximate stationary point of the expected loss function.
\end{abstract}

\section{Introduction}
\label{sec:intro}
Collaborative machine learning (CML) leverages aggregated data from multiple agents to develop more accurate model(s) than any single agent can achieve independently. This approach is particularly valuable in scenarios where data is distributed across various sources such as different institutions or geographic locations. CML comprises two main approaches: data-centric  and model-centric. The data-centric approach enhances training by centrally pooling data, providing access to a comprehensive and unified dataset that utilizes diverse data to gain a deeper understanding of the data distribution. In contrast, the model-centric approach retains data locally, sharing only model parameters or updates. Although this method emphasizes privacy, it may overlook the full complexity of the data distribution, leading to models that are more susceptible to biases.

Data-centric CML methods typically rely on offline learning, where the complete dataset is available from the start, rather than incorporating online algorithms that handle data arriving incrementally. In real-world scenarios, data is generally gathered incrementally rather than all at once, necessitating an online approach. This allows models to continuously update as new data becomes available, ensuring they effectively learn from ongoing inputs. However, instead of processing data as a continuous stream, it is more practical to consider sequential data processing in batches. This mirrors how data is often shared in discrete intervals, such as in systems where data is periodically exchanged or collected, allowing for more efficient processing while managing computational constraints and synchronization needs.

Traditionally, data-centric CML methods view all agents as benevolent, freely sharing their data to support collaborative model development. However, in practice, the agents are often self-interested entities with distinct goals and constraints, such as businesses or organizations. While collaboration can help address the limitations of an agent's own dataset, they may choose to withhold portions of their data due to concerns regarding the quality or relevance of others' data, or to maintain a competitive edge, as sharing too much could risk losing a strategic advantage or compromising their position in research and innovation. Sharing its entire dataset may not be beneficial if the resulting combined model does not serve an agent’s specific objectives. Allowing each agent to strategically decide on its data contributions enables the creation of batch online learning frameworks that cater to individual goals while maximizing collaborative benefits. This approach acknowledges that optimizing solely for collaborative gains, without considering individual objectives, can lead to less effective outcomes in terms of individual satisfaction. By giving agents the power to influence the process, we enhance their engagement and motivation, ultimately fostering a more productive collaborative environment. This approach accommodates the diverse priorities and perspectives driven by each agent's self-interest and unique data-sharing strategies.

Potential applications of batch online data-centric CML with self-interested agents are evident in both healthcare and finance. For instance, hospitals can collaborate on cancer treatment research by leveraging unique patient data, such as demographics and treatment outcomes, to gain a comprehensive understanding of treatment efficacy. This pooling of data overcomes limitations like small sample sizes, resulting in more robust models that enhance patient outcomes. A batch online approach is essential here because new patient data is continuously generated, and models must be updated in real-time with each new batch of data to remain current and reflect the latest treatment trends. Similarly, in finance, CML can enhance credit card fraud detection by addressing the challenge of imbalanced data, where fraudulent transactions are far less common than legitimate ones. As fraud patterns evolve rapidly, a batch online learning method ensures that models can be updated in real-time with each new batch of data, allowing them to quickly adapt to emerging fraud cases and improve detection capabilities. This approach enables institutions to continuously train models on a more diverse set of fraud cases, ensuring the system remains effective as fraud tactics evolve. However, due to self-interests, hospitals and financial institutions may strategically withhold certain proprietary data while selectively sharing others to protect their competitive advantages, ensuring they maintain an edge in their fields.

We consider a framework for batch online data-centric CML among self-interested agents, facilitated by an arbiter. One might interpret the arbiter as a service provider that the collaborating agents have jointly agreed to use. The collaboration unfolds in an online manner, acknowledging that the data in real-world applications are not available all at once. In each online step, the arbiter collects a batch of data from the agents, trains a machine learning model, and subsequently provides each agent with a distinct model that reflects their data contributions. This online approach allows agents to develop strategies or policies for selectively sharing data, optimizing the utility they derive from the models they receive. Our framework comprises two key components, representing the agents and the arbiter, respectively, as outlined in the following paragraphs.

The agent component involves each agent utilizing a constant number of ranked partitions for its data, indicating that while the number of partitions remains unchanged, the size of each partition may vary over time. Different agents may have differing numbers of partitions. Our framework enables each agent to learn a parameterized stochastic policy to decide which partition of data to share with the arbiter at each time step. By incorporating stochasticity, the policy balances exploration and exploitation, allowing agents to occasionally select suboptimal partitions. This exploration is essential for discovering more effective data-sharing strategies, ultimately leading to improved long-term collaborative learning outcomes. At each time step, each agent categorize its data into partitions based on its perceived value, considering both the quality and quantity of the data. An agent may perceive the value of its data based on various factors such as its uniqueness, relevance to the task at hand, quality, and potential impact on the model's performance. Each agent assesses the value of its data independently, and the framework does not impose any specific guidelines for this evaluation. For instance, in a credit card fraud detection scenario, one agent might find a data partition containing predominantly fraudulent transactions to be the most valuable, while another agent might prioritize a data partition containing transactions primarily from high-risk merchant categories, such as online electronics retailers.

Each agent can develop distinct policies that align with their specific objectives. The model provided by the arbiter reflects the relative value of the agent's data concerning the collaborative model, in comparison to the data from other agents. To assess the benefits of sharing a particular partition, agents must evaluate the model they receive, with each agent possibly using a unique evaluation metric suited to their needs. Our framework enables each agent to fine-tune the received model using unshared data, and subsequently evaluate the fine-tuned model to quantify the advantages of sharing that specific data partition. Fine-tuning a model with unshared data gives agents a competitive advantage by allowing them to refine their models using additional data that others may not have access to. This improvement enhances model performance and enables agents to leverage more data, helping them maintain an edge while still participating in collaborative efforts.

In the context of the arbiter, it is essential to note that the data collected from various agents often displays distributional differences that are exacerbated by selective data sharing. In machine learning context, the true data distribution is often unknown and must be inferred from the available data. While we accommodate diverse data-sharing strategies among the agents, we must also consider the potential for malicious agents. To address the aforementioned issues, we propose a data weighting scheme that assigns relative weights to agents based on the value of their data contributions. While learning the collaborative model, each agent's data is weighted instead of simply averaged, ensuring that data from agents with more valuable contributions have a proportionately greater impact.

The weighting scheme also helps correct distributional differences among the data contributed by various agents. We derive these weights by utilizing a separate validation set for each agent, allocating a portion of their data to evaluate their contributions within the context of the collaborative model. This approach is crucial as it acknowledges the unique characteristics of each agent's data while assessing its influence on overall model performance. By validating against these individualized sets, we can accurately capture the distinct impact of each agent's data on the collaborative effort. Furthermore, if agents provide malicious data, the evaluation based on their own data could lead to lower assigned weights, discouraging such behaviour and reinforcing the integrity of the collaborative process. In the context of online learning, it is essential to account for the historical interactions of agents to accurately assess their value. An incremental weighting scheme can effectively incorporate the contributions from previous cycles, ensuring that the evaluation is not solely based on the current time step. This approach acknowledges the ongoing commitment and consistent quality of contributions from agents, leading to more reliable collaborative learning outcomes. We develop a bilevel optimization algorithm that concurrently learns both model parameters and agent-specific weights.

In collaborative data-sharing frameworks, incentivizing agents with tailored models encourages high-quality data contributions. Traditional approaches in data-centric CML often involve training separate models for each agent. In contrast, our method generates distinct models by introducing mean-zero noise to the collaborative model using agent-specific weights, thereby eliminating the need for separate training. These weights, which reflect each agent's relative impact, collectively form a distribution over the agents within the collaborative framework. To further motivate contributions, we apply distortion functions that adjust portions of this distribution, determining whom to nudge further—whether those with the highest, middle, or lowest contributions—and how much nudging is necessary. This targeted adjustment helps customize incentives, encouraging agents to contribute data that enhances the model's effectiveness and generalization capabilities.

\textbf{Related work}\\
Federated Learning (FL) \citep{yang19}, a widely used model-centric approach, addresses the iterative nature of collaborative learning by enabling agents to refine a global model through repeated parameter updates while preserving privacy by avoiding raw data sharing. However, FL often overlooks individual agent-specific goals and the diversity in data distributions across agents \citep{wu24, lin23,zhan22_fed,sim24_fed}. Data-centric CML typically employs offline algorithms that incentivize data sharing through equitable rewards \citep{sim20, sim24}, assuming agents act benevolently. These approaches fail to accommodate the self-interested nature of agents, who might strategically maximize their benefits, or to dynamically adapt to agents’ behaviour.

To address the aforementioned limitations, we propose a framework for data-centric collaborative learning that includes the following key contributions: (i) Self-interested agent modeling: our framework explicitly accounts for the self-interested nature of agents, allowing them to determine what data to share based on their individual criteria, while also addressing distributional differences due to agents' distinct data acquisition practices and self-interest. (ii) Policy learning for data sharing: agents learn a stochastic policy that optimizes their data-sharing strategies, aligning collaboration with their individual goals rather than assuming benevolent behavior. (iii) Incremental data handling: the framework operates online, addressing the incremental nature of data collection in real-world settings. (iv) Non-asymptotic convergence guarantees: we provide rigorous non-asymptotic analyses, ensuring convergence of both the agents' policy optimization and the arbiter's expected loss minimization to approximate stationary points.
%
\section{Preliminaries}
\label{sec:pblm}
We consider a family of predictors $\mathcal{F}\triangleq\{f_\theta:\mathcal{X}\to\mathcal{Y}|\theta\in\R^d\}$, each defined by its parameters $\theta\in\R^d$, and maps each input value $x$ in input space $\mathcal{X}$ to a corresponding target value $y$ in output space $\mathcal{Y}$. The loss function $l_\theta:\mathcal{X}\times\mathcal{Y}\to\R$ measures the discrepancy between the predicted value $f_\theta(x)$ and the true target value $y$.

The goal of each agent is to learn optimal parameters
 $\theta^*$ such that its resulting predictor $f_{\theta^*}$  exhibits effective generalization on unseen data. Specifically, let
\begin{align}
\label{eq:J}
J(\theta)\triangleq \E_{(x,y)\sim \mathscr{D}}\left [l_\theta(x,y)\right ],
\end{align}
where $\mathscr{D}$ represents the data distribution over $\mathcal{X}\times \mathcal{Y}$.
The goal is to solve the following optimization problem:
\begin{align}
\label{eq:opt}
\theta^* \in \argmin_{\theta\in \R^d} J(\theta),
\end{align}
by employing a version of the gradient descent algorithm which involves finding the optimal parameters $\theta^*$ s.t.~$\nabla_\theta J(\theta^*)=0$. To derive an expression for the gradient  $\nabla_\theta J(\theta)$, the following assumption is made about the loss function $l_\theta$:
\begin{assumption}
\label{as:grad_l_bound}
For all $\theta \in \R^d, (x,y) \in \mathcal{X}\times \mathcal{Y}$, $\left\lVert \nabla_\theta l_\theta(x,y) \right\rVert \leq L_l$;\; $0\leq L_l < \infty$.
\end{assumption}
For example, consider the weighted binary log loss function and the sigmoid function as $f_\theta(x)$, defined below:
\begin{align}
   & f_\theta(x) =\nicefrac{1}{1+exp(-\theta^\top x)}. \label{eq:sim_f}\\
    &l_\theta(x,y) \nonumber = -\gamma y\log\left(f_\theta(x) \right)\nonumber\\
    &\;\;-(1-\gamma)(1-y)\log\left(1-f_\theta(x) \right) , \gamma \in (0,1), \textrm{ and}\label{eq:sim_l}\\
    &\nabla_\theta l_\theta(x,y) = \left(\left(\gamma y + (1\!-\!\gamma) (1-y)\right)f_\theta(x) -\gamma y \right)x.\label{eq:sim_grad_l}
\end{align}
From  \eqref{eq:sim_grad_l}, we can see that the loss function in \eqref{eq:sim_l} satisfies Assumption \ref{as:grad_l_bound} if $\left\lVert x \right\rVert$ is bounded. The norm of the feature vector is usually bounded in practice.

Using Assumption \ref{as:grad_l_bound} and the dominated convergence theorem, an expression for $\nabla_\theta J(\theta)$ can be derived as follows:
\begin{align}
\label{eq:grad_J}
\nabla_\theta J(\theta) = \E_{(x,y)\sim \mathscr{D}}\left [\nabla_\theta l_\theta(x,y)\right].
\end{align}
Since the true data distribution
$\mathscr{D}$ is unknown, each agent must estimate the gradient $\nabla_\theta J(\theta)$ from their available data. With limited datasets, agents benefit from collaborating with others for broader data access, which improves model accuracy and enhances generalization. To facilitate this collaboration, we introduce a batch online CML framework, which is described in the following section.

\section{Batch Online CML Framework}
\label{sec:frmk}
We consider collaborative learning among
$N=\{1,\ldots,n\}$ self-interested agents, where the collaboration is facilitated by an arbiter. We process data in batches, where each batch represents a finite subset of the underlying space of input-output pairs, $\mathcal{X}\times\mathcal{Y}$. At each time step, the arbiter collects a batch of data from each agent, trains a machine learning model using the aggregated data, and derives a unique model for each agent commensurate with their individual contributions, then returns the model to the respective agent.
In the following sections, we elaborate on our design approach involving both the agents and the arbiter. Figure \ref{fg:cml} provides an illustration of our  framework.
\begin{figure}
    \centering
    \includegraphics[width=\columnwidth]{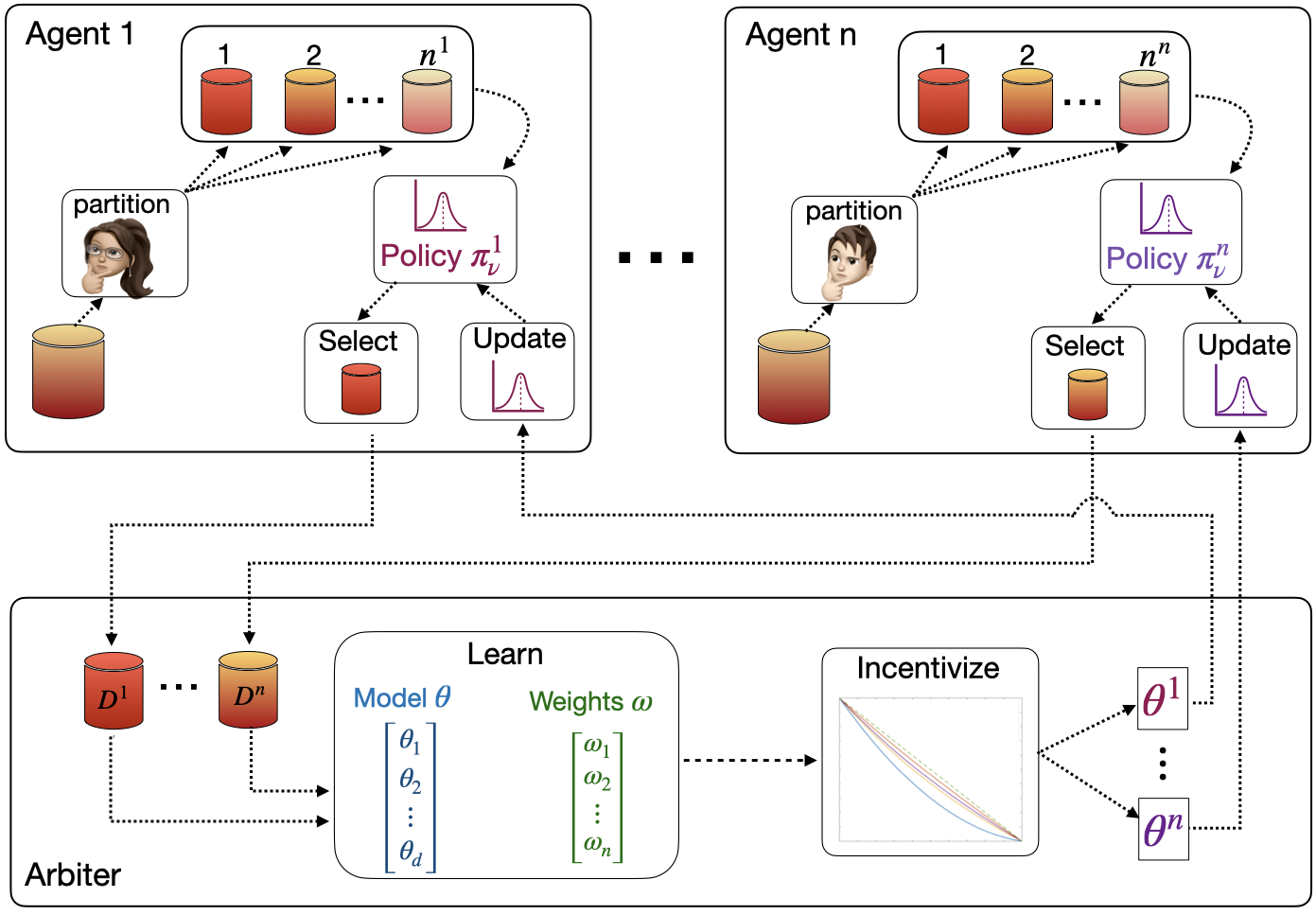}
    \caption{An illustration of the design approach.}
    \label{fg:cml}
\end{figure}

\subsection{Agents}
\label{subsec:agents}

A distribution $\mathcal{D}^i$ represents how the agent obtains batches from the global space $\mathcal{X}\times\mathcal{Y}$, capturing its distinct data acquisition practices, influenced in part by considerations of self-interest.
While $\mathcal{D}^i$ governs the distribution over subsets of data (batches), the global data distribution $\mathscr{D}$ governs the distribution over individual data points. Despite operating at different levels, $\mathcal{D}^i$ over subsets and $\mathscr{D}$ over individual points, the two distributions are intrinsically connected, as the subsets governed by $\mathcal{D}^i$ are composed of data points drawn from the same global data space as the individual points governed by $\mathscr{D}$.

Due to self-interest and uncertainty about the relevance of others' data, agents engage in selective sharing, fearing exposure of sensitive information or loss of competitive advantages. We define a strategy set $S^i = \{1, \ldots, n^i\}$, and agent $i$ divides its data $D\sim\mathcal{D}^i$ into $n^i$ partitions, $\{D_j\}_{j\in S^i}$, prioritizing data based on their own criteria, where $D_1$ contains the most significant data and $D_{n^i}$ the least. The agent has the flexibility to determine its own partitioning criteria, with the framework not imposing any restrictions or directives on how the agent organizes its data.


 We introduce a set of parameterized stochastic policies $\{\pi^i_{\nu}: S^i \to [0,1] \mid \nu \in \mathbb{R}^{n^i}\}$, which guide agent $i$ in selecting an optimal strategy $s^i \in S^i$ for choosing and sharing the data $D_{s^i}$ with the arbiter. An example of such a policy is the softmax policy, which is described below:
\begin{align}
\label{eq:agt_pi}
\pi^i_{\nu}(s)= \nicefrac{exp(\nu(s))}{\sum_{j\in S^i}exp(\nu(j))},\;\forall s\in S^i, \nu \in\R^{n^i}.
\end{align}
Agents, with no knowledge of other agents' policies and relying solely on the model received from the arbiter, face uncertainty about the contributions of other agents. Parameterized stochastic policies enable probabilistic selection of data partitions, balancing exploration and exploitation. Adaptive parameter learning is essential, as it allows the policy to continuously adjust based on the model feedback from the arbiter, improving the agent’s policy and ensuring that the policy remains effective as the behaviour and contributions of other agents evolve.


We optimize the policy parameter $\nu$ by maximizing the expected model performance, evaluated across various strategies from $S^i$. Model performance is evaluated using the function $m_\theta^i$, which measures the effectiveness of model $\theta$ on a validation set. Examples include accuracy, precision, recall, mean-squared error, etc., with agent $i$ selecting an evaluation function based on its preferences.



Specifically, we learn an optimal policy parameter $\nu^*$ by solving the optimization problem
\begin{align}
&\nu^* \in \argmin_{\nu\in\R^{n^i}} G^i(\nu),  \label{eq:agt_opt}\\
&G^i(\nu)= \E_{D\sim \mathcal{D}^i,\, s \sim \pi^i_{\nu}}\big[m^i_{\tilde{\theta}_{s}}(\bar{V}_s)\big], \textrm{ where}\label{eq:agt_G}
\end{align}
\begin{align}
\label{eq:agt_tv}
&D=\{D_j\}_{j\in S^i};\; \forall s \in S^i,\, \bar{D}_{s}=D \setminus D_s,\,p\in(0.5,1),\nonumber\\
& \bar{T}_{s} \subset \bar{D}_{s}\, s.t.\, |\bar{T}_s| = \left\lfloor p| \bar{D}_{s}|\right\rfloor,\,
\bar{V}_{s}= \bar{D}_{s} \setminus \bar{T}_{s}.
\end{align}
In \eqref{eq:agt_tv}, $\bar{T}_{s}$ and $\bar{V}_{s}$ represent the training and validation set, respectively. As we process data sequentially in batches, using a non-fixed validation set that adapts with each batch ensures that the model evaluation adapts with the evolving data, allowing for more accurate policy learning. In \eqref{eq:agt_G}, the model $\tilde{\theta}_{s}$ is obtained by fine-tuning, using $\bar{T}_s$, the parameters $\theta^i$ that agent $i$ received from the arbiter after sharing its dataset $D_{s}$.

We solve the optimization problem in \eqref{eq:agt_opt} using a policy gradient algorithm.
To derive an expression for the gradient $\nabla_{\nu}G(\nu^i)$,
we make the following assumption about the norm of the function $m^i_{\theta}$ and on the policy $\pi^i_{\nu}$:
\begin{assumption}
\label{as:agt_m_bound}
For a given $i$, $\forall D \sim \mathcal{D}^i,\, \forall s \in S^i$, $\big\lvert m^i_{\tilde{\theta}_{s}}(\bar{V}_s) \big\rvert \leq M_m^i$ a.s.; $0\leq M_m^i<\infty$.
\end{assumption}
A few examples of  $m^i_{\theta}$ that satisfy Assumption \ref{as:agt_m_bound} are accuracy, precision, recall and F1-score.
\begin{assumption}
\label{as:agt_policy_1}
For a given $i$, $\forall \nu \in \R^{n^i}$, $\forall s \in S^i$, $\left\lVert  \nabla_{\nu}\log\pi^i_{\nu}(s)\right\rVert \leq M_d^i$; $0 \leq M_d^i <\infty $.
\end{assumption}
For example, if we use softmax policy given in $\eqref{eq:agt_pi}$, then
\begin{align}
 \label{eq:agt_pi_log_grad}
 \nabla_{\nu}\log\pi^i_{\nu}(s) = \mathbf{1}_s - \pi^i_{\nu},
\end{align}
where $\mathbf{1}_s$ is the indicator vector with $1$ at index $s$ and $0$ elsewhere. From  \eqref{eq:agt_pi_log_grad}, we can see that softmax policy in $\eqref{eq:agt_pi}$ satisfies Assumption \ref{as:agt_policy_1}.

We obtain an expression for the gradient $\nabla_{\nu}G(\nu)$ in the lemma below, and the proof is available in Appendix \ref{subsec:agents_pf}.
\begin{lemma}
\label{lm:agt_grad}
Let Assumptions \ref{as:agt_m_bound}--\ref{as:agt_policy_1} hold. Then,
\begin{align}
\label{eq:agt_grad}
&\nabla_{\nu}G^i(\nu) = \E_{D\sim \mathcal{D}^i,\,s \sim \pi^i_{\nu}}\big[ m^i_{\tilde{\theta}_{s}}(\bar{V}_s) \nabla_{\nu} \log\pi^i_{\nu}(s)\big].
\end{align}
\end{lemma}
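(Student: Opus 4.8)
The plan is to apply the standard likelihood-ratio (policy-gradient) identity, with the only genuine analytic work being the justification for exchanging differentiation with the expectation over batches $D \sim \mathcal{D}^i$. First I would exploit the fact that the strategy set $S^i = \{1, \ldots, n^i\}$ is finite, so the inner expectation over $s \sim \pi^i_\nu$ collapses to a finite sum and $G^i$ can be written as
\[
G^i(\nu) = \E_{D \sim \mathcal{D}^i}\Big[\textstyle\sum_{s \in S^i} \pi^i_\nu(s)\, m^i_{\tilde{\theta}_s}(\bar{V}_s)\Big].
\]
The crucial structural observation is that, for a fixed batch $D$, the fine-tuned parameters $\tilde{\theta}_s$, the validation set $\bar{V}_s$, and hence the performance $m^i_{\tilde{\theta}_s}(\bar{V}_s)$ depend only on the strategy $s$ and on $D$, not on the policy parameter $\nu$. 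Thus $\nu$ enters solely through the weights $\pi^i_\nu(s)$.

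Next I would interchange $\nabla_\nu$ with the outer expectation $\E_{D \sim \mathcal{D}^i}$. This is the step that requires care and is the main obstacle: since $\mathcal{D}^i$ is a (possibly continuous) distribution over batches, moving the gradient inside the integral must be justified by the dominated convergence theorem rather than taken for granted. The required dominating function comes directly from the two boundedness assumptions. Differentiating the finite sum yields an integrand whose norm is at most $\sum_{s \in S^i} \lvert m^i_{\tilde{\theta}_s}(\bar{V}_s)\rvert\, \pi^i_\nu(s)\, \lVert \nabla_\nu \log \pi^i_\nu(s)\rVert \leq M_m^i M_d^i$, using Assumption~\ref{as:agt_m_bound} for $\lvert m^i_{\tilde{\theta}_s}(\bar{V}_s)\rvert \leq M_m^i$, Assumption~\ref{as:agt_policy_1} for $\lVert \nabla_\nu \log \pi^i_\nu(s)\rVert \leq M_d^i$, and $\sum_{s \in S^i} \pi^i_\nu(s) = 1$. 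Since this bound is a finite constant and $\mathcal{D}^i$ is a probability measure, it is integrable, which together with smoothness of $\nu \mapsto \pi^i_\nu(s)$ for each fixed $D$ licenses the interchange uniformly over $\nu$ in a neighborhood.

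With the interchange justified, differentiating the finite sum is routine: because only $\pi^i_\nu(s)$ depends on $\nu$, I obtain $\nabla_\nu \sum_{s} \pi^i_\nu(s)\, m^i_{\tilde{\theta}_s}(\bar{V}_s) = \sum_{s} m^i_{\tilde{\theta}_s}(\bar{V}_s)\, \nabla_\nu \pi^i_\nu(s)$. I would then apply the log-derivative identity $\nabla_\nu \pi^i_\nu(s) = \pi^i_\nu(s)\, \nabla_\nu \log \pi^i_\nu(s)$, which is valid wherever $\pi^i_\nu(s) > 0$ (holding everywhere for the softmax policy of \eqref{eq:agt_pi}). Recombining the weighted sum back into an expectation over $s \sim \pi^i_\nu$ and reinstating the outer expectation over $D$ yields exactly the claimed expression $\nabla_\nu G^i(\nu) = \E_{D \sim \mathcal{D}^i,\, s \sim \pi^i_\nu}\big[m^i_{\tilde{\theta}_s}(\bar{V}_s)\, \nabla_\nu \log \pi^i_\nu(s)\big]$. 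The finite-strategy structure ensures that the only nontrivial analytic ingredient is the dominated-convergence justification in the second step, with everything else reducing to algebraic manipulation.
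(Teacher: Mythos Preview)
Your proposal is correct and follows essentially the same approach as the paper: both exploit the finiteness of $S^i$ to reduce the inner expectation to a finite sum, invoke the dominated convergence theorem via the boundedness Assumptions~\ref{as:agt_m_bound}--\ref{as:agt_policy_1} to justify the interchange of $\nabla_\nu$ with the expectation, and then apply the log-derivative identity $\nabla_\nu \pi^i_\nu(s) = \pi^i_\nu(s)\,\nabla_\nu \log \pi^i_\nu(s)$. The only cosmetic difference is that the paper first pulls the finite sum over $s$ outside the expectation over $D$ (so that $\E_{D\sim\mathcal{D}^i}[m^i_{\tilde\theta_s}(\bar V_s)]$ appears as a $\nu$-independent coefficient), whereas you keep $\E_D$ on the outside and differentiate the integrand; these orderings are trivially equivalent.
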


At time step $t$, agent $i$ has data $D_t=\{D_{t,j}\}_{j\in S^i}$, and selects a strategy $s_t\sim \pi^i_{\nu_t}$, and shares the data partition $D_{t,s_t}$ with the arbiter, and in return receives a unique model parameter $\theta^i_{t+1}$. We construct $\bar{T}_{s_t}$ and $\bar{V}_{s_t}$ as in \eqref{eq:agt_tv}, and fine-tune $\theta^i_t$
to $\tilde{\theta}_{s_t}$. Then, we estimate the gradient $\nabla_{\nu}G(\nu)$ in \eqref{eq:agt_grad} as
\begin{align}
\label{eq:agt_nablahat_G}
\widehat{\nabla}_{\nu}G^i(\nu_t)
= m^i_{\tilde{\theta}_{s_t}}(\bar{V}_{s_t}) \nabla_{\nu} \log\pi^i_{\nu_t}(s_t),
\end{align}
and learn $\nu_{t+1}$ as given below:
\begin{align}
\label{eq:agt_nu_t}
\nu_{t+1} = \nu_{t} - b_t \widehat{\nabla}_\nu G^i(\nu_{t}), \; b_t \in (0,1),
\end{align}
where $\nu_0$ is set arbitrarily.
Algorithm~\ref{alg:agt} outlines the agent-specific component of the CML framework.
\begin{algorithm}
    \caption{CML-Agent}
    \label{alg:agt}
    \begin{algorithmic}[1]
        \STATE \textbf{Initialize}: $\nu_0$;
        \FOR {$t=0,1,\hdots$ }
        \STATE Get $D_t \sim \mathcal{D}^i$;
        \STATE Partition $D_t$ into $\{D_{t,j}\}_{j\in S^i}$;
        \STATE $s_t\sim \pi_{\nu_t}(\cdot)$;
        \STATE Share $D_{t,s_t}$ with arbiter;
        \STATE $\bar{D}_{t,s_t}= D_t \setminus D_{t,s_t}$;
        \STATE $\bar{T}_{t,s_t} \subset \bar{D}_{t,s_t} \,s.t.\, |\bar{T}_{t,s_t}| = \left\lfloor p| \bar{D}_{t,s_t}|\right\rfloor$;
        \STATE $\bar{V}_{t,s_t} = \bar{D}_{t,s_t} \setminus \bar{T}_{t,s_t}$;
        \STATE Divide $\bar{T}_{t,s_t}$ into $n_b^i$ batches $\{T_t^0,\ldots,T_t^{n_b^i-1}\}$;
        \STATE Get $\theta_{t+1}^i$ from the arbiter;
        \STATE $\tilde{\theta}^0=\theta_{t+1}^i$;
         \FOR {$k=0,\hdots, n^i_b-1$ }
         \STATE $\tilde{\theta}^{k+1}=\tilde{\theta}^k - a_{k} \frac{1}{|T_t^k|}\sum\nolimits_{(x,y)\in T_t^k}\nabla_{\tilde{\theta}}l_{\tilde{\theta}^k}(x,y)$;
         \ENDFOR
        \STATE $\tilde{\theta}_{s_t}= \tilde{\theta}^{n^i_b}$;
        \STATE $\nu_{t+1} = \nu_{t} - b_t \widehat{\nabla}_\nu G^i(\nu_{t})$;
        \ENDFOR
    \end{algorithmic}
\end{algorithm}
\subsection{Arbiter}
\label{subsec:arbiter}

Let $\mathcal{D}^{\theta}=\{\mathcal{D}^{\theta^i}\}_{i\in N}$, where  $\mathcal{D}^{\theta^i}$ represents the distribution over the data batches shared by agent $i$. The distribution $\mathcal{D}^{\theta^i}$  deviate from the agent's local data batch distribution $\mathcal{D}^i$ that we discussed in Section \ref{subsec:agents}, because agent $i$ does not share all data in $D\sim \mathcal{D}^i$, but instead selects and shares a subset of $D$, based on its sharing policy $\pi^i$. Consequently, the data received by the arbiter does not directly follow the global data distribution $\mathscr{D}$. Instead it reflects a combination of each agent's local distribution, $\mathcal{D}^i$, and its data-sharing policy $\pi^i$, which depends on the shared model parameters.

Our goal is to solve the optimization problem in \eqref{eq:opt}, by deriving an estimate of the gradient in \eqref{eq:grad_J} by utilizing the batch data that follows the distribution $\mathcal{D}^{\theta}$. Since the underlying distributions are unknown and must be estimated from the available data. To account for distributional differences and effectively model the data, we employ a data weighting scheme that assigns suitable weights to each agents' data. Since the policy $\pi^i$ is learned based on the models received from the arbiter, a feedback loop naturally arises between the data shared by the agents and the models provided by the arbiter. This dynamic interplay motivates the need for a learning algorithm to determine agent-specific weights, ensuring that the model adapts to the evolving data-sharing policies of the agents.

Specifically, we consider parameterized weight functions $\{h_\omega:N\to[0,1]|\omega\in\R^n\}$ that assigns relative weight to each agent $i$, as defined below:
\begin{align}
\label{eq:abt_h}
 h_{\omega}(i)= \nicefrac{exp(\omega(i))}{\sum_{j=1}^{n}exp(\omega(j))},\;\forall i\in N, \omega\in\R^n.
\end{align}
Inorder to jointly learn the model parameters $\theta$ and agent-specific weights $\omega$, we define $L(\theta,\omega,D)$ and $M(\theta,\omega,D)$, as given below:
\begin{align}
&\forall D \sim \mathcal{D^{\theta}}, \,D = \{D^i\}_{i\in N},\nonumber\\
&L(\theta,\omega,D)=\big\langle\big(\E_{\scalebox{0.9}{$\substack{T^i\sim \mathcal{U}(D^i,\lceil p|D^i|\rceil)\\(x^i,y^i)\sim \mathcal{U}(T^i,1)}$}}[l_{\theta}(x^i,y^i)]\big)_{\!i=1}^{\!n},h_\omega   \big\rangle,\label{eq:L}\\
&M(\theta,\omega,D)
=\big\langle\big(\E_{\scalebox{0.9}{$\substack{V^i\sim \mathcal{U}(D^i,\lfloor (1-p)|D^i|\rfloor)\\(x^i,y^i)\sim \mathcal{U}(V^i,1)}$}}[m_{\theta}(x^i,y^i)]\big)_{\!i=1}^{\!n} ,\omega \big\rangle\nonumber\\
&\quad\quad +(\nicefrac{\lambda_\omega}{2})\left\lVert \omega \right\rVert^2,\;\lambda_\omega\in (0,0.5],\,p\in(0.5,1)\label{eq:M}
\end{align}
where $\mathcal{U}(A,a)$ denotes the uniform distribution over subsets of $A$ of size $a$. The notation $(a_i)_{i=1}^{n}$ represents a vector of size $n$, where each element of the vector is denoted by $a_i$.

In \eqref{eq:L}, the function $l_\theta(\cdot)$ is as defined in Section \ref{sec:pblm}, and $m_\theta:\mathcal{X}\times\mathcal{Y}\to\R$ is a function that evaluate the learned model $\theta$. The evaluation function used by the arbiter differ from
the evaluation functions
$m^i_\theta(\cdot)$ used by agents to learn the policy $\pi^i$ in Section \ref{subsec:agents}, where each agent may choose its own evaluation function independently, without requiring consensus on a common evaluation function. The arbiter has no knowledge of the evaluation functions used by agents to learn their policies. Instead, the arbiter utilizes its own evaluation function, such as a sigmoid function or mean-squared error, to determine the agent-specific weights $\omega$. In \eqref{eq:abt_tv}, $T^i$ and $V^i$ represent the training and the validation sets, respectively. By evaluating the model on agent-specific validation sets, we can assess how well the model generalizes to each agent's data using the evaluation function $m_\theta(\cdot,\cdot)$. This enables the adjustment of agent-specific weights to better align with the model's performance on each agent's data. As we process data sequentially in batches, a fixed validation set may not provide an accurate evaluation of model performance due to the changing characteristics of the data. Using a non-fixed validation set that adapts with each batch ensures that model evaluation remains representative of the most recent data.

To obtain an for the gradient $\nabla_{\omega}M(\theta,\omega,D)$, we make the following assumption:
\begin{assumption}
\label{as:abt_m_bound_1}
$\forall \theta \in \R^d,\,\forall (x,y) \in \mathcal{X}\times\mathcal{Y}$, $\left\lvert m_\theta(x,y) \right\rvert \leq M_m$; $0\leq M_m <\infty$.
\end{assumption}
An example of  $m_\theta(\cdot,\cdot)$ that satisfies Assumption \ref{as:abt_m_bound_1} is sigmoid function, which is defined as below.
\begin{align}
    \label{eq:sim_abt_m}
    m_\theta(x,y) =1-\nicefrac{1}{1+\exp\left(-y f_\theta(x)\right)},
\end{align}
where predictor $f_\theta(x)$ is as defined in \eqref{eq:sim_f}.

We obtain an expression for the gradients $\nabla_{\theta} L(\theta,\omega,D)$ and $\nabla_{\omega}M(\theta,\omega,D)$ in the lemma below, and the proof is available in Appendix \ref{subsec:arbiter_pf}.
\begin{lemma}
\label{lm:abt_grad}
Let Assumptions \ref{as:grad_l_bound} and \ref{as:abt_m_bound_1} hold. Then,
\begin{align}
&\nabla_{\theta} L(\theta,\omega,D)
=\big[ \E_{\scalebox{0.9}{$\substack{T^i\sim \mathcal{U}(D^i,\lceil p|D^i|\rceil)\\(x^i,y^i)\sim \mathcal{U}(T^i,1)}$}}[\nabla_\theta l_{\theta}(x^i,y^i)]\big]_{i=1}^{n}
h_\omega, \label{eq:grad_L}\\
&\nabla_{\omega}M(\theta,\omega,D)\nonumber\\
&= \big(\E_{\scalebox{0.9}{$\substack{V^i\sim \mathcal{U}(D^i,\lfloor (1-p)|D^i|\rfloor)\\(x^i,y^i)\sim \mathcal{U}(V^i,1)}$}}[m_{\theta}(x^i,y^i)]\big)_{\!i=1}^{\!n}
+ \lambda_\omega \omega.\label{eq:grad_M}
\end{align}
\end{lemma}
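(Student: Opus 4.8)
The plan is to compute each gradient by expanding the inner products in the definitions \eqref{eq:L} and \eqref{eq:M} into explicit sums over the $n$ agents, and then differentiating term by term, invoking the boundedness assumptions only to justify interchanging differentiation with expectation and to guarantee that the relevant expectations are finite. Both gradients are essentially mechanical once the inner products are unpacked; the one point requiring care is the derivative–expectation interchange in the $\theta$-gradient.

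For $\nabla_\theta L$, I would write $L(\theta,\omega,D) = \sum_{i=1}^{n} h_\omega(i)\, \E[l_\theta(x^i,y^i)]$, where for each $i$ the expectation is taken over $T^i\sim\mathcal{U}(D^i,\lceil p|D^i|\rceil)$ and $(x^i,y^i)\sim\mathcal{U}(T^i,1)$. Since the weights $h_\omega(i)$ in \eqref{eq:abt_h} do not depend on $\theta$, the gradient acts only on the expectation terms, so the key step is to move $\nabla_\theta$ inside each expectation. Because both sampling layers are uniform over finite subsets of the batch $D$, each expectation is in fact a finite weighted sum; the interchange is therefore immediate, and Assumption \ref{as:grad_l_bound} (via the dominated convergence theorem, exactly as in the derivation of \eqref{eq:grad_J}) supplies the integrability needed to state it at the measure-theoretic level and ensures $\E[\nabla_\theta l_\theta(x^i,y^i)]$ exists. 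Collecting the per-agent gradient vectors $\E[\nabla_\theta l_\theta(x^i,y^i)]$ as the columns of a $d\times n$ matrix and multiplying on the right by $h_\omega$ then reproduces \eqref{eq:grad_L}.

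For $\nabla_\omega M$, I would expand $M(\theta,\omega,D) = \sum_{i=1}^{n}\omega(i)\,\E[m_\theta(x^i,y^i)] + (\lambda_\omega/2)\lVert\omega\rVert^2$, noting the subtlety that in \eqref{eq:M} the inner product is taken directly with $\omega$ rather than with $h_\omega$. The decisive observation is that $m_\theta$ carries no $\omega$-dependence, so each coefficient $\E[m_\theta(x^i,y^i)]$ is constant in $\omega$, with Assumption \ref{as:abt_m_bound_1} guaranteeing these coefficients are finite. Differentiating the linear term in $\omega$ then returns the coefficient vector $(\E[m_\theta(x^i,y^i)])_{i=1}^{n}$, and differentiating the quadratic regularizer $(\lambda_\omega/2)\lVert\omega\rVert^2$ returns $\lambda_\omega\omega$; summing the two yields \eqref{eq:grad_M}. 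No interchange of differentiation and expectation is needed here, since $\omega$ appears only in the explicit linear and quadratic terms.

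The only genuine obstacle is the derivative–expectation interchange in the $\theta$-gradient, and even this is mild: because the batch $D$ is finite and both the uniform choice of $T^i$ and the uniform choice of $(x^i,y^i)$ within $T^i$ are discrete, the expectation collapses to a finite sum and the interchange reduces to linearity of differentiation, while Assumption \ref{as:grad_l_bound} upgrades this to a rigorous dominated-convergence argument that covers the general case. I do not expect any further difficulty beyond bookkeeping of the matrix–vector shapes in \eqref{eq:grad_L}.
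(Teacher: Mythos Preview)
Your proposal is correct and follows essentially the same approach as the paper: expand the inner products, differentiate term by term, and invoke Assumption~\ref{as:grad_l_bound} with the dominated convergence theorem to justify the interchange in \eqref{eq:grad_L}, while \eqref{eq:grad_M} follows by direct differentiation of the linear and quadratic terms in $\omega$. Your additional remark that the expectations are finite sums over subsets of a fixed batch, making the interchange elementary, is a nice observation that the paper does not make explicit.
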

In the above, the notation $[a_i]_{i=1}^{n}$ denotes a matrix with $n$ columns, where each column is represented by $a_i$.
We assume the following to solve the optimization in \eqref{eq:opt}.
\begin{assumption}
\label{as:cent_w*}
 There exists a function $\omega^*:\R^d\to \mathcal{W}\subset\R^n;\,\mathcal{W}=\{\omega\in\R^n:\lVert \omega \rVert\leq M_{\omega^*}, 0 \leq M_{\omega^*} < \infty\}$ s.t. for a given $\theta$, $\omega^*(\theta)$ is the unique solution to $\E_{D\sim\mathcal{D^{\theta}}}\left[\nabla_{\omega}M(\theta,\omega^*(\theta),D)\right]=0$, and $\nabla_\theta J(\theta)= \E_{D\sim\mathcal{D^{\theta}}}\left[\nabla_{\theta} L(\theta,\omega^*(\theta),D)\right]$.
\end{assumption}
Assumptions similar to Assumption \ref{as:cent_w*} are often used in the theoretical analysis of bilevel optimization algorithms \citep{doan23,zeng22}.

Solving the optimization problem in \eqref{eq:opt} by finding a point $\theta^*$, where $\nabla_\theta J(\theta^*)=0$ is equivalent to finding a point $(\theta^*, \omega^*(\theta^*))$ that satisfies
\begin{align}
\label{eq:abt_opt_pt}
&\E_{D\sim\mathcal{D^{\theta}}}\left[\nabla_{\theta} L(\theta^*,\omega^*(\theta^*),D)\right]=0 \textrm{ and } \nonumber\\
&\E_{D \sim \mathcal{D^{\theta}}}\left[\nabla_{\omega}M(\theta^*,\omega^*(\theta^*),D)\right]=0.
\end{align}

At time $t$, the arbiter receives the data $D_t=\{D^i_t\}_{i\in N}$. Then $\forall i \in N$, we construct a training set $T^i_t$, and a validation set $V^i_t$ as given below:
\begin{align}
\label{eq:abt_tv}
T^i_t \subset D^i_t \,s.t.\, |T^i_t| = \left\lfloor p|D^i_t|\right\rfloor,\,
 V^i_t= D^i_t \setminus T^i_t,\,p\in(0.5,1).
\end{align}
We further divide the training set $T^i_t$ into $n_b$ batches $\{T^i_{t,0},\ldots,T^i_{t,n_b-1}\}$, and learn $\theta$ and $\omega$ are as given below:
\begin{align}
&\theta_{t,0} = \theta_t, \;k=0,\nonumber\\
&\begin{rcases*}
\theta_{t,k+1}=\theta_{t,k}-\alpha_{t} \widehat{\nabla}_\theta L(\theta_{t,k},\omega_t,D_t),\\
k=k+1,\\
\end{rcases*}
{n_b \textrm{ times}}\nonumber\\
&\theta_{t+1}= \theta_{t,n_b},\;\theta_0 \textrm{ is set arbitrarily},\; \alpha_{t} \in (0,1) \label{eq:abt_theta_it};\\
&\omega_{t+1}=\omega_{t}-\beta_t \widehat{\nabla}_\omega M(\theta_{t+1},\omega_t,D_t),\;\omega_0=\mathbf{0},\; \beta_{t} \in (0,1)\label{eq:abt_omega_it}
\end{align}
We start with a weight $\omega_{0}=\mathbf{0}$, so that every agent get equal priority when we start our learning.
Here, $\widehat{\nabla}\theta L(\cdot,\cdot,\cdot)$ and $\widehat{\nabla}\omega M(\cdot,\cdot,\cdot)$ are estimators of $\nabla_\theta L(\cdot,\cdot,\cdot)$ and $\nabla_\omega M(\cdot,\cdot,\cdot)$, respectively, as detailed below.
\begin{align}
&\widehat{\nabla}_{\theta}L(\theta_{t,k},\omega_t,D_t)\nonumber\\
&=\big[\nicefrac{1}{|T^i_{t,k}|}{\textstyle\sum\nolimits_{(x^i,y^i)\in T^i_{t,k}}}\nabla_{\theta}l_{\theta_{t,k}}(x^i,y^i)\big]_{i=1}^{n} h_{\omega_t},\label{eq:abt_nablahat_L_theta}\\
&\widehat{\nabla}_{\omega}M(\theta_{t+1},\omega_t,D_t)\nonumber\\
&=\big(
\nicefrac{1}{|V_t^i|}{\textstyle\sum\nolimits_{(x^i,y^i)\in V_t^i}}m_{\theta_{t+1}}(x^i,y^i)\big)_{i=1}^{n}+\lambda_\omega \omega_t.\label{eq:abt_nablahat_M_omega}
\end{align}

We obtain distinct model parameters for each agent $i$ by introducing zero-mean noise $\epsilon(i)$ to the learned model parameter $\theta$, as defined below.
\begin{align}
\label{eq:abt_eps}
\eta^i\sim \mathbb{B}^d(1-g(h_{\omega}(i))),
\end{align}
where $\mathbb{B}^d(r)=\{v\in\R^d|\lVert v \rVert \leq r\}$ is a ball centered at origin with radius $r$. In the above, $g:[0,1]\to[0,1]$ is a distortion function which is non-decreasing with $g(0)=0$ and $g(1)=1$. Similar distortion functions are used in the literature to define risk measures \citep{denneberg1990,nv2023}. We use $g(\cdot)$ to nudge the weights $h_{\omega}(\cdot)$ to incentivize the agents to share more valuable data. A few examples of distortion functions are given in the Table \ref{tb:g} and their plots in Figure \ref{fg:g}.
\begin{table}
\begingroup
\small
\setlength{\tabcolsep}{4.75pt}
\begin{tabular}{ll}
\toprule
Quadratic function & $g(u)=(1+\lambda) u-\lambda u^2,\; 0\leq \lambda\leq 1$\\
Exponential function & $g(u)=\nicefrac{1-\exp(-\lambda u)}{1-\exp(-\lambda )},\; \lambda>0$\\
Square-root function & $g(u)=\nicefrac{\sqrt{1+\lambda u}-1}{\sqrt{1+\lambda }-1},\; \lambda>0$\\
Logarithmic function & $g(u)=\nicefrac{\log(1+\lambda u)}{\log(1+\lambda )}, \;\lambda>0$\\
\bottomrule
\end{tabular}
\endgroup
\caption{Examples of distortion functions}
\label{tb:g}
\end{table}
\begin{figure}
    \centering
    \begin{subfigure}{0.49\columnwidth}
    \includegraphics[width=\columnwidth]{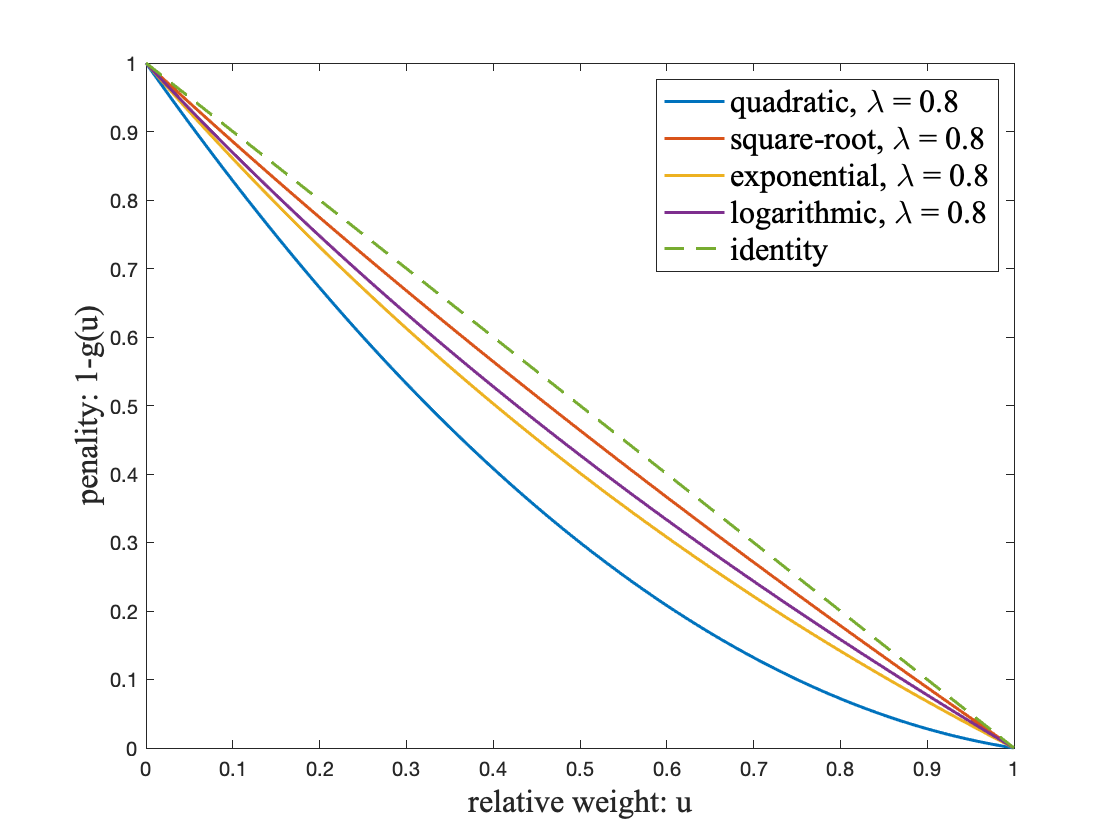}
     \end{subfigure}
     \hfill
    \begin{subfigure}{0.49\columnwidth}
    \includegraphics[width=\columnwidth]{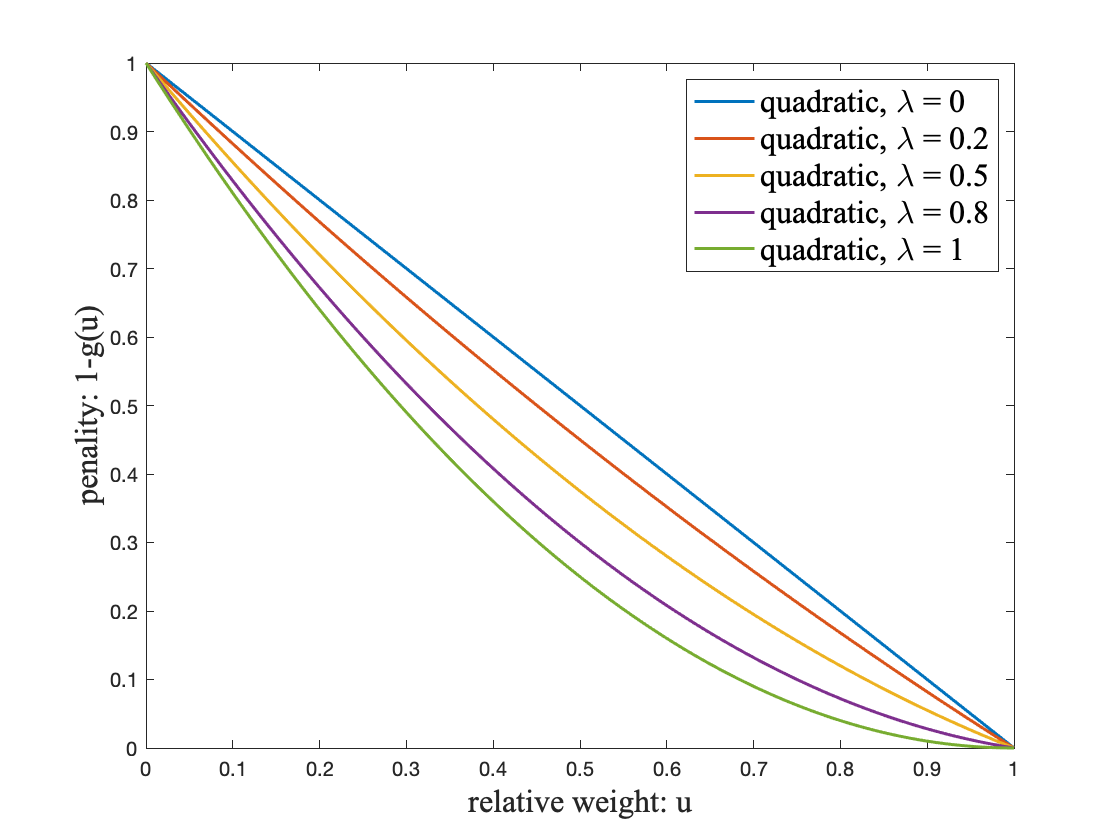}
    \end{subfigure}
    \caption{Examples of distortion functions}
    \label{fg:g}
\end{figure}
By adjusting the parameter $\lambda$, one can determine the degree of incentivization. If we employ the identity function as $g(\cdot)$, we rely solely on the relative weights without incorporating any extra incentives. From Figure \ref{fg:g}, we can see that distortion functions nudge each agent differently based on their relative weight. The chosen distortion function determines the extent of nudging applied to the relative weights. For instance, agents with weights in the middle to highest range may be nudged more significantly compared to those with nearly low weights.

We are sampling a vector uniformly at random from a ball of radius
$1-g(h_{\omega}(i))$ as noise $\eta^i$ which ensure  $\lVert \eta^i\rVert \leq 1-g(h_{\omega}(i))$.
We opt for a ball rather than a sphere for sampling noise to introduce variability in the length of the noise vector, with a maximum length of $1-g(h_{\omega}(i))$.
At time $t$, the arbiter learn the model parameter $\theta_{t+1}$, and calculate a unique $\theta^i_{t+1}$ for each agent $i$ as given below:
\begin{align}
\label{eq:abt_theta_rew}
\theta^i_{t+1} = \theta_{t+1} + \eta^i_{t+1}, \;\eta^i_{t+1}\sim \mathbb{B}^d(1-g(h_{\omega_{t+1}}(i))).
\end{align}
Algorithm~\ref{alg:abt} outlines the arbiter-specific component of the CML framework.
\begin{algorithm}
    \caption{CML-Arbiter}
    \label{alg:abt}
    \begin{algorithmic}[1]
        \STATE \textbf{Initialize}: $\theta_0$, $\omega_0=\mathbf{0}$;
        \FOR {$t=0,1,\hdots$ }
        \STATE Get data $\{D^i_t\}_{i\in N}$ from agents;
        \FOR {$i=1,\ldots,N$}
        \STATE $T^i_t \subset D^i_t \,s.t.\, |T^i_t| = \left\lfloor p|D^i_t|\right\rfloor$,\; $V^i_t= D^i_t \setminus T^i_t$;
        \STATE Divide $T^i_t$ into $n_b$ batches $\{T^i_{t,0},\ldots,T^i_{t,n_b-1}\}$;
        \ENDFOR
        \STATE $\theta_{t,0} = \theta_t$;
         \FOR {$k=0,\hdots, n_b-1$ }
        \STATE $\theta_{t,k+1}=\theta_{t,k}-\alpha_{t} \widehat{\nabla}_\theta L(\theta_{t,k},\omega_t,D_t)$;
         \ENDFOR
        \STATE $\theta_{t+1}= \theta_{t,n_b}$;
        \STATE $\omega_{t+1}=\omega_{t}-\beta_t \widehat{\nabla}_\omega M(\theta_{t+1},\omega_t,D_t)$;
        \FOR {$i=1,\hdots, n$ }
        \STATE $\theta^i_{t+1} = \theta_{t+1} + \eta^i_{t+1}$,\;$\eta^i_{t+1}\sim \mathbb{B}^d(1-g(h_{\omega_{t+1}}(i)))$;
        \STATE Send $\theta^i_{t+1}$ to agent $i$;
        \ENDFOR
        \ENDFOR
    \end{algorithmic}
\end{algorithm}

\section{Convergence Analysis}
\label{sec:conv}
We provide non-asymptotic convergence guarantees for our algorithms toward an $\epsilon$-stationary point, defined as follows:
\begin{definition}(\textbf{$\epsilon$-stationary point})
\label{def:esolution}
Consider an optimization problem $\theta^*\in\textrm{arg}\!\min_{\theta \in \mathbb{R}^d} f(\theta)$.
Fix $\epsilon>0$, and let $ \theta_R$ be the random output of an iterative algorithm designed to solve this problem. Then, $\theta_R$ is called an $\epsilon$-stationary point of the optimization problem, if $\mathbb{E}[\lVert \nabla f \left( \theta_R \right)\rVert^2] \leq \epsilon$, where the expectation is over $R$.
\end{definition}
We make the following additional assumptions for analysis.
\begin{assumption}
\label{as:grad2_l_bound}
For all $\theta \in \R^d, (x,y) \in \mathcal{X}\times \mathcal{Y}$, $\left\lVert \nabla_\theta^2 l_\theta(x,y) \right\rVert \leq L_{l'}$; $0\leq L_{l'}<\infty$.
\end{assumption}
For example, if we use  loss function as given in \eqref{eq:sim_l}, then
\begin{align}
    \nabla_\theta^2 l_\theta(x,y)
    &= \left(\gamma y + (1\!-\!\gamma) (1\!-\!y)\right)f_\theta(x)\left(1\!-\!f_\theta(x) \right)xx^\top\!\label{eq:sim_hess_l}.
\end{align}
From \eqref{eq:sim_hess_l}, we can see that the loss function in \eqref{eq:sim_l} satisfies Assumption \ref{as:grad2_l_bound} if $\left\lVert x \right\rVert$ is bounded. The norm of the feature vector is usually bounded in practice.
\begin{assumption}
\label{as:agt_policy_2}
For a given $i$, $\forall \nu \in \R^{n^i}$, $\forall s \in S^i$, $\left\lVert \nabla_{\nu}^2 \log\pi^i_{\nu}(s) \right\rVert \leq M_h^i$; $0 \leq M_h^i <\infty $.
\end{assumption}
For example, if we use the softmax policy given in \eqref{eq:agt_pi}, then
\begin{align}
  \label{eq:agt_pi_log_hess}
\nabla_{\nu}^2 \log \pi^i_{\nu}(s) = - \textrm{diag}(\pi^i_{\nu}) - \pi^i_{\nu} {\pi^i_{\nu}}^\top.
\end{align}
From \eqref{eq:agt_pi} and \eqref{eq:agt_pi_log_hess}, we can see that the softmax policy given in $\eqref{eq:agt_pi}$ satisfies Assumption \ref{as:agt_policy_2}.
\begin{assumption}
\label{as:abt_m_bound_2}
$\forall \theta \in \R^d$, $\forall (x,y) \in \mathcal{X}\times\mathcal{Y}$, $\left\lVert \nabla_\theta m_\theta(x,y) \right\rVert \leq L_m$; $0\leq L_m <\infty$.
\end{assumption}
For example, if we use  $m_\theta(\cdot,\cdot)$ as in \eqref{eq:sim_abt_m}, then
\begin{align}
    \label{eq:sim_abt_grad_m}
    &\nabla_\theta m_\theta(x,y) \nonumber\\
    &= -xy m_\theta(x,y) \left(1-m_\theta(x,y)\right) f_\theta(x) \left(1\!-\!f_\theta(x)\right).
\end{align}
From \eqref{eq:sim_abt_m} and \eqref{eq:sim_abt_grad_m}, we can see that the validation function in \eqref{eq:sim_abt_m} satisfies Assumption  \ref{as:abt_m_bound_2} if $\left\lVert x \right\rVert$ is bounded. The norm of the feature vector is usually bounded in practice.

We now present our main results regarding the convergence of Algorithms \ref{alg:agt} and \ref{alg:abt}, with proofs provided in Appendices \ref{subsubsec:pf_thm_agt} and \ref{subsubsec:pf_thm_abt}, respectively.
\begin{theorem}[\textbf{Agent $i$}]
\label{thm:agt}
Let Assumptions \ref{as:agt_m_bound}, \ref{as:agt_policy_1} and \ref{as:agt_policy_2} hold.
Let $\{\nu_t\}_{t=0}^{T^i-1}$ be the policy parameters generated by the Algorithm \ref{alg:agt} for $T^i$ iterations. Let $\p(R=t)=\nicefrac{1}{T^i}$, and the step size $b_t=\nicefrac{1}{\sqrt{T^i}},\, \forall t$. Then after $T^i$ iterations of the Algorithm \ref{alg:agt}, we have
\begin{align}
\label{eq:thm_agt}
&\E\big[ \left \lVert \nabla_\nu G^i(\nu_{R})  \right \rVert^2\big]\nonumber\\
&\leq \frac{G^i(\nu_{0})-G^{i^*}}{\sqrt{T^i}}+\frac{\big(M_h^i+M_d^{i^2}\big)M_m^{i^3}M_d^{i^2}}{2\sqrt{T^i}}.
\end{align}
In the above, $G^{i^*}=\argmin_{\nu\in \R^{n_s^i}} G(\nu)$, and the constants $M_d^{i}$, $M_h^i$ and $M_m^{i}$ are as defined in Assumptions \ref{as:agt_policy_1}, \ref{as:agt_policy_2} and \ref{as:agt_m_bound}, respectively.
\end{theorem}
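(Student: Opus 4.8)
The plan is to run the canonical descent-lemma argument for stochastic gradient descent on a smooth but possibly nonconvex objective, specialized to the policy-gradient update \eqref{eq:agt_nu_t}. Three ingredients are needed: (i) $L$-smoothness of $G^i$ with an explicit constant, (ii) unbiasedness of the single-sample estimator $\widehat{\nabla}_\nu G^i(\nu_t)$ in \eqref{eq:agt_nablahat_G}, and (iii) a uniform bound on its second moment. Given these, the remainder is telescoping and a step-size substitution.

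First I would establish smoothness. Writing $G^i(\nu)=\E_{D\sim\mathcal{D}^i}\big[\sum_{s\in S^i}\pi^i_\nu(s)\,m^i_{\tilde\theta_s}(\bar{V}_s)\big]$ and noting that $m^i_{\tilde\theta_s}(\bar{V}_s)$ depends on $s$ and $D$ but \emph{not} on $\nu$ (the fine-tuned model $\tilde\theta_s$ is built from the arbiter's parameters and the training data), I would differentiate the gradient expression of \cref{lm:agt_grad} once more. Using the log-derivative identity $\nabla_\nu\pi^i_\nu(s)=\pi^i_\nu(s)\nabla_\nu\log\pi^i_\nu(s)$ gives
\begin{align*}
\nabla^2_\nu G^i(\nu)&=\E_{D}\Big[{\textstyle\sum_{s\in S^i}} m^i_{\tilde\theta_s}(\bar{V}_s)\,\pi^i_\nu(s)\\
&\quad\times\big(\nabla_\nu\log\pi^i_\nu(s)\,\nabla_\nu\log\pi^i_\nu(s)^\top+\nabla^2_\nu\log\pi^i_\nu(s)\big)\Big].
\end{align*}
Taking norms and applying \cref{as:agt_m_bound} ($|m^i_{\tilde\theta_s}|\le M_m^i$), \cref{as:agt_policy_1} ($\|\nabla_\nu\log\pi^i_\nu(s)\|\le M_d^i$) and \cref{as:agt_policy_2} ($\|\nabla^2_\nu\log\pi^i_\nu(s)\|\le M_h^i$), together with $\sum_s\pi^i_\nu(s)=1$, yields the smoothness constant $L_G=M_m^i\big(M_d^{i^2}+M_h^i\big)$. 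The differentiation under the expectation and summation is legitimate because the integrands are uniformly bounded (dominated convergence), which is precisely where the three boundedness assumptions enter.

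Next I would record the stochastic-gradient facts. Unbiasedness, $\E[\widehat{\nabla}_\nu G^i(\nu_t)\mid\mathcal{F}_t]=\nabla_\nu G^i(\nu_t)$, is immediate from \cref{lm:agt_grad} since \eqref{eq:agt_nablahat_G} is the one-sample Monte Carlo version of \eqref{eq:agt_grad}. For the second moment, $\|\widehat{\nabla}_\nu G^i(\nu_t)\|=|m^i_{\tilde\theta_{s_t}}(\bar{V}_{s_t})|\,\|\nabla_\nu\log\pi^i_{\nu_t}(s_t)\|\le M_m^i M_d^i$ by \cref{as:agt_m_bound,as:agt_policy_1}, so $\E[\|\widehat{\nabla}_\nu G^i(\nu_t)\|^2]\le\sigma^2:=M_m^{i^2}M_d^{i^2}$.

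Finally I would assemble the bound. The $L_G$-smoothness descent inequality applied to \eqref{eq:agt_nu_t}, followed by taking the conditional expectation, gives $b_t\|\nabla_\nu G^i(\nu_t)\|^2\le G^i(\nu_t)-\E[G^i(\nu_{t+1})\mid\mathcal{F}_t]+\tfrac{L_G b_t^2}{2}\sigma^2$. Summing over $t=0,\dots,T^i-1$, telescoping, and using $G^i(\nu_{T^i})\ge G^{i^*}$, then substituting the constant step size $b_t=1/\sqrt{T^i}$ and the uniform index law $\p(R=t)=1/T^i$ (so that $\E[\|\nabla_\nu G^i(\nu_R)\|^2]=\tfrac{1}{T^i}\sum_t\E[\|\nabla_\nu G^i(\nu_t)\|^2]$), produces exactly \eqref{eq:thm_agt}, since $L_G\sigma^2=(M_h^i+M_d^{i^2})M_m^{i^3}M_d^{i^2}$. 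The main obstacle is the first step: correctly computing the Hessian of $G^i$ and controlling its norm, as this is the only place the second-order policy assumption enters and it fixes the smoothness constant that drives the whole bound. Everything afterward is the standard nonconvex SGD bookkeeping.
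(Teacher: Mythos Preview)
Your proposal is correct and follows essentially the same approach as the paper: the paper proves smoothness of $G^i$ via the same Hessian computation (Lemma~\ref{lm:agt_G_lip}), unbiasedness (Lemma~\ref{lm:agt_bias}), and the second-moment bound $M_m^{i^2}M_d^{i^2}$ (Lemma~\ref{lm:agt_var}), then runs the identical descent-lemma/telescoping argument with the constant step size. Your identification of the constants and the final product $L_G\sigma^2=(M_h^i+M_d^{i^2})M_m^{i^3}M_d^{i^2}$ matches the paper exactly.
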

\begin{remark}
\label{rmk:agt}
The result above shows that after $T^i$ iterations,  Algorithm \ref{alg:agt} returns an iterate that satisfies $\E\big[ \big \lVert \nabla_\nu G^i(\nu_{R})  \big \rVert^2\big]=O(\nicefrac{1}{\sqrt{T^i}})$.
Thus, $O(\nicefrac{1}{\epsilon^2})$ iterations of Algorithm \ref{alg:agt} are sufficient to find an $\epsilon$-stationary point.
\end{remark}
\begin{theorem}[\textbf{Arbiter}]
\label{thm:abt}
Let Assumptions \ref{as:grad_l_bound}, \ref{as:grad2_l_bound}, \ref{as:abt_m_bound_1}, \ref{as:abt_m_bound_2} and  \ref{as:cent_w*} hold. Let $\{\{\theta_{t,k}\}_{t=0}^{T-1}\}_{k=0}^{n_b-1}$ be the model parameters generated by the Algorithm \ref{alg:abt} for $T$ iterations. Let $\p(R_1=t)=\nicefrac{1}{T}$ and $\p(R_2=k)=\nicefrac{1}{n_b}$, and the step sizes $\alpha_t=\nicefrac{1}{T^{\nicefrac{3}{5}}};\;\beta_t=\nicefrac{1}{T^{\nicefrac{2}{5}}},\, \forall t$. Then after $T$ iterations of the Algorithm \ref{alg:abt}, we have
\begin{align*}
&\E\big[\left\lVert \nabla J(\theta_{R_1,R_2})\right\rVert^2 \big]
\leq \frac{2\left(J(\theta_{0}) - J^*\right)}{n_bT^{\nicefrac{2}{5}}}
\nonumber\\
&+\Big(\frac{A_1}{ T^{\nicefrac{4}{5}}}+\frac{A_2}{T^{\nicefrac{2}{5}}} +A_3\Big)\sum\nolimits_{i=1}^n\E\Big[\frac{1}{\lfloor (1-p)|D_{R_1}^i|\rfloor}\Big] \nonumber\\
& +\frac{A_4}{ T^{\nicefrac{2}{5}}} +\frac{A_5}{T^{\nicefrac{3}{5}}} + \frac{A_6}{T^{\nicefrac{4}{5}}} +\frac{A_7}{T^{\nicefrac{6}{5}}}+\frac{A_8}{T^{\nicefrac{7}{5}}}+\frac{A_9}{T^{\nicefrac{8}{5}}}+ \frac{A_{10}}{T^{2}},\\
&\textrm{where}\\
&A_1 = \frac{196 nL_l^2M_m^2}{\lambda_\omega},\; A_2 = \frac{84nL_l^2M_m^2}{\lambda_\omega^2}\\
&A_3 = \frac{32{L_l}^2}{\lambda_\omega^3 }\left(7nM_m^2+ \lambda_\omega^3e^2n_b\right),\\
&A_4 = \frac{7L_l^2n\left(7L_l^2 L_m^2n^2n_b^2+ 6\lambda_\omega^3\left(M_m^2n + \lambda_\omega^2M_{\omega^*}^2\right)\right)}{\lambda_\omega^5},\\
&A_5=\frac{7nL_l^2M_{\omega^*}^2}{\lambda_\omega^2},\;
A_6 = \frac{7n^2n_bL_l^4\left(\lambda_\omega^2 L_{l'}+2 L_m^2nn_b\right)}{2\lambda_\omega^4 }\\
&\qquad\qquad +  \frac{98L_l^2n \left(M_m^2 n+ 28\lambda_\omega^2M_{\omega^*}^2\right)}{\lambda_\omega },\\
&A_{7}= \frac{119n^3n_b^2L_l^4L_m^2}{\lambda_\omega^3},\;A_8= \frac{49 n^4n_b^2L_l^6L_m^2\left(1+n_b\right)}{4\lambda_\omega^4},\\
&A_9= \frac{21n^3n_b^2L_l^4L_m^2}{\lambda_\omega^2},\;A_{10}= \frac{49n^3n_b^2L_l^4L_m^2 }{\lambda_\omega}.
\end{align*}
In the above, $J^*=\argmin_{\theta\in\R^d} J(\theta)$, and the constants $L_l$, $L_{l'}$, $M_m$, $L_m$ and $M_{\omega^*}$ are as defined in Assumptions \ref{as:grad_l_bound}, \ref{as:grad2_l_bound}, \ref{as:abt_m_bound_1}, \ref{as:abt_m_bound_2} and \ref{as:cent_w*}, respectively. The parameters $p\in(0.5,1)$ from \eqref{eq:abt_tv}, $\lambda_\omega\in (0,0.5]$ from \eqref{eq:M}, and $n$ is the number of collaborating agents.
\end{theorem}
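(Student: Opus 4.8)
The plan is to treat Algorithm~\ref{alg:abt} as a two-timescale stochastic bilevel scheme in which $\omega$ is the fast (lower-level) variable tracking the slow (upper-level) variable $\theta$, and to prove convergence through a Lyapunov argument coupling a descent inequality on $J$ with a contraction inequality on the tracking error $\lVert\omega_t-\omega^*(\theta_t)\rVert^2$. First I would establish the structural ingredients. The regularizer $(\lambda_\omega/2)\lVert\omega\rVert^2$ in \eqref{eq:M} makes $M(\theta,\cdot,D)$ both $\lambda_\omega$-strongly convex and $\lambda_\omega$-smooth in $\omega$, with $\nabla_\omega^2 M=\lambda_\omega I$; hence by the implicit function theorem applied to the defining relation $\E[\nabla_\omega M(\theta,\omega^*(\theta),D)]=0$ of Assumption~\ref{as:cent_w*}, the map $\omega^*(\cdot)$ is Lipschitz with constant $L_{\omega^*}=O(\sqrt{n}L_m/\lambda_\omega)$, using $\lVert\nabla_\theta\E[m_\theta]\rVert\leq L_m$ from Assumption~\ref{as:abt_m_bound_2}. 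Combining this with the $\theta$-smoothness of $\nabla_\theta L$ (bounded by $L_{l'}$ via Assumption~\ref{as:grad2_l_bound}), the $L_l$-boundedness of $\nabla_\theta l_\theta$ (Assumption~\ref{as:grad_l_bound}), and the bounded softmax Jacobian of $h_\omega$, I would show $J$ is $L_J$-smooth with $L_J=O(L_{l'}+L_lL_m\sqrt{n}/\lambda_\omega)$. These are precisely the quantities that generate the $L_l^2L_m^2/\lambda_\omega^k$ factors appearing in $A_1,\dots,A_{10}$.

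Next I would write the one-step descent inequality. Applying $L_J$-smoothness across the $n_b$ inner $\theta$-updates and using that $\widehat{\nabla}_\theta L(\theta_{t,k},\omega_t,D_t)$ is conditionally unbiased for $\nabla_\theta L(\theta_{t,k},\omega_t)$, the correct descent direction at each inner iterate is $\nabla J(\theta_{t,k})=\E[\nabla_\theta L(\theta_{t,k},\omega^*(\theta_{t,k}))]$, so a bias term appears that is controlled by $\lVert\omega_t-\omega^*(\theta_{t,k})\rVert$. Since each inner step moves $\theta$ by $O(\alpha_tL_l)$, the iterates $\omega^*(\theta_{t,k})$ remain within $O(n_b\alpha_tL_{\omega^*})$ of $\omega^*(\theta_t)$, allowing me to replace every inner tracking error by $\lVert\omega_t-\omega^*(\theta_t)\rVert$ up to lower-order terms, and a Young split converts the bias into $\tfrac14\lVert\nabla J\rVert^2$ (absorbed on the left) plus a multiple of the tracking error. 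Telescoping $J$ over the effective per-iteration step $n_b\alpha_t$ yields the leading term $2(J(\theta_0)-J^*)/(Tn_b\alpha_t)=2(J(\theta_0)-J^*)/(n_bT^{2/5})$ after substituting $\alpha_t=T^{-3/5}$.

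I would then derive the tracking-error recursion. Because $\nabla_\omega^2M=\lambda_\omega I$ exactly, the update decomposes cleanly as $\omega_{t+1}-\omega^*(\theta_{t+1})=(1-\beta_t\lambda_\omega)\bigl(\omega_t-\omega^*(\theta_{t+1})\bigr)-\beta_t(\widehat m-\bar m)$, where $\widehat m-\bar m$ is mean-zero with variance $O(M_m^2\sum_i 1/|V_t^i|)$ from Assumption~\ref{as:abt_m_bound_1}, and $\omega_t-\omega^*(\theta_{t+1})$ further splits into the tracking error $\omega_t-\omega^*(\theta_t)$ plus the drift $\omega^*(\theta_t)-\omega^*(\theta_{t+1})$ of size $O(L_{\omega^*}n_b\alpha_tL_l)$. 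Squaring, taking expectations, and applying Young's inequality gives $\E[z_{t+1}]\leq(1-\lambda_\omega\beta_t/2)\E[z_t]+O(\alpha_t^2/\beta_t)+O\bigl(\beta_t^2\sum_i 1/|V_t^i|\bigr)$ with $z_t=\lVert\omega_t-\omega^*(\theta_t)\rVert^2$ and $z_0\leq M_{\omega^*}^2$ since $\omega_0=\mathbf 0$. Summing this geometric recursion, feeding the resulting $\sum_t\E[z_t]$ back into the descent bias, and choosing $\alpha_t=T^{-3/5}$, $\beta_t=T^{-2/5}$ so that $\alpha_t^2/\beta_t=T^{-4/5}$ balances against the $\lambda_\omega\beta_t$ contraction, produces the hierarchy of lower-order terms $A_i/T^{p}$ together with the irreducible $\sum_i\E[1/\lfloor(1-p)|D_{R_1}^i|\rfloor]$ factors (including the non-decaying $A_3$ contribution coming directly from the weight-gradient variance).

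The main obstacle will be the bookkeeping that couples the inner $\theta$-loop with the single outer $\omega$-update: because all $n_b$ inner iterates share the stale weight $\omega_t$, I must accumulate the tracking-error bias across the inner loop and feed the resulting $O(n_b\alpha_t)$ displacement of $\theta$ back into the drift $\lVert\omega^*(\theta_{t+1})-\omega^*(\theta_t)\rVert$ of the tracking recursion, while keeping the two timescales balanced so the contraction $(1-\lambda_\omega\beta_t/2)$ dominates the $\alpha_t^2/\beta_t$ perturbation. Obtaining the exact dependence of $A_1,\dots,A_{10}$ on $n$, $n_b$, $\lambda_\omega$, and the powers of $T$ requires tracking every factor of $L_{\omega^*}=\sqrt{n}L_m/\lambda_\omega$ and $L_J$ through both recursions, which is where essentially all the algebraic effort resides.
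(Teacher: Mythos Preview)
Your proposal is correct and follows the same two-timescale strategy as the paper: couple a smoothness-based descent inequality for $J$ with a contraction recursion for the tracking error $\lVert\omega_t-\omega^*(\theta_t)\rVert^2$, and balance $\alpha_t=T^{-3/5}$, $\beta_t=T^{-2/5}$ so that the drift $\lVert\omega^*(\theta_{t+1})-\omega^*(\theta_t)\rVert$ induced by the inner $\theta$-loop is dominated by the $\lambda_\omega\beta_t$ contraction of the $\omega$-update. The organization differs: rather than summing a geometric recursion for $\E[z_t]$ and substituting back, the paper builds two nested Lyapunov potentials $R_{t,k}=\E[J(\theta_{t,k})]+r_{t,k}\E[\lVert\omega^*(\theta_{t,k})-\omega_t\rVert^2]$ for the inner loop and $S_t=\E[J(\theta_t)]+s_t\E[\lVert\omega^*(\theta_t)-\omega_t\rVert^2]$ for the outer loop, chooses $r_{t,k}$ and $s_t$ by backward recursions so the potentials telescope exactly, and then bounds $s_0$ by the same geometric-series argument you describe. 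The paper also derives the Lipschitz property of $\omega^*$ directly from strong convexity rather than the implicit function theorem, and controls the $\theta$-gradient error via a vector Azuma--Hoeffding/Hayes bound on $\lVert\nabla_\theta L-\widehat\nabla_\theta L\rVert^2$ rather than conditional unbiasedness. On that last point your sketch needs a small repair: for $k\geq1$ the inner iterate $\theta_{t,k}$ already depends on $D_t$ through the preceding inner steps, so $\widehat\nabla_\theta L(\theta_{t,k},\omega_t,D_t)$ is not conditionally unbiased; the paper's route of bounding the squared difference directly avoids this issue and you should do the same. Otherwise the two arguments are equivalent, with the paper's explicit-coefficient bookkeeping making the constants $A_1,\dots,A_{10}$ mechanical to extract and your summed-recursion version being the more standard bilevel presentation.
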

\begin{remark}
\label{rmk:abt}
The result above shows that after $T$ iterations,  Algorithm \ref{alg:abt} returns an iterate that satisfies $\E[\lVert \nabla J(\theta_{R_1,R_2})\rVert^2]= O(\nicefrac{1}{{T}^{\nicefrac{2}{5}}})+O(\sum_{i=1}^n\E[\nicefrac{1}{\lfloor (1-p)|D_{R_1}^i|\rfloor}])$. Hence, the convergence rate is influenced by the expected data size from the agents.
\end{remark}

\section{Conclusion and Future work}
\label{sec:conc}
We have presented a novel framework for adaptive data-centric collaborative learning, explicitly designed to accommodate the self-interested nature of agents and the incremental nature of data collection. By incorporating a feedback loop between data-sharing policies and model updates, the framework aligns collaborative strategies with agent-specific goals through the optimization of stochastic policies. The arbiter’s bilevel optimization approach addresses distributional differences and ensures efficient utilization of shared data. Additionally, the use of a distortion-based mechanism to generate distinct models fosters meaningful collaboration without added training complexity. Rigorous non-asymptotic analyses guarantee convergence to approximate stationary points for both agent-side and arbiter-side optimization objectives, ensuring the robustness and practicality of the proposed approach in dynamic, real-world environments. This work advances the understanding of collaborative learning frameworks, highlighting the importance of incentivizing agents while respecting their strategic interests and decision-making flexibility.

As future work, incorporating differential privacy mechanisms into the framework could further enhance its practicality, particularly in settings where sensitive data is involved. By introducing rigorous privacy guarantees, the framework could ensure that agents' shared data cannot be exploited to infer sensitive information, thereby encouraging greater participation. Differential privacy could be integrated alongside the current structure to balance the trade-off between data utility and privacy preservation, enabling secure and equitable collaborative learning.

\bibliography{ref}
\bibliographystyle{icml2025}
\newpage
\appendix
\onecolumn
\section{Proofs}
\subsection{Agents}
\label{subsec:agents_pf}
\begin{proof}{[\textbf{Lemma \ref{lm:agt_grad}}]}
\begin{align*}
\nabla_{\nu}G^i(\nu)&=
\nabla_{\nu} \E_{D\sim \mathcal{D}^i,\, s \sim \pi^i_{\nu}}\left[m^i_{\tilde{\theta}_{s}}(\bar{V}_s)\right]\\
&=\nabla_{\nu} \sum\limits_{s \in S^i}\E_{D\sim \mathcal{D}^i}\left[m^i_{\tilde{\theta}_{s}}(\bar{V}_s)\right] \pi^i_{\nu}(s)\\
&\stackrel{(a)}{=}\sum\limits_{s \in S^i} \E_{D\sim \mathcal{D}^i}\left[m^i_{\tilde{\theta}_{s}}(\bar{V}_s)\right]\nabla_{\nu} \pi^i_{\nu}(s)\\
&\stackrel{(b)}{=}\sum\limits_{s \in S^i} \E_{D\sim \mathcal{D}^i}\left[m^i_{\tilde{\theta}_{s}}(\bar{V}_s)\right]\nabla_{\nu} \log\pi_{\nu}(s) \pi^i_{\nu}(s)\\
&=\E_{D\sim \mathcal{D}^i,\,s \sim \pi^i_{\nu}}\left[m^i_{\tilde{\theta}_{s}}(\bar{V}_s) \nabla_{\nu} \log\pi^i_{\nu}(s)\right].
\end{align*}
In the above step $(a)$ follows by the application of dominated convergence theorem to interchange the expectation and differentiation operators. The aforementioned application is allowed since $S^i$ is a finite set and $\left\lvert m^i_{\tilde{\theta}_{s}}(\cdot)\right\rvert $ and $\left\lVert\nabla_{\nu} \log\pi^i_{\nu}(\cdot)\right\rVert$ are bounded from Assumption \ref{as:agt_m_bound} and Assumptions \ref{as:agt_policy_1}, respectively. The step $(b)$ follows from the fact that $\nicefrac{\nabla_{\nu} \pi^i_{\nu}(s)}{\pi^i_{\nu}(s)}=\nabla_{\nu}\log\pi^i_{\nu}(s)$.
\end{proof}
We present a few lemmas below, which are subsequently used in the proof of Theorem \ref{thm:agt}.
\begin{lemma}
\label{lm:agt_G_lip}
\begin{align*}
\textrm{For a given } i,\, \forall \nu_1, \nu_2 \in \R^{n^i},\; \left\lVert \nabla_\nu G^i(\nu_1)- \nabla_\nu G^i(\nu_2) \right\rVert\leq M_m^i\left(M_h^i+M_d^{i^2}\right)\left\lVert\nu_1-\nu_2 \right\rVert.
\end{align*}
\end{lemma}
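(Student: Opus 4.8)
The plan is to establish a uniform bound on the Hessian $\nabla_\nu^2 G^i(\nu)$ and then invoke the standard fact that a twice-differentiable function whose Hessian has operator norm bounded by $L$ has an $L$-Lipschitz gradient (applying the fundamental theorem of calculus to $t\mapsto \nabla_\nu G^i(\nu_2 + t(\nu_1-\nu_2))$). In this way the target Lipschitz constant $M_m^i(M_h^i+M_d^{i^2})$ emerges precisely as the uniform bound on $\lVert \nabla_\nu^2 G^i(\nu)\rVert$.

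First I would start from the intermediate expression already derived in the proof of Lemma \ref{lm:agt_grad}, namely $\nabla_\nu G^i(\nu) = \sum_{s\in S^i} c_s\,\nabla_\nu \pi^i_\nu(s)$, where $c_s \triangleq \E_{D\sim\mathcal{D}^i}[m^i_{\tilde{\theta}_s}(\bar{V}_s)]$. The key observation is that $c_s$ does not depend on $\nu$: for a fixed strategy $s$ the fine-tuned model $\tilde{\theta}_s$ is determined by the received parameters and the fine-tuning set $\bar{T}_s$, so the only $\nu$-dependence sits in the policy factor. By Jensen's inequality and Assumption \ref{as:agt_m_bound}, $|c_s|\le M_m^i$. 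Since $S^i$ is finite, differentiating a second time term by term requires no argument beyond the dominated-convergence step already used for the first derivative.

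Next I would compute the Hessian using the first- and second-order log-derivative identities $\nabla_\nu\pi^i_\nu(s) = \pi^i_\nu(s)\nabla_\nu\log\pi^i_\nu(s)$ and $\nabla_\nu^2\pi^i_\nu(s) = \pi^i_\nu(s)\big(\nabla_\nu^2\log\pi^i_\nu(s) + \nabla_\nu\log\pi^i_\nu(s)\,\nabla_\nu\log\pi^i_\nu(s)^\top\big)$, which give
\begin{align*}
\nabla_\nu^2 G^i(\nu) = \sum_{s\in S^i} c_s\,\pi^i_\nu(s)\big(\nabla_\nu^2\log\pi^i_\nu(s) + \nabla_\nu\log\pi^i_\nu(s)\,\nabla_\nu\log\pi^i_\nu(s)^\top\big).
\end{align*}
Taking operator norms, applying the triangle inequality, and using $|c_s|\le M_m^i$ together with Assumptions \ref{as:agt_policy_1} and \ref{as:agt_policy_2} to bound $\lVert \nabla_\nu\log\pi^i_\nu(s)\rVert \le M_d^i$ and $\lVert\nabla_\nu^2\log\pi^i_\nu(s)\rVert \le M_h^i$ (noting that the rank-one term has operator norm $\lVert\nabla_\nu\log\pi^i_\nu(s)\rVert^2 \le M_d^{i^2}$), each summand is at most $M_m^i\,\pi^i_\nu(s)\,(M_h^i + M_d^{i^2})$. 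Summing over $s$ and using $\sum_{s\in S^i}\pi^i_\nu(s)=1$ yields $\lVert\nabla_\nu^2 G^i(\nu)\rVert \le M_m^i(M_h^i + M_d^{i^2})$, as required.

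The main obstacle is essentially bookkeeping: correctly deriving the second-order log-derivative identity for $\nabla_\nu^2\pi^i_\nu(s)$ and confirming that $c_s$ is genuinely $\nu$-independent, so that all $\nu$-dependence is isolated in the policy factors. Once the Hessian is written in terms of the first and second log-derivatives, the bound is a direct application of the two policy assumptions and the normalization of $\pi^i_\nu$, and the final passage from a uniformly bounded Hessian to the stated Lipschitz inequality is standard.
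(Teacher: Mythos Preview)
Your proposal is correct and essentially identical to the paper's proof: both compute $\nabla_\nu^2 G^i(\nu)$ via the identity $\nabla_\nu^2\pi^i_\nu(s)=\pi^i_\nu(s)\big(\nabla_\nu^2\log\pi^i_\nu(s)+\nabla_\nu\log\pi^i_\nu(s)\nabla_\nu\log\pi^i_\nu(s)^\top\big)$, bound its operator norm by $M_m^i(M_h^i+M_d^{i^2})$ using Assumptions \ref{as:agt_m_bound}, \ref{as:agt_policy_1}, \ref{as:agt_policy_2} and $\sum_s\pi^i_\nu(s)=1$, and then pass from a bounded Hessian to a Lipschitz gradient (the paper cites \citep[Lemma~1.2.2]{nesterov_book_1} for this last step, which is exactly the fundamental-theorem-of-calculus argument you describe).
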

\begin{proof}
\begin{align}
\label{eq:agt_hess1}
\nabla_{\nu}^2G^i(\nu)
&= \nabla_{\nu}^2 \E_{D\sim \mathcal{D}^i,\, s \sim \pi^i_{\nu}}\left[m^i_{\tilde{\theta}_{s}}(\bar{V}_s)\right]\\
&=\nabla_{\nu}^2 \sum\limits_{s \in S^i}\E_{D\sim \mathcal{D}^i}\left[m^i_{\tilde{\theta}_{s}}(\bar{V}_s)\right] \pi^i_{\nu}(s)\\
&\stackrel{(a)}{=}\sum\limits_{s \in S^i} \E_{D\sim \mathcal{D}^i}\left[m^i_{\tilde{\theta}_{s}}(\bar{V}_s)\right]\nabla_{\nu}^2 \pi^i_{\nu}(s)\nonumber\\
&\stackrel{(b)}{=}\sum\limits_{s \in S^i} \E_{D\sim \mathcal{D}^i}\left[m^i_{\tilde{\theta}_{s}}(\bar{V}_s)\right]\left(\nabla_{\nu}^2 \log\pi^i_{\nu}(s) + \nabla_{\nu}\log\pi^i_{\nu}(s)\nabla_{\nu}\log\pi^i_{\nu}(s)^\top\right) \pi^i_{\nu}(s)\nonumber\\
&=\E_{s \sim \pi^i_{\nu},\,D\sim \mathcal{D}^i}\left[m^i_{\tilde{\theta}_{s}}(\bar{V}_s) \left(\nabla_{\nu}^2 \log\pi^i_{\nu}(s) + \nabla_{\nu}\log\pi^i_{\nu}(s)\nabla_{\nu}\log\pi^i_{\nu}(s)^\top\right) \right].
\end{align}
In the above step $(a)$ follows by the application of dominated convergence theorem to interchange the expectation and differentiation operators. The aforementioned application is allowed since $S^i$ is a finite set and $\left\lvert m^i_{\tilde{\theta}_{s}}(\cdot)\right\rvert $, $\left\lVert\nabla_{\nu} \log\pi^i_{\nu}(\cdot)\right\rVert$ and $\left\lVert\nabla_{\nu}^2 \log\pi^i_{\nu}(\cdot)\right\rVert$ are bounded from Assumption \ref{as:agt_m_bound} and Assumptions \ref{as:agt_policy_1}--\ref{as:agt_policy_2}.
The step $(b)$ follows since
$\nicefrac{\nabla_{\nu}^2 \pi^i_{\nu}(s)}{\pi^i_{\nu}(s)} =  \nabla_{\nu}^2 \log\pi^i_{\nu}(s) + \nabla_{\nu}\log\pi^i_{\nu}(s)\nabla_{\nu}\log\pi^i_{\nu}(s)^\top$.

From \eqref{eq:agt_hess1}, we obtain
\begin{align}
\label{eq:agt_hess2}
\left\lVert\nabla_{\nu}^2G^i(\nu)\right\rVert &= \E_{s \sim \pi^i_{\nu^i},\,D\sim \mathcal{D}^i}\left[\left\lVert m^i_{\tilde{\theta}_{s}}(\bar{V}_s) \left(\nabla_{\nu}^2 \log\pi^i_{\nu}(s) + \nabla_{\nu}\log\pi^i_{\nu}(s)\nabla_{\nu}\log\pi^i_{\nu}(s)^\top\right)\right\rVert \right]\nonumber\\
&\stackrel{(a)}{\leq} \E_{s \sim \pi^i_{\nu},\,D\sim \mathcal{D}^i}\left[\left\lVert m^i_{\tilde{\theta}_{s}}(\bar{V}_s)\right\rVert \left(\left\lVert\nabla_{\nu}^2 \log\pi^i_{\nu}(s)\right\rVert + \left\lVert\nabla_{\nu}\log\pi^i_{\nu}(s)\right\rVert^2\right) \right]\nonumber\\
&\stackrel{(b)}{\leq} M_m^i\left(M_h^i+M_d^{i^2}\right).
\end{align}
Finally, the result follows from \eqref{eq:agt_hess2} and \citep[Lemma~1.2.2]{nesterov_book_1}.
\end{proof}
\begin{lemma}
\label{lm:agt_bias}
\begin{align*}
 \E_{s_t \sim \pi^i_{\nu_t},\,D_t\sim \mathcal{D}^i}\left[\widehat{\nabla}_\nu G^i(\nu_{t})\right]=\nabla_\nu G^i(\nu_{t}).
 \end{align*}
\end{lemma}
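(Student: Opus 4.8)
The plan is to recognize that the estimator $\widehat{\nabla}_\nu G^i(\nu_t)$ defined in \eqref{eq:agt_nablahat_G} is precisely a single-sample (Monte Carlo) realization of the expectation that defines $\nabla_\nu G^i(\nu_t)$ in Lemma \ref{lm:agt_grad}, so that the claim reduces to taking one expectation and invoking that lemma. First I would fix the iterate $\nu_t$, treating it as measurable with respect to the history generated up to the start of step $t$; conditioned on this history, the only fresh randomness entering $\widehat{\nabla}_\nu G^i(\nu_t)$ is the batch draw $D_t\sim\mathcal{D}^i$, the action $s_t\sim\pi^i_{\nu_t}$, and the induced split into $\bar T_{s_t},\bar V_{s_t}$ (hence the fine-tuned model $\tilde{\theta}_{s_t}$ and the score $m^i_{\tilde{\theta}_{s_t}}(\bar V_{s_t})$).

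Next I would take the conditional expectation of \eqref{eq:agt_nablahat_G} over $(D_t,s_t)$, writing
\begin{align*}
\E_{s_t\sim\pi^i_{\nu_t},\,D_t\sim\mathcal{D}^i}\big[\widehat{\nabla}_\nu G^i(\nu_t)\big]
= \E_{s_t\sim\pi^i_{\nu_t},\,D_t\sim\mathcal{D}^i}\big[m^i_{\tilde{\theta}_{s_t}}(\bar V_{s_t})\,\nabla_\nu\log\pi^i_{\nu_t}(s_t)\big].
\end{align*}
Renaming the dummy variables $(s_t,D_t)$ to $(s,D)$ and comparing the right-hand side with \eqref{eq:agt_grad} evaluated at $\nu=\nu_t$, the two expressions coincide term-for-term, so the right-hand side equals $\nabla_\nu G^i(\nu_t)$, which is exactly the asserted identity.

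The only points requiring care are bookkeeping rather than genuine difficulty, so I expect no substantial obstacle. I must check that the sampling law of $(D_t,s_t)$ and the (deterministic, given $D_t$ and $s_t$) construction of $\bar T_{s_t},\bar V_{s_t},\tilde{\theta}_{s_t}$ in Algorithm \ref{alg:agt} coincide with the law under which the expectation in \eqref{eq:agt_grad} is taken, namely $D$ drawn from $\mathcal{D}^i$ and $s$ from $\pi^i_{\nu}$ with $\nu=\nu_t$ held fixed, so that no extra averaging is hidden inside the definition of $\tilde{\theta}_{s}$. Finiteness and well-definedness of the expectation follow from the a.s.\ bound $|m^i_{\tilde{\theta}_s}(\bar V_s)|\le M_m^i$ of Assumption \ref{as:agt_m_bound} together with $\lVert\nabla_\nu\log\pi^i_{\nu}(s)\rVert\le M_d^i$ of Assumption \ref{as:agt_policy_1}, the same bounds that already justified Lemma \ref{lm:agt_grad}; hence the algebra itself is immediate and the essentially only thing to get right is stating the conditioning correctly.
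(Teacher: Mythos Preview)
Your proposal is correct and follows essentially the same approach as the paper: substitute the definition \eqref{eq:agt_nablahat_G} of the estimator, take the expectation over $(D_t,s_t)$, and identify the result with the expression for $\nabla_\nu G^i(\nu_t)$ from Lemma~\ref{lm:agt_grad}. The paper's proof is the same two-line computation, without the extra remarks on conditioning and integrability that you include as sanity checks.
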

\begin{proof}
From \eqref{eq:agt_nablahat_G}, we obtain
\begin{align*}
 \E_{s_t \sim \pi^i_{\nu_t},\,D_t\sim \mathcal{D}^i}\left[\widehat{\nabla}_{\nu}G^i(\nu_t)\right]
&= \E_{s_t \sim \pi^i_{\nu_t},\,D_t\sim \mathcal{D}^i}\left[ m^i_{\tilde{\theta}_{s_t}}(\bar{V}_{s_t}) \nabla_{\nu} \log\pi^i_{\nu_t}(s_t)\right]\nonumber\\
&=\nabla_\nu G^i(\nu_{t}).
\end{align*}
\end{proof}
\begin{lemma}
\label{lm:agt_var}
\begin{align*}
 \E_{s_t \sim \pi^i_{\nu_t},\,D_t\sim \mathcal{D}^i}\left[\left\lVert\widehat{\nabla}_\nu G^i(\nu_{t})\right\rVert^2\right]\leq M_m^{i^2} M_d^{i^2}.
 \end{align*}
\end{lemma}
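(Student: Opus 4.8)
The plan is to bound the second moment directly by exploiting the product structure of the estimator together with the two almost-sure boundedness assumptions; no variance decomposition or concentration argument is needed. From the definition in \eqref{eq:agt_nablahat_G}, the estimator $\widehat{\nabla}_\nu G^i(\nu_t)$ is a scalar $m^i_{\tilde{\theta}_{s_t}}(\bar{V}_{s_t})$ multiplied by the vector $\nabla_{\nu} \log\pi^i_{\nu_t}(s_t)$. Hence the squared norm factors cleanly as
\begin{align*}
\left\lVert\widehat{\nabla}_\nu G^i(\nu_{t})\right\rVert^2
= \big\lvert m^i_{\tilde{\theta}_{s_t}}(\bar{V}_{s_t})\big\rvert^2 \, \left\lVert \nabla_{\nu} \log\pi^i_{\nu_t}(s_t)\right\rVert^2,
\end{align*}
reducing the problem to bounding each factor separately.

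The next step is to apply the two assumptions pointwise, i.e., for every realization of $s_t$ and $D_t$. Assumption \ref{as:agt_m_bound} gives $\lvert m^i_{\tilde{\theta}_{s_t}}(\bar{V}_{s_t})\rvert \leq M_m^i$ almost surely, and Assumption \ref{as:agt_policy_1} gives $\lVert \nabla_{\nu} \log\pi^i_{\nu_t}(s_t)\rVert \leq M_d^i$ for all $s_t \in S^i$. Substituting these into the factored expression yields the deterministic bound $\lVert\widehat{\nabla}_\nu G^i(\nu_{t})\rVert^2 \leq M_m^{i^2} M_d^{i^2}$ that holds for every sample.

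Finally, since this bound is a constant that holds almost surely, taking the expectation over $s_t \sim \pi^i_{\nu_t}$ and $D_t\sim \mathcal{D}^i$ preserves it, giving the claimed inequality. There is no genuine obstacle here: the only point requiring minor care is ensuring that both assumed bounds hold pointwise (almost surely), so that the product bound transfers to the expectation without needing independence between the two factors. This is automatic because each factor is bounded by a deterministic constant, so their product is as well.
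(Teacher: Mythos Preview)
Your proof is correct and mirrors the paper's own argument: both expand the estimator from \eqref{eq:agt_nablahat_G}, factor the squared norm into the scalar part and the score vector, and then bound each factor pointwise via Assumptions~\ref{as:agt_m_bound} and~\ref{as:agt_policy_1} before taking expectations. The only cosmetic difference is that you write the factorization as an exact equality (valid since $m^i_{\tilde{\theta}_{s_t}}(\bar{V}_{s_t})$ is a scalar), whereas the paper phrases it via the inequality $\lVert ab\rVert \leq \lVert a\rVert\,\lVert b\rVert$; the substance is identical.
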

\begin{proof}
From \eqref{eq:agt_nablahat_G}, we obtain
\begin{align*}
 \E_{s_t \sim \pi^i_{\nu_t},\,D_t\sim \mathcal{D}^i}\left[\left\lVert\widehat{\nabla}_{\nu}G^i(\nu_t)\right\rVert^2\right]
&= \E_{s_t \sim \pi^i_{\nu_t},\,D_t\sim \mathcal{D}^i}\left[\left\lVert m^i_{\tilde{\theta}_{s_t}}(\bar{V}_{s_t}) \nabla_{\nu} \log\pi^i_{\nu_t}(s_t)\right\rVert^2\right]\nonumber\\
&\stackrel{(a)}{\leq} \E_{s_t \sim \pi^i_{\nu_t},\,D_t\sim \mathcal{D}^i}\left[\left\lVert m^i_{\tilde{\theta}_{s_t}}(\bar{V}_{s_t})\right\rVert^2 \left\lVert\nabla_{\nu} \log\pi^i_{\nu_t}(s_t)\right\rVert^2\right]\nonumber\\
&\stackrel{(b)}{\leq} M_m^{i^2} M_d^{i^2}.
\end{align*}
In the above, the step $(a)$ follows the fact that $\left\lVert ab\right\rVert\leq \left\lVert a \right\rVert\left\lVert b\right\rVert$. The step $(b)$ follows from Assumption \ref{as:agt_m_bound} and Assumption \ref{as:agt_policy_1}.
\end{proof}
\subsubsection*{Proof [\textbf{Theorem \ref{thm:agt}}]}
\label{subsubsec:pf_thm_agt}
Using the fundamental theorem of calculus, we obtain
    \begin{align}
         &G^i(\nu_{t+1}) -G^i(\nu_{t}) \nonumber\\
        &=\langle \nabla_\nu G^i(\nu_{t}),  \nu_{t+1}-\nu_{t}  \rangle
        + \int_0^1 \left\langle  \nabla_\nu G^i(\nu_{t}+\tau(\nu_{t+1}-\nu_{t}))-\nabla_\nu G^i(\nu_{t}), \nu_{t+1} - \nu_{t} \right\rangle d\tau\nonumber\\
        &\leq\langle \nabla_\nu G^i(\nu_{t}), \nu_{t+1} - \nu_{t} \rangle
        +\int_0^1 \left\lVert\nabla_\nu G^i(\nu_{t}+\tau(\nu_{t+1} - \nu_{t}))-\nabla_\nu G^i(\nu_{t})\right\rVert \left\lVert \nu_{t+1} - \nu_{t} \right\rVert d\tau\nonumber\\
        &\stackrel{(a)}{\leq} \left \langle \nabla_\nu G^i(\nu_{t}), \nu_{t+1} - \nu_{t} \right \rangle
        + M_m^i\left(M_h^i+M_d^{i^2}\right)\left\lVert \nu_{t+1} - \nu_{t} \right\rVert^2  \int_0^1 \tau d\tau \nonumber\\
        &= \left \langle \nabla_\nu G^i(\nu_{t}), \nu_{t+1} - \nu_{t} \right \rangle + \frac{M_m^i\left(M_h^i+M_d^{i^2}\right)}{2}\left\lVert \nu_{t+1} - \nu_{t} \right\rVert^2 \nonumber\\
        &= b_{t} \left \langle \nabla_\nu G^i(\nu_{t}),  -\widehat{\nabla}_\nu G^i(\nu_{t}) \right \rangle + \frac{M_m^i\left(M_h^i+M_d^{i^2}\right)b_{t}^2}{2}\left\lVert  \widehat{\nabla}_\nu G^i(\nu_{t}) \right\rVert^2 \nonumber\\
        &= b_{t} \left \langle \nabla_\nu G^i(\nu_{t}), \nabla_\nu G^i(\nu_{t}) -\widehat{\nabla}_\nu G^i(\nu_{t}) \right \rangle -b_{t} \left \langle \nabla_\nu G^i(\nu_{t}), \nabla_\nu G^i(\nu_{t})  \right \rangle+ \frac{M_m^i\left(M_h^i+M_d^{i^2}\right)b_{t}^2}{2}\left\lVert  \widehat{\nabla}_\nu G^i(\nu_{t}) \right\rVert^2 \nonumber\\
        &= b_{t} \left \langle \nabla_\nu G^i(\nu_{t}), \nabla_\nu G^i(\nu_{t}) -\widehat{\nabla}_\nu G^i(\nu_{t}) \right \rangle -b_{t} \left \lVert \nabla_\nu G^i(\nu_{t})  \right \rVert^2+ \frac{M_m^i\left(M_h^i+M_d^{i^2}\right)b_{t}^2}{2}\left\lVert  \widehat{\nabla}_\nu G^i(\nu_{t}) \right\rVert^2.
        \label{eq:agt_pf1}
    \end{align}
Taking expectations on both sides of \eqref{eq:agt_pf1}, we obtain
    \begin{align}
         &\E\left[G^i(\nu_{t+1})\right] - \E\left[G^i(\nu_{t})\right] \nonumber\\
         &\leq b_{t} \E\left[\left \langle \nabla_\nu G^i(\nu_{t}), \nabla_\nu G^i(\nu_{t}) -\widehat{\nabla}_\nu G^i(\nu_{t}) \right \rangle\right] -b_{t} \E\left[ \left \lVert \nabla_\nu G^i(\nu_{t})  \right \rVert^2\right]+ \frac{M_m^i\left(M_h^i+M_d^{i^2}\right)b_{t}^2}{2}\E\left[\left\lVert  \widehat{\nabla}_\nu G^i(\nu_{t}) \right\rVert^2 \right]\nonumber\\
         &\stackrel{(a)}{=} -b_{t} \E\left[ \left \lVert \nabla_\nu G^i(\nu_{t})  \right \rVert^2\right]+ \frac{M_m^i\left(M_h^i+M_d^{i^2}\right)b_{t}^2}{2}\E\left[\left\lVert  \widehat{\nabla}_\nu G^i(\nu_{t}) \right\rVert^2 \right]\nonumber\\
         &\stackrel{(b)}{\leq} -b_{t} \E\left[ \left \lVert \nabla_\nu G^i(\nu_{t})  \right \rVert^2\right]+ b_{t}^2\frac{\left(M_h^i+M_d^{i^2}\right)M_m^{i^3}M_d^{i^2}}{2}.
        \label{eq:agt_pf2}
    \end{align}
    In the above, the step $(a)$ follows from Lemma \ref{lm:agt_bias}, and the step $(b)$ follows from Lemma \ref{lm:agt_var}.

    Summing \eqref{eq:agt_pf2} from $t=0,\cdots,T^i-1$, we obtain
    \begin{align}
    \sum_{t=0}^{T^i-1}b_{t} \E\left[ \left \lVert \nabla_\nu G^i(\nu_{t})  \right \rVert^2\right] \leq \E\left[G^i(\nu_{0})\right]-\E\left[G^i(\nu_{T^i})\right]+\sum_{t=0}^{T^i-1} b_{t}^2\frac{\left(M_h^i+M_d^{i^2}\right)M_m^{i^3}M_d^{i^2}}{2}.
    \label{eq:agt_pf3}
    \end{align}
We have $\forall t,\, b_t=\nicefrac{1}{\sqrt{T^i}}$. Let $G^{i^*}=\argmin_{\nu\in \R^{n^i}} G(\nu)$. Then,
    \begin{align}
    \sum_{t=0}^{T^i-1}\frac{1}{\sqrt{T^i}} \E\left[ \left \lVert \nabla_\nu G^i(\nu_{t})  \right \rVert^2\right] \leq G^i(\nu_{0})-G^{i^*}+\frac{\left(M_h^i+M_d^{i^2}\right)M_m^{i^3}M_d^{i^2}}{2}.
    \label{eq:agt_pf4}
    \end{align}
    Since $\p(R=t)=\nicefrac{1}{T^i}$, we obtain
     \begin{align}
         \E\left[ \left \lVert \nabla_\nu G^i(\nu_{R})  \right \rVert^2\right]
         &=\frac{\sum_{t=0}^{T^i-1} \E\left[ \left \lVert \nabla_\nu G^i(\nu_{t})  \right \rVert^2\right] }{T^i}
=\frac{\sum_{t=0}^{T^i-1}\frac{1}{\sqrt{T^i}} \E\left[ \left \lVert \nabla_\nu G^i(\nu_{t})  \right \rVert^2\right] }{\sqrt{T^i}}\nonumber\\
&\leq \frac{G^i(\nu_{0})-G^{i^*}}{\sqrt{T^i}}+\frac{\left(M_h^i+M_d^{i^2}\right)M_m^{i^3}M_d^{i^2}}{2\sqrt{T^i}}.
    \label{eq:agt_pf5}
    \end{align}
\hfill{\qed}
\subsection{Arbiter}
\label{subsec:arbiter_pf}
\begin{proof}{[\textbf{Lemma \ref{lm:abt_grad}}]}


We derive expressions for $\nabla_{\theta} L(\theta,\omega,D)$ and $\nabla_{\omega}M(\theta,\omega,D)$ as given below:
\begin{align*}
\nabla_{\theta} L(\theta,\omega,D)
&=\nabla_\theta \big\langle\big(\E_{\scalebox{0.9}{$\substack{T^i\sim \mathcal{U}(D^i,\lceil p|D^i|\rceil)\\(x^i,y^i)\sim \mathcal{U}(T^i,1)}$}}[l_{\theta}(x^i,y^i)]\big)_{i=1}^{n},h_\omega   \big\rangle\nonumber\\
&=\big[\nabla_\theta \E_{\scalebox{0.9}{$\substack{T^i\sim \mathcal{U}(D^i,\lceil p|D^i|\rceil)\\(x^i,y^i)\sim \mathcal{U}(T^i,1)}$}}[l_{\theta}(x^i,y^i)]\big]_{i=1}^{n} h_\omega \nonumber\\
&\stackrel{(a)}{=}\big[ \E_{\scalebox{0.9}{$\substack{T^i\sim \mathcal{U}(D^i,\lceil p|D^i|\rceil)\\(x^i,y^i)\sim \mathcal{U}(T^i,1)}$}}[\nabla_\theta l_{\theta}(x^i,y^i)]\big]_{i=1}^{n} h_\omega.
\end{align*}
In the above, the step $(a)$ follows from Assumption \ref{as:grad_l_bound} by using the dominated convergence theorem.
\begin{align*}
\nabla_{\omega}M(\theta,\omega,D)
&=\nabla_{\omega}\big\langle\big(\E_{\scalebox{0.9}{$\substack{V^i\sim \mathcal{U}(D^i,\lfloor (1-p)|D^i|\rfloor)\\(x^i,y^i)\sim \mathcal{U}(V^i,1)}$}}[m_{\theta}(x^i,y^i)]\big)_{i=1}^{n} ,\omega \big\rangle +\nabla_{\omega}(\nicefrac{\lambda_\omega}{2})\left\lVert \omega \right\rVert^2\nonumber\\
&= \big(\E_{\scalebox{0.9}{$\substack{V^i\sim \mathcal{U}(D^i,\lfloor (1-p)|D^i|\rfloor)\\(x^i,y^i)\sim \mathcal{U}(V^i,1)}$}}[m_{\theta}(x^i,y^i)]\big)_{i=1}^{n} +\lambda_\omega \omega.
\end{align*}

\end{proof}
We present a few lemmas below, which are subsequently used in the proof of Theorem \ref{thm:abt}.
\begin{lemma}
\label{lm:grad_l_lip}
$ \forall \theta_1, \theta_2 \in \R^d,\; \left\lVert  \nabla_\theta l_{\theta_1}(x,y) - \nabla_\theta l_{\theta_2}(x,y)\right\rVert \leq L_{l'} \left\lVert  \theta_1 -\theta_2 \right\rVert,\; \forall (x,y) \in \mathcal{X}\times \mathcal{Y}$.
\end{lemma}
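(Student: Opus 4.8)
The plan is to establish the Lipschitz continuity of the gradient map $\theta \mapsto \nabla_\theta l_\theta(x,y)$ directly from the uniform Hessian bound in Assumption \ref{as:grad2_l_bound}, using the fundamental theorem of calculus along the segment joining $\theta_1$ and $\theta_2$. This mirrors the use of \citep[Lemma~1.2.2]{nesterov_book_1} invoked in the proof of Lemma \ref{lm:agt_G_lip}, except applied pointwise in $(x,y)$ rather than to a smooth expectation.

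First I would fix an arbitrary pair $(x,y)\in\mathcal{X}\times\mathcal{Y}$ and two points $\theta_1,\theta_2\in\R^d$, and parameterize the line segment between them by $\theta_\tau=\theta_2+\tau(\theta_1-\theta_2)$ for $\tau\in[0,1]$. Applying the fundamental theorem of calculus to the vector-valued function $\tau\mapsto\nabla_\theta l_{\theta_\tau}(x,y)$ gives
\[
\nabla_\theta l_{\theta_1}(x,y)-\nabla_\theta l_{\theta_2}(x,y)=\int_0^1\nabla_\theta^2 l_{\theta_\tau}(x,y)\,(\theta_1-\theta_2)\,d\tau.
\]
Next I would take norms on both sides, move the norm inside the integral via the triangle inequality for integrals, apply the submultiplicative bound $\lVert Av\rVert\leq\lVert A\rVert\,\lVert v\rVert$, and invoke the Hessian bound $\lVert\nabla_\theta^2 l_{\theta_\tau}(x,y)\rVert\leq L_{l'}$ from Assumption \ref{as:grad2_l_bound}, which holds uniformly over $\theta_\tau$. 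Since $\lVert\theta_1-\theta_2\rVert$ is constant in $\tau$ and $\int_0^1 d\tau=1$, this yields $\lVert\nabla_\theta l_{\theta_1}(x,y)-\nabla_\theta l_{\theta_2}(x,y)\rVert\leq L_{l'}\lVert\theta_1-\theta_2\rVert$, as claimed.

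This result is essentially routine, and I do not anticipate a genuine obstacle. The only point warranting a small amount of care is the justification of the integral form of the fundamental theorem of calculus for the gradient map, i.e.\ that $\nabla_\theta l_\theta(x,y)$ is absolutely continuous along the segment; this is guaranteed by the existence and boundedness of the Hessian postulated in Assumption \ref{as:grad2_l_bound}, so no additional hypothesis is needed.
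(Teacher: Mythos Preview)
Your proposal is correct and is essentially the same approach as the paper: the paper simply invokes the bounded-Hessian assumption together with \citep[Lemma~1.2.2]{nesterov_book_1}, and you have written out that lemma's content (fundamental theorem of calculus along the segment plus the Hessian bound) explicitly. Incidentally, the paper's citation of Assumption~\ref{as:grad_l_bound} appears to be a typo for Assumption~\ref{as:grad2_l_bound}, which you correctly identified as the relevant hypothesis.
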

\begin{proof}
The result follows from Assumption \ref{as:grad_l_bound} and \citep[Lemma~1.2.2]{nesterov_book_1}.
\end{proof}
\begin{lemma}
\label{lm:grad_J_lip}
$ \forall \theta_1, \theta_2 \in \R^d,\; \left\lVert  \nabla_\theta J(\theta_1) - \nabla_\theta J(\theta_2)\right\rVert \leq L_{l'} \left\lVert  \theta_1 -\theta_2 \right\rVert$.
\end{lemma}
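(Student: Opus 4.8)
The statement asserts that $\nabla_\theta J$ is $L_{l'}$-Lipschitz. Since $\nabla_\theta J(\theta)$ is an expectation of the pointwise gradient $\nabla_\theta l_\theta(x,y)$ over the data distribution $\mathscr{D}$ (equation \eqref{eq:grad_J}), the natural approach is to push the difference $\nabla_\theta J(\theta_1) - \nabla_\theta J(\theta_2)$ inside the expectation and reduce the claim to the pointwise Lipschitz bound already established in Lemma \ref{lm:grad_l_lip}. The key structural fact I would exploit is that $J$ is a \emph{linear} functional (an expectation) of the per-sample loss, so smoothness of the aggregate gradient inherits directly from uniform smoothness of the per-sample gradient.

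\textbf{Main steps.} First I would write, using \eqref{eq:grad_J} and linearity of expectation,
\begin{align*}
\nabla_\theta J(\theta_1) - \nabla_\theta J(\theta_2)
= \E_{(x,y)\sim \mathscr{D}}\!\left[\nabla_\theta l_{\theta_1}(x,y) - \nabla_\theta l_{\theta_2}(x,y)\right].
\end{align*}
Second, I would take norms and apply the fact that the norm of an expectation is bounded by the expectation of the norm (Jensen's inequality for the convex map $v \mapsto \lVert v \rVert$), giving
\begin{align*}
\left\lVert \nabla_\theta J(\theta_1) - \nabla_\theta J(\theta_2)\right\rVert
\leq \E_{(x,y)\sim \mathscr{D}}\!\left[\left\lVert \nabla_\theta l_{\theta_1}(x,y) - \nabla_\theta l_{\theta_2}(x,y)\right\rVert\right].
\end{align*}
Third, I would invoke Lemma \ref{lm:grad_l_lip}, which bounds the integrand pointwise by $L_{l'}\lVert \theta_1 - \theta_2 \rVert$ uniformly in $(x,y)$. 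Since this bound does not depend on $(x,y)$, the expectation of the constant is the constant itself, yielding $\left\lVert \nabla_\theta J(\theta_1) - \nabla_\theta J(\theta_2)\right\rVert \leq L_{l'}\lVert \theta_1 - \theta_2 \rVert$, which is the claim.

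\textbf{Anticipated obstacle.} This argument is essentially routine, so there is no deep obstacle; the only points requiring minor care are the integrability needed to justify pushing the difference inside the expectation and the bound $\lVert \E[\cdot] \rVert \leq \E[\lVert \cdot \rVert]$. Both are guaranteed because Assumption \ref{as:grad2_l_bound} (via Lemma \ref{lm:grad_l_lip}) makes the integrand uniformly bounded by an integrable (constant) majorant, so the dominated convergence theorem and Jensen's inequality apply without further hypotheses. I would simply cite the uniform Lipschitz bound from Lemma \ref{lm:grad_l_lip} as the workhorse and keep the derivation to these three lines.
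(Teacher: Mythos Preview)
Your proposal is correct and matches the paper's proof essentially line for line: both write $\nabla_\theta J(\theta_1)-\nabla_\theta J(\theta_2)$ as the expectation in \eqref{eq:grad_J}, pass the norm inside via $\lVert\E[\cdot]\rVert\le\E[\lVert\cdot\rVert]$, and then apply the pointwise Lipschitz bound of Lemma~\ref{lm:grad_l_lip}. No further comment is needed.
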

\begin{proof}
\begin{align*}
\left\lVert  \nabla_\theta J(\theta_1) - \nabla_\theta J(\theta_2)\right\rVert
&\stackrel{(a)}{=}\left\lVert \E_{(x,y)\sim \mathfrak{D}}\left [\nabla_\theta l_{\theta_1}(x,y) -\nabla_\theta l_{\theta_2}(x,y)\right]\right\rVert\\
&\stackrel{(b)}{\leq} \E_{(x,y)\sim \mathfrak{D}}\left [\left\lVert \nabla_\theta l_{\theta_1}(x,y) -\nabla_\theta l_{\theta_2}(x,y)\right\rVert\right]\\
&\stackrel{(c)}{\leq} L_{l'}\E_{(x,y)\sim \mathfrak{D}}\left [\left\lVert \theta_1 -\theta_2\right\rVert\right]\\
&= L_{l'}\left\lVert \theta_1 -\theta_2\right\rVert.
\end{align*}
The step $(a)$ follows from \eqref{eq:grad_J} and step $(b)$ follows since $\lVert \E[X]\rVert \leq \E[\lVert X\rVert]$. The step $(c)$ follows from Lemma \ref{lm:grad_l_lip}.
\end{proof}
\begin{lemma}
\label{lm:m_lip}
$ \forall \theta_1, \theta_2 \in \R^d,\; \left\lvert  m_{\theta_1}(x,y) -m_{\theta_2}(x,y)\right\rvert \leq L_m \left\lVert  \theta_1 -\theta_2 \right\rVert,\; \forall (x,y) \in \mathcal{X}\times \mathcal{Y}$.
\end{lemma}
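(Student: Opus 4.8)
The plan is to establish the Lipschitz continuity of $m_\theta$ in $\theta$ directly from the uniform gradient bound in Assumption \ref{as:abt_m_bound_2}, mirroring the argument already used for the analogous statements in Lemmas \ref{lm:grad_l_lip} and \ref{lm:agt_G_lip}. The most economical route is to invoke \citep[Lemma~1.2.2]{nesterov_book_1}: since, for every fixed $(x,y)$, the gradient $\nabla_\theta m_\theta(x,y)$ exists and satisfies $\lVert \nabla_\theta m_\theta(x,y)\rVert \leq L_m$ for all $\theta \in \R^d$, the scalar map $\theta \mapsto m_\theta(x,y)$ is $L_m$-Lipschitz, which is exactly the claim. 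This one-line citation is consistent with how the paper discharges its other Lipschitz lemmas.

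If a self-contained argument is preferred, I would fix $(x,y)$ and apply the fundamental theorem of calculus along the segment joining $\theta_2$ to $\theta_1$, writing
\begin{align*}
m_{\theta_1}(x,y) - m_{\theta_2}(x,y) = \int_0^1 \big\langle \nabla_\theta m_{\theta_2 + \tau(\theta_1-\theta_2)}(x,y),\, \theta_1 - \theta_2 \big\rangle \, d\tau.
\end{align*}
Taking absolute values, moving them inside the integral, and then applying Cauchy--Schwarz followed by Assumption \ref{as:abt_m_bound_2} gives
\begin{align*}
\left\lvert m_{\theta_1}(x,y) - m_{\theta_2}(x,y)\right\rvert \leq \int_0^1 \big\lVert \nabla_\theta m_{\theta_2+\tau(\theta_1-\theta_2)}(x,y)\big\rVert \, \lVert \theta_1 - \theta_2\rVert \, d\tau \leq L_m \lVert \theta_1 - \theta_2\rVert,
\end{align*}
which is the desired inequality.

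There is essentially no substantive obstacle here; the result is a textbook consequence of a bounded gradient. The only point requiring a moment of care is that the bound in Assumption \ref{as:abt_m_bound_2} must hold uniformly over the entire segment $\{\theta_2 + \tau(\theta_1-\theta_2) : \tau \in [0,1]\}$, which it does because the assumption is stated for all $\theta \in \R^d$. Differentiability of $m_\theta$ in $\theta$ is implicitly guaranteed by the very statement of Assumption \ref{as:abt_m_bound_2}, so the fundamental theorem of calculus applies without further justification, and the lemma follows.
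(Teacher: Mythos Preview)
Your proposal is correct, and the self-contained fundamental-theorem-of-calculus argument you give is exactly the proof the paper presents. One minor remark: the paper does \emph{not} invoke \citep[Lemma~1.2.2]{nesterov_book_1} for this lemma---that result concerns Lipschitz \emph{gradients} from bounded Hessians, not Lipschitz \emph{functions} from bounded gradients---so your second, explicit argument is the right match.
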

\begin{proof}
Using fundamental theorem of calculus, we obtain
\begin{align*}
\left\lvert m_{\theta_1}(x,y)-m_{\theta_2}(x,y) \right\rvert&=  \left\lvert\int_0^1 \left\langle \nabla m_{\theta_2+\tau(\theta_1 - \theta_2)}(x,y), \theta_1 - \theta_2 \right\rangle d\tau \right\rvert\nonumber\\
&\leq  \int_0^1 \left\lvert\left\langle \nabla m_{\theta_2+\tau(\theta_1 - \theta_2)}(x,y), \theta_1 - \theta_2 \right\rangle\right\rvert d\tau \nonumber\\
& \stackrel{(a)}{\leq}  \int_0^1 \left\lVert \nabla m_{\theta_2+\tau(\theta_1 - \theta_2)}(x,y) \right\rVert \left\lVert \theta_1 - \theta_2 \right\rVert d\tau \nonumber\\
& \stackrel{(b)}{\leq}  \int_0^1 L_m \left\lVert \theta_1 - \theta_2 \right\rVert d\tau \nonumber\\
&=  L_m \left\lVert \theta_1 - \theta_2 \right\rVert.
\end{align*}
The step $(a)$ follows from the fact that $\lvert \langle a,b\rangle\rvert \leq \lVert a \rVert\lVert b \rVert$.
The step $(b)$ follows from Assumption \ref{as:abt_m_bound_2}.
\end{proof}

\begin{lemma}
\label{lm:cent_softmax_lip}
\begin{align*}
\forall \omega_1, \omega_2 \in \R^n,\; \left \lVert h_{\omega_1}-h_{\omega_2}\right \rVert \leq \left \lVert \omega_1-\omega_2\right \rVert.
\end{align*}
\end{lemma}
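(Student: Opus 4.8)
The plan is to show that the softmax map $\omega \mapsto h_\omega$ defined in \eqref{eq:abt_h} is $1$-Lipschitz by bounding the spectral norm of its Jacobian uniformly by $1$, and then invoking the fundamental theorem of calculus exactly as in the proof of Lemma~\ref{lm:m_lip}.

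First I would compute the Jacobian of $h_\omega$. Differentiating the softmax expression gives the standard identity $\partial h_\omega(i)/\partial \omega(j) = h_\omega(i)\left(\delta_{ij} - h_\omega(j)\right)$, so that in matrix form the Jacobian is $J(\omega) = \mathrm{diag}(h_\omega) - h_\omega h_\omega^\top$. This matrix is symmetric and positive semidefinite: for any $v \in \R^n$ the quadratic form $v^\top J(\omega) v = \sum_{i} h_\omega(i) v_i^2 - \left(\sum_{i} h_\omega(i) v_i\right)^2$ is precisely the variance of the coordinates of $v$ under the distribution $h_\omega$, hence nonnegative.

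The key step is to upper bound the largest eigenvalue of $J(\omega)$. Dropping the nonnegative squared-mean term, I would use $v^\top J(\omega) v \leq \sum_{i} h_\omega(i) v_i^2$; since every entry satisfies $h_\omega(i) \in [0,1]$ with $\sum_{i} h_\omega(i) = 1$, a convex combination of the $v_i^2$ never exceeds $\sum_{i} v_i^2 = \lVert v \rVert^2$. Therefore $\lVert J(\omega) \rVert \leq 1$ for every $\omega \in \R^n$.

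Finally, writing $h_{\omega_1} - h_{\omega_2} = \int_0^1 J\big(\omega_2 + \tau(\omega_1 - \omega_2)\big)(\omega_1 - \omega_2)\, d\tau$ and taking norms yields $\lVert h_{\omega_1} - h_{\omega_2} \rVert \leq \int_0^1 \big\lVert J\big(\omega_2 + \tau(\omega_1 - \omega_2)\big) \big\rVert \, \lVert \omega_1 - \omega_2 \rVert \, d\tau \leq \lVert \omega_1 - \omega_2 \rVert$, which is the claimed bound. The only mild obstacle is the spectral-norm estimate, but it reduces cleanly to the observation that a convex combination of the $v_i^2$ cannot exceed $\lVert v \rVert^2$; the remaining steps are routine and mirror arguments already used elsewhere in the appendix.
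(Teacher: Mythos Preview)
Your argument is correct. The Jacobian identity $J(\omega)=\mathrm{diag}(h_\omega)-h_\omega h_\omega^\top$ is standard, the variance representation of the quadratic form cleanly gives positive semidefiniteness, and the bound $\sum_i h_\omega(i)v_i^2\le \lVert v\rVert^2$ (each weight lying in $[0,1]$) yields $\lVert J(\omega)\rVert\le 1$; the integral remainder step is then exactly the device used in Lemma~\ref{lm:m_lip}.

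The paper, however, does not prove this lemma at all: it simply cites \citep[Proposition~4]{gao18} and moves on. So your route is genuinely different in that it is self-contained and elementary, whereas the paper outsources the result to the literature. What your approach buys is independence from an external reference and a transparent argument that fits the style of the other appendix lemmas; what the citation buys is brevity. Either is acceptable, and in fact your derivation recovers (a special case of) what that cited proposition establishes.
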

\begin{proof}
The result follows from \citep[Proposition~4]{gao18}.
\end{proof}
\begin{lemma}
\label{lm:cent_M_smooth_sc}
$ \forall \theta \in \R^d,\; M(\theta,\omega,D)$ is a smooth strongly convex function in $\omega$ with strong convexity and smoothness parameter $\lambda_\omega$ a.s., i.e.,
\begin{align*}
&\left\langle\nabla_\omega M(\theta,\omega_1,D) -\nabla_\omega M(\theta,\omega_2,D),\omega_1-\omega_2\right\rangle \geq \lambda_\omega \left\lVert \omega_1-\omega_2\right\rVert^2;\\
&\left\lVert\nabla_\omega M(\theta,\omega_1,D) -\nabla_\omega M(\theta,\omega_2,D)\right\rVert \leq \lambda_\omega \left\lVert \omega_1-\omega_2\right\rVert.
\end{align*}
\end{lemma}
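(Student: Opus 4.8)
The plan is to read off the gradient $\nabla_\omega M(\theta,\omega,D)$ directly from Lemma~\ref{lm:abt_grad}, equation~\eqref{eq:grad_M}, and observe that the quadratic regularizer $(\nicefrac{\lambda_\omega}{2})\lVert\omega\rVert^2$ in the definition~\eqref{eq:M} is what supplies both the strong convexity and the smoothness, while the linear term contributes nothing once we differentiate twice (equivalently, once we difference two gradients).

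The crucial observation is that the first summand in~\eqref{eq:grad_M}, namely the vector $\big(\E[m_{\theta}(x^i,y^i)]\big)_{i=1}^{n}$, does \emph{not} depend on $\omega$. Hence for any $\omega_1,\omega_2\in\R^n$ it cancels in the difference of gradients, leaving exactly
\begin{align*}
\nabla_\omega M(\theta,\omega_1,D) -\nabla_\omega M(\theta,\omega_2,D) = \lambda_\omega(\omega_1-\omega_2).
\end{align*}
Both claimed relations then follow by elementary manipulation of this identity. For the strong-convexity bound I would take the inner product with $\omega_1-\omega_2$, obtaining $\langle\lambda_\omega(\omega_1-\omega_2),\omega_1-\omega_2\rangle = \lambda_\omega\lVert\omega_1-\omega_2\rVert^2$, which meets the stated inequality (indeed with equality). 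For the smoothness bound I would take norms, yielding $\lVert\lambda_\omega(\omega_1-\omega_2)\rVert = \lambda_\omega\lVert\omega_1-\omega_2\rVert$, again matching the statement with equality since $\lambda_\omega>0$.

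There is essentially no obstacle here: because $M$ is an affine-plus-quadratic function of $\omega$, its Hessian in $\omega$ is the constant matrix $\lambda_\omega I$, so both the strong-convexity modulus and the smoothness constant are exactly $\lambda_\omega$. The only point requiring any care is the almost-sure qualifier—the gradient expression in~\eqref{eq:grad_M} holds for each realization of $D\sim\mathcal{D}^\theta$, and since the cancellation and the subsequent bounds are purely algebraic in $\omega$, they hold pointwise for every such $D$, hence a.s. This sharp (equality-achieving) conclusion is precisely what makes $\lambda_\omega$ serviceable simultaneously as the lower curvature bound and the Lipschitz-gradient constant in the later arbiter-side analysis.
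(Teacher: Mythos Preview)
Your proposal is correct and takes essentially the same approach as the paper: both proofs exploit that $M(\theta,\cdot,D)$ is affine-plus-quadratic in $\omega$, so the second-order structure is exactly $\lambda_\omega I$. The only cosmetic difference is that the paper computes the Hessian $\nabla^2_\omega M(\theta,\omega,D)=\lambda_\omega\mathbf{I}_n$ directly from~\eqref{eq:M} and then cites \citep[Theorem~2.1.11 and Lemma~1.2.2]{nesterov_book_1}, whereas you difference the gradients from~\eqref{eq:grad_M} and verify the two displayed inequalities by hand; the content is identical.
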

\begin{proof}
From \eqref{eq:M}, we have
\begin{align}
\label{eq:nabla2_M}
\nabla^2_{\omega} M(\theta,\omega,D)
&=\nabla^2_{\omega}\big\langle\big(\E_{\scalebox{0.9}{$\substack{V^i\sim \mathcal{U}(D^i,\lfloor (1-p)|D^i|\rfloor)\\(x^i,y^i)\sim \mathcal{U}(V^i,1)}$}}[m_{\theta}(x^i,y^i)]\big)_{i=1}^{n} ,\omega \big\rangle +(\nicefrac{\lambda_\omega}{2}) \nabla^2_{\omega}\left\lVert \omega \right\rVert^2\nonumber\\
&= \lambda_\omega\mathbf{I}_n
\end{align}
From \eqref{eq:M}, we have $\lambda_\omega>0$. From \citep[Theorem~2.1.11 and Lemma~1.2.2]{nesterov_book_1}, we obtain $ \forall \theta \in \R^d,\; M(\theta,\omega,D)$ is a smooth strongly convex function in $\omega$ with strong convexity and smoothness parameter $\lambda_\omega$.
\end{proof}
\begin{lemma}
\label{lm:cent_M_lip}
\begin{align*}
\forall \theta_1,\theta_2 \in \R^d,\, \forall \omega_1,\omega_2 \in \R^n,\;\left\lVert\nabla_\omega M(\theta_1,\omega_1,D) -\nabla_\omega M(\theta_2,\omega_2,D)\right\rVert^2
\leq 2nL_m^2\left\lVert\theta_1 - \theta_2\right\rVert^2+ 2\lambda_\omega^2 \left\lVert\omega_1 - \omega_2\right\rVert^2 a.s.
\end{align*}
\end{lemma}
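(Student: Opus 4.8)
The plan is to exploit the closed form of the gradient established in Lemma~\ref{lm:abt_grad}, which writes $\nabla_\omega M(\theta,\omega,D)$ as the sum of two simple pieces: a vector whose $i$-th coordinate is the expected evaluation $\E[m_\theta(x^i,y^i)]$ over the random validation draw, plus the linear term $\lambda_\omega\omega$. Writing $\mu^i(\theta)$ for the $i$-th expected evaluation, the difference $\nabla_\omega M(\theta_1,\omega_1,D)-\nabla_\omega M(\theta_2,\omega_2,D)$ splits cleanly as $\big(\mu^i(\theta_1)-\mu^i(\theta_2)\big)_{i=1}^n + \lambda_\omega(\omega_1-\omega_2)$, since the evaluation coordinates depend only on $\theta$ while the regularizer depends only on $\omega$.

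First I would apply the elementary inequality $\lVert a+b\rVert^2\le 2\lVert a\rVert^2+2\lVert b\rVert^2$ to these two pieces. This immediately produces the $2\lambda_\omega^2\lVert\omega_1-\omega_2\rVert^2$ term verbatim and reduces the problem to bounding $\lVert(\mu^i(\theta_1)-\mu^i(\theta_2))_{i=1}^n\rVert^2 = \sum_{i=1}^n\lvert\mu^i(\theta_1)-\mu^i(\theta_2)\rvert^2$. Then, for each coordinate, I would pass the difference inside the expectation and use $\lvert\E[X]\rvert\le\E[\lvert X\rvert]$ followed by the Lipschitz bound from Lemma~\ref{lm:m_lip}, giving $\lvert\mu^i(\theta_1)-\mu^i(\theta_2)\rvert\le L_m\lVert\theta_1-\theta_2\rVert$ for every $i$. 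Squaring and summing the $n$ identical bounds yields $nL_m^2\lVert\theta_1-\theta_2\rVert^2$, and the factor of $2$ carried through from the splitting step gives exactly the claimed first term.

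There is no genuine obstacle here; the argument is a routine splitting-and-Lipschitz calculation, and the only point meriting care is the almost-sure qualifier. Since $m_\theta$ is evaluated on randomly drawn validation subsets $V^i$, the coordinates $\mu^i$ are random through the realization of $D$ and the uniform subsampling. However, the per-point estimate of Lemma~\ref{lm:m_lip} holds for \emph{every} $(x,y)\in\mathcal{X}\times\mathcal{Y}$ with the universal constant $L_m$, so it survives the inner expectation via Jensen's inequality and holds for any realization of the sampling; consequently the final deterministic bound holds almost surely, matching the statement.
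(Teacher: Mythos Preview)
Your proposal is correct and follows essentially the same route as the paper: invoke the explicit gradient from Lemma~\ref{lm:abt_grad}, split the difference with $\lVert a+b\rVert^2\le 2\lVert a\rVert^2+2\lVert b\rVert^2$, and control the evaluation-vector part coordinatewise via Jensen together with the Lipschitz bound of Lemma~\ref{lm:m_lip}. The only cosmetic difference is that the paper applies $\lVert\E[x]\rVert^2\le\E[\lVert x\rVert^2]$ to the whole vector before expanding the sum of squares, whereas you bound each $\lvert\mu^i(\theta_1)-\mu^i(\theta_2)\rvert$ first and then square; both yield the identical constant $2nL_m^2$.
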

\begin{proof}
From Lemma \ref{lm:abt_grad}, we obtain
\begin{align}
\label{eq:nabla_M-diff}
&\left\lVert\nabla_{\omega} M(\theta_1,\omega_1,D)-\nabla_{\omega} M(\theta_2,\omega_2,D)\right\rVert^2\nonumber\\
&=\big\lVert \big(\E_{\scalebox{0.9}{$\substack{V^i\sim \mathcal{U}(D^i,\lfloor (1-p)|D^i|\rfloor)\\(x^i,y^i)\sim \mathcal{U}(V^i,1)}$}}[m_{\theta_1}(x^i,y^i)-m_{\theta_2}(x^i,y^i)]\big)_{i=1}^{n} +\lambda_\omega \left(\omega_1-\omega_2 \right)\big\rVert^2\nonumber\\
&\stackrel{(a)}{\leq} 2 \E_{\scalebox{0.9}{$\substack{V^1\sim \mathcal{U}(D^1,\lfloor (1-p)|D^1|\rfloor)\\(x^1,y^1)\sim \mathcal{U}(V^1,1)}, \ldots, \substack{V^n\sim \mathcal{U}(D^n,\lfloor (1-p)|D^n|\rfloor)\\(x^n,y^n)\sim \mathcal{U}(V^n,1)}$}} \left[\left\lVert \big(m_{\theta_1}(x^i,y^i)-m_{\theta_2}(x^i,y^i)\big)_{i=1}^{n}\right\rVert^2\right]+ 2\lambda_\omega^2 \left\lVert\omega_1 - \omega_2\right\rVert^2\nonumber\\
&= 2 \E_{\scalebox{0.9}{$\substack{V^1\sim \mathcal{U}(D^1,\lfloor (1-p)|D^1|\rfloor)\\(x^1,y^1)\sim \mathcal{U}(V^1,1)}, \ldots, \substack{V^n\sim \mathcal{U}(D^n,\lfloor (1-p)|D^n|\rfloor)\\(x^n,y^n)\sim \mathcal{U}(V^n,1)}$}} \left[\sum_{i=1}^{n}\left\lvert m_{\theta_1}(x^i,y^i)-m_{\theta_2}(x^i,y^i)\right\rvert^2\right]+ 2\lambda_\omega^2 \left\lVert\omega_1 - \omega_2\right\rVert^2\nonumber\\
&\stackrel{(b)}{\leq} 2nL_m^2\left\lVert\theta_1 - \theta_2\right\rVert^2+ 2\lambda_\omega^2 \left\lVert\omega_1 - \omega_2\right\rVert^2.
\end{align}
In the above, the step $(a)$ follows from the fact that $\lVert x+y\rVert^2\leq 2\lVert x\rVert^2+2\lVert y\rVert^2$, and $\lVert \E\left[x\right]\rVert^2 \leq\E\left[ \lVert x\rVert^2 \right]$. The step $(b)$ follows from Lemma \ref{lm:m_lip}.
\end{proof}
\begin{lemma}
\label{lm:cent_M_lip-1}
\begin{align*}
\forall \theta \in \R^d,\, \forall \omega_1,\omega_2 \in \R^n,\;\left\lVert\nabla_\omega M(\theta,\omega_1,D) -\nabla_\omega M(\theta,\omega_2,D)\right\rVert^2 = \lambda_\omega^2 \left\lVert\omega_1 - \omega_2\right\rVert^2 a.s.
\end{align*}
\end{lemma}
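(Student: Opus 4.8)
The plan is to exploit the explicit form of $\nabla_\omega M$ established in Lemma~\ref{lm:abt_grad}, namely
\begin{align*}
\nabla_{\omega}M(\theta,\omega,D) = \big(\E_{\scalebox{0.9}{$\substack{V^i\sim \mathcal{U}(D^i,\lfloor (1-p)|D^i|\rfloor)\\(x^i,y^i)\sim \mathcal{U}(V^i,1)}$}}[m_{\theta}(x^i,y^i)]\big)_{i=1}^{n} + \lambda_\omega \omega,
\end{align*}
and observe that the dependence on $\omega$ is entirely confined to the affine term $\lambda_\omega\omega$: the first summand is a vector determined solely by $\theta$ and $D$, and does \emph{not} involve $\omega$ at all.

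First I would write out $\nabla_\omega M(\theta,\omega_1,D)$ and $\nabla_\omega M(\theta,\omega_2,D)$ for a common, fixed $\theta$. The crucial step is then to subtract: because $\theta$ is identical in both terms, the $m_\theta$-expectation vectors are identical and cancel exactly, leaving $\nabla_\omega M(\theta,\omega_1,D) - \nabla_\omega M(\theta,\omega_2,D) = \lambda_\omega(\omega_1 - \omega_2)$. Taking the squared Euclidean norm and pulling out the scalar $\lambda_\omega$ immediately yields $\lambda_\omega^2\lVert\omega_1 - \omega_2\rVert^2$, giving equality rather than merely the inequality.

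This is the sharpened analogue of Lemma~\ref{lm:cent_M_lip}: there the two arguments $\theta_1,\theta_2$ differed, so the $m_\theta$-terms did not cancel and one had to bound them via Lemma~\ref{lm:m_lip}, producing the extra $2nL_m^2\lVert\theta_1-\theta_2\rVert^2$ contribution and only an inequality. Here, holding $\theta$ fixed removes that contribution entirely. There is no real obstacle to this proof—it is a one-line cancellation—so I would not invoke any of the boundedness assumptions on $m_\theta$; the only input needed is the closed form of the gradient and the fact that $\lVert\lambda_\omega v\rVert^2 = \lambda_\omega^2\lVert v\rVert^2$. The result holds almost surely because it holds for every realization of the random sets $V^i$, the randomness being irrelevant once the $m_\theta$-terms cancel.
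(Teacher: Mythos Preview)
Your proposal is correct and matches the paper's proof essentially line for line: the paper also invokes Lemma~\ref{lm:abt_grad}, cancels the $m_\theta$-expectation term since $\theta$ is common to both gradients, and obtains $\lVert\lambda_\omega(\omega_1-\omega_2)\rVert^2 = \lambda_\omega^2\lVert\omega_1-\omega_2\rVert^2$ in one line.
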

\begin{proof}
From Lemma \ref{lm:abt_grad}, we obtain
\begin{align}
\label{eq:nabla_M-diff1}
\left\lVert\nabla_{\omega} M(\theta,\omega_1,D)-\nabla_{\omega} M(\theta,\omega_2,D)\right\rVert^2
=\left\lVert \lambda_\omega \left(\omega_1 - \omega_2\right)\right\rVert^2
= \lambda_\omega^2 \left\lVert\omega_1 - \omega_2\right\rVert^2.
\end{align}
\end{proof}
\begin{lemma}
\label{lm:cent_M_lip-2}
\begin{align*}
\forall \omega \in \R^n,\,  \forall \theta_1, \theta_2 \in \R^d,\;\left\lVert\nabla_\omega M(\theta_1,\omega,D) -\nabla_\omega M(\theta_2,\omega,D)\right\rVert^2 = nL_m^2\left\lVert\theta_1 - \theta_2\right\rVert^2 a.s.
\end{align*}
\end{lemma}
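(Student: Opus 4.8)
The plan is to start from the closed-form expression for $\nabla_\omega M$ supplied by Lemma \ref{lm:abt_grad} and to exploit the fact that $\omega$ is held fixed, mirroring the proof of Lemma \ref{lm:cent_M_lip-1} (where instead $\theta$ was fixed and the evaluation terms cancelled). Here the roles are reversed: since $\omega$ is common to both gradients, the regularizer $\lambda_\omega\omega$ contributes identically to $\nabla_\omega M(\theta_1,\omega,D)$ and $\nabla_\omega M(\theta_2,\omega,D)$ and therefore drops out of the difference. Concretely, subtracting \eqref{eq:grad_M} at $\theta_2$ from the same expression at $\theta_1$ leaves
\begin{align*}
\nabla_\omega M(\theta_1,\omega,D) - \nabla_\omega M(\theta_2,\omega,D)
= \big(\E[m_{\theta_1}(x^i,y^i) - m_{\theta_2}(x^i,y^i)]\big)_{i=1}^{n},
\end{align*}
where each expectation is taken over the agent-$i$ validation draw exactly as in \eqref{eq:grad_M}.

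First I would take the squared Euclidean norm of this $n$-vector and expand it coordinatewise as $\sum_{i=1}^n \lvert \E[m_{\theta_1}(x^i,y^i) - m_{\theta_2}(x^i,y^i)]\rvert^2$. Next, within each summand I would move the expectation inside the absolute value using Jensen's inequality, $\lvert\E[X]\rvert \leq \E[\lvert X\rvert]$, and then apply the Lipschitz bound for the arbiter's evaluation function from Lemma \ref{lm:m_lip}, namely $\lvert m_{\theta_1}(x,y) - m_{\theta_2}(x,y)\rvert \leq L_m\lVert\theta_1-\theta_2\rVert$ almost surely. Because this estimate is deterministic in $(x,y)$, taking the expectation leaves it unchanged, so that $\lvert\E[m_{\theta_1}(x^i,y^i)-m_{\theta_2}(x^i,y^i)]\rvert \leq L_m\lVert\theta_1-\theta_2\rVert$ for every coordinate $i$.

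Finally, summing the $n$ identical coordinate bounds yields the stated right-hand side $nL_m^2\lVert\theta_1-\theta_2\rVert^2$. The argument is almost identical in structure to Lemma \ref{lm:cent_M_lip}, but strictly simpler: because $\omega$ is fixed there is no $\omega$-difference term to separate off, so the factor of $2$ produced by the splitting $\lVert x+y\rVert^2\leq 2\lVert x\rVert^2+2\lVert y\rVert^2$ in that earlier proof does not appear here, which is precisely why the constant is $nL_m^2$ rather than $2nL_m^2$. I do not anticipate any genuine obstacle; the only point requiring mild care is the correct ordering of the two final steps, passing the expectation inside the absolute value before invoking the almost-sure Lipschitz estimate of Lemma \ref{lm:m_lip}.
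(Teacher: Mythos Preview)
Your proposal is correct and follows essentially the same route as the paper: use the closed form from Lemma~\ref{lm:abt_grad} so that the $\lambda_\omega\omega$ terms cancel, then bound the remaining coordinatewise evaluation differences via Lemma~\ref{lm:m_lip}. The only cosmetic difference is that the paper pulls the whole vector norm inside the expectation first (using $\lVert\E[X]\rVert^2\le\E[\lVert X\rVert^2]$) and then expands, whereas you expand coordinatewise first and apply $\lvert\E[X]\rvert\le\E[\lvert X\rvert]$ per coordinate; the two orderings are equivalent here.
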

\begin{proof}
From Lemma \ref{lm:abt_grad}, we obtain

\begin{align}
\label{eq:nabla_M-diff2}
&\left\lVert\nabla_{\omega} M(\theta_1,\omega,D)-\nabla_{\omega} M(\theta_2,\omega,D)\right\rVert^2\nonumber\\
&=\big\lVert \big(\E_{\scalebox{0.9}{$\substack{V^i\sim \mathcal{U}(D^i,\lfloor (1-p)|D^i|\rfloor)\\(x^i,y^i)\sim \mathcal{U}(V^i,1)}$}}[m_{\theta_1}(x^i,y^i)-m_{\theta_2}(x^i,y^i)]\big)_{i=1}^{n} \big\rVert^2\nonumber\\
&\stackrel{(a)}{\leq}\E_{\scalebox{0.9}{$\substack{V^1\sim \mathcal{U}(D^1,\lfloor (1-p)|D^1|\rfloor)\\(x^1,y^1)\sim \mathcal{U}(V^1,1)}, \ldots, \substack{V^n\sim \mathcal{U}(D^n,\lfloor (1-p)|D^n|\rfloor)\\(x^n,y^n)\sim \mathcal{U}(V^n,1)}$}} \left[\left\lVert \big(m_{\theta_1}(x^i,y^i)-m_{\theta_2}(x^i,y^i)\big)_{i=1}^{n}\right\rVert^2\right]\nonumber\\
&=\E_{\scalebox{0.9}{$\substack{V^1\sim \mathcal{U}(D^1,\lfloor (1-p)|D^1|\rfloor)\\(x^1,y^1)\sim \mathcal{U}(V^1,1)}, \ldots, \substack{V^n\sim \mathcal{U}(D^n,\lfloor (1-p)|D^n|\rfloor)\\(x^n,y^n)\sim \mathcal{U}(V^n,1)}$}} \left[\sum_{i=1}^{n}\left\lvert m_{\theta_1}(x^i,y^i)-m_{\theta_2}(x^i,y^i)\right\rvert^2\right]\nonumber\\
&\stackrel{(b)}{\leq} nL_m^2\left\lVert\theta_1 - \theta_2\right\rVert^2.
\end{align}
In the above, the step $(a)$ follows from the fact that  $\lVert \E\left[x\right]\rVert^2 \leq\E\left[ \lVert x\rVert^2 \right]$. The step $(b)$ follows from Lemma \ref{lm:m_lip}.
\end{proof}
\begin{lemma}
\label{lm:cent_M_bnd}
\begin{align*}
\E\left[\left\lVert \nabla_\omega M(\theta_{t},\omega^*(\theta_t),D_t) \right\rVert^2\right]\leq 2nM_m^2 + 2\lambda_\omega^2M_{\omega^*}^2.
\end{align*}
\end{lemma}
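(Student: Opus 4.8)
The plan is to work directly from the closed-form gradient supplied by Lemma~\ref{lm:abt_grad}, instantiated at $\omega=\omega^*(\theta_t)$ and $D=D_t$. This writes $\nabla_\omega M(\theta_t,\omega^*(\theta_t),D_t)$ as the sum of a ``score'' vector $\big(\E_{V^i,(x^i,y^i)}[m_{\theta_t}(x^i,y^i)]\big)_{i=1}^{n}$ and the regularizer gradient $\lambda_\omega\,\omega^*(\theta_t)$. First I would apply the elementary inequality $\lVert a+b\rVert^2\leq 2\lVert a\rVert^2+2\lVert b\rVert^2$ to decouple these two contributions, reducing the problem to bounding each piece separately.

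For the score term, its squared norm is $\sum_{i=1}^{n}\lvert\E[m_{\theta_t}(x^i,y^i)]\rvert^2$. I would bound each coordinate by pushing the absolute value inside the expectation, i.e.\ $\lvert\E[m_{\theta_t}(x^i,y^i)]\rvert\leq \E[\lvert m_{\theta_t}(x^i,y^i)\rvert]\leq M_m$ using Assumption~\ref{as:abt_m_bound_1}, which holds pointwise in $(x,y)$ and hence under the expectation as well. Summing over the $n$ coordinates yields a bound of $nM_m^2$ on the score term, contributing $2nM_m^2$ after the factor of two. For the weight term, Assumption~\ref{as:cent_w*} confines $\omega^*(\theta_t)$ to the ball $\mathcal{W}$ of radius $M_{\omega^*}$ regardless of the (random) value of $\theta_t$, so $\lambda_\omega^2\lVert\omega^*(\theta_t)\rVert^2\leq\lambda_\omega^2 M_{\omega^*}^2$, contributing $2\lambda_\omega^2 M_{\omega^*}^2$.

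Both bounds are deterministic constants holding almost surely in $D_t$, so taking the expectation over $D_t$ preserves them verbatim and gives the claimed $2nM_m^2+2\lambda_\omega^2 M_{\omega^*}^2$. There is no real obstacle here: the estimate is fundamentally an almost-sure pointwise bound that the expectation carries through trivially. The only points meriting a little care are passing the absolute value inside the expectation before invoking Assumption~\ref{as:abt_m_bound_1} (rather than bounding the averaged score after the fact) and correctly accounting for the dimension factor $n$ arising from summing $n$ uniformly bounded coordinates.
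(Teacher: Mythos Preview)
Your proposal is correct and follows essentially the same approach as the paper: write the gradient as the sum of the score vector and $\lambda_\omega\omega^*(\theta_t)$, apply $\lVert a+b\rVert^2\le 2\lVert a\rVert^2+2\lVert b\rVert^2$, then bound each coordinate of the score by $M_m$ via Assumption~\ref{as:abt_m_bound_1} and bound $\lVert\omega^*(\theta_t)\rVert$ by $M_{\omega^*}$ via Assumption~\ref{as:cent_w*}. The only cosmetic difference is that the paper's proof body actually works with the empirical estimator $\widehat{\nabla}_\omega M$ (an apparent typo relative to the lemma statement) and invokes $\lVert\sum a_i\rVert^2\le n\sum\lVert a_i\rVert^2$ to push the square inside the sample average, whereas you push the absolute value inside the expectation; both yield the same $nM_m^2$ bound on the score term.
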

\begin{proof}
\begin{align}
&\E\left[\left\lVert \widehat{\nabla}_\omega M(\theta_{t},\omega^*(\theta_t),D_t) \right\rVert^2\right]\nonumber\\
&= \E\left[\left\lVert \big(
\nicefrac{1}{|V_t^i|}\sum\nolimits_{(x^i,y^i)\in V_t^i}m_{\theta_{t}}(x^i,y^i)\big)_{i=1}^{n}+\lambda_\omega \omega^*(\theta_t)\right\rVert^2\right]\nonumber\\
&\stackrel{(a)}{\leq} 2\E\left[\left\lVert \big(
\nicefrac{1}{|V_t^i|}\sum\nolimits_{(x^i,y^i)\in V_t^i}m_{\theta_{t}}(x^i,y^i)\big)_{i=1}^{n}\right\rVert^2\right]+ 2\E\left[\left\lVert \lambda_\omega \omega^*(\theta_t)\right\rVert^2\right]\nonumber\\
&= 2\E\left[\sum_{i=1}^n \left\lvert
\nicefrac{1}{|V_t^i|}\sum\nolimits_{(x^i,y^i)\in V_t^i}m_{\theta_{t}}(x^i,y^i)\right\rvert^2\right]
+ 2\E\left[\left\lVert \lambda_\omega \omega^*(\theta_t)\right\rVert^2\right]\nonumber\\
&\stackrel{(b)}{\leq} 2\E\left[\sum_{i=1}^n \nicefrac{1}{|V_t^i|}\sum\nolimits_{(x^i,y^i)\in V_t^i}\left\lvert
m_{\theta_{t}}(x^i,y^i)\right\rvert^2\right]
+ 2\lambda_\omega^2\E\left[\left\lVert  \omega^*(\theta_t)\right\rVert^2\right]\nonumber\\
&\stackrel{(c)}{\leq} 2nM_m^2
+ 2\lambda_\omega^2M_{\omega^*}^2.
\end{align}
In the above, the step $(a)$ follows since $\E\left[\left\lVert a+b \right\rVert^2\right]\leq 2\E\left[\left\lVert a \right\rVert^2\right]+\E\left[\left\lVert b \right\rVert^2\right]$. The step $(b)$ follows since $\left\lVert \sum_{i=1}^n a_i \right\rVert^2\leq n \sum_{i=1}^n \left\lVert  a_i \right\rVert^2$. The step $(c)$ follows from Assumption \ref{as:abt_m_bound_1} and Assumption \ref{as:cent_w*}.
\end{proof}
\begin{lemma}
\label{lm:cent_M_mse}
\begin{align*}
\E\left[\left\lVert \nabla_\omega M(\theta_{t+1},\omega_t,D)-\widehat{\nabla}_\omega M(\theta_{t+1},\omega_t,D_t) \right\rVert^2\right] \leq 4M_m^2\sum_{i=1}^n\E\left[\frac{1}{\lfloor(1-p)|D_t^i|\rfloor}\right]
\end{align*}
\end{lemma}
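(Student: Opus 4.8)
The plan is to exploit the fact that the regularization terms in $\nabla_\omega M$ and $\widehat{\nabla}_\omega M$ coincide at $\omega_t$, so that the difference reduces to a per-agent comparison between a population mean and an empirical mean. Concretely, treating $D=D_t$ as the realized batch, from \eqref{eq:grad_M} and \eqref{eq:abt_nablahat_M_omega} the two $\lambda_\omega\omega_t$ terms cancel, leaving
\begin{align*}
\left\lVert \nabla_\omega M(\theta_{t+1},\omega_t,D)-\widehat{\nabla}_\omega M(\theta_{t+1},\omega_t,D_t) \right\rVert^2 = \sum_{i=1}^n \left( \mu^i - \hat\mu^i \right)^2,
\end{align*}
where $\mu^i \triangleq \E_{V^i\sim\mathcal{U}(D_t^i,\,\cdot),\,(x,y)\sim\mathcal{U}(V^i,1)}[m_{\theta_{t+1}}(x,y)]$ and $\hat\mu^i \triangleq \nicefrac{1}{|V_t^i|}\sum_{(x,y)\in V_t^i} m_{\theta_{t+1}}(x,y)$. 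First I would observe that drawing a uniform size-$k_i$ subset $V^i$ of $D_t^i$ and then a uniform point from $V^i$ is equivalent to drawing a single uniform point of $D_t^i$, since by symmetry each element of $D_t^i$ is selected with probability $\nicefrac{1}{|D_t^i|}$; hence $\mu^i = \nicefrac{1}{|D_t^i|}\sum_{z\in D_t^i} m_{\theta_{t+1}}(z)$ is the full-dataset average, and $\hat\mu^i$ is the sample mean of a uniform size-$k_i$ subset drawn without replacement. The task thus becomes bounding the mean-squared error of a simple-random-sampling estimator.

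The core step is the variance computation for sampling without replacement, carried out conditionally on the realized dataset $D_t^i$ (write $N_i=|D_t^i|$, $k_i=|V_t^i|$). Writing $c_z \triangleq m_{\theta_{t+1}}(z)-\mu^i$ so that $\sum_{z\in D_t^i} c_z = 0$, and introducing the membership indicators $\xi_z\triangleq\mathbf{1}[z\in V_t^i]$, one has $\hat\mu^i-\mu^i = \nicefrac{1}{k_i}\sum_z \xi_z c_z$. Expanding the square and using the simple-random-sampling moments $\E[\xi_z]=\nicefrac{k_i}{N_i}$ and $\E[\xi_z\xi_{z'}]=\nicefrac{k_i(k_i-1)}{N_i(N_i-1)}$ for $z\neq z'$, together with the identity $\sum_{z\neq z'} c_z c_{z'} = (\sum_z c_z)^2 - \sum_z c_z^2 = -\sum_z c_z^2$, collapses the cross terms and yields
\begin{align*}
\E\big[(\hat\mu^i-\mu^i)^2\big] = \frac{1}{k_i}\cdot\frac{N_i-k_i}{N_i-1}\cdot\Big(\frac{1}{N_i}\sum_{z\in D_t^i} c_z^2\Big).
\end{align*}

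To finish, I would bound the three factors. The population variance $\nicefrac{1}{N_i}\sum_z c_z^2$ is at most $4M_m^2$, since $|c_z| = |m_{\theta_{t+1}}(z)-\mu^i|\leq 2M_m$ by Assumption \ref{as:abt_m_bound_1}; the finite-population-correction factor $\nicefrac{(N_i-k_i)}{(N_i-1)}$ is at most $1$; and $k_i=|V_t^i| = |D_t^i|-\lfloor p|D_t^i|\rfloor \geq \lfloor(1-p)|D_t^i|\rfloor$, so $\nicefrac{1}{k_i}\leq\nicefrac{1}{\lfloor(1-p)|D_t^i|\rfloor}$. Summing the resulting bound $4M_m^2/k_i$ over $i$ and taking the outer expectation over the random draw of $D_t$ gives the claim. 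I expect the main obstacle to be the without-replacement variance calculation: the membership indicators $\xi_z$ are negatively correlated, so the cross terms cannot be dropped naively and must be handled through the exact second moments above (equivalently, one could invoke the standard fact that the without-replacement sampling variance is dominated by the corresponding with-replacement variance $\nicefrac{1}{k_i}(\nicefrac{1}{N_i}\sum_{z}c_z^2)$, which would bypass the explicit indicator algebra). The remaining bookkeeping—the cancellation of the regularizer, the symmetry identification of $\mu^i$, and the floor/ceiling slack between $|V_t^i|$ and $\lfloor(1-p)|D_t^i|\rfloor$—is routine.
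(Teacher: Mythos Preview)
Your proof is correct and reaches the same bound as the paper, but by a genuinely different route. The paper does not compute the sampling variance directly; instead, after the same cancellation of the $\lambda_\omega\omega_t$ terms, it argues that the running partial sums $\big\{\sum_{l\leq j}(\mu^i - m_{\theta_{t+1}}(z_l))\big\}_j$ over the elements of $V_t^i$ form a martingale with bounded increments, applies the Azuma--Hoeffding inequality to obtain a sub-Gaussian tail bound, and then integrates the tail to recover $\E[(\hat\mu^i-\mu^i)^2]\leq 4M_m^2/|V_t^i|$. Your argument bypasses concentration entirely by invoking the exact variance formula for simple random sampling without replacement (via the indicator moments $\E[\xi_z\xi_{z'}]$ and the centering identity $\sum_{z\neq z'}c_zc_{z'}=-\sum_z c_z^2$), which is more elementary and transparent for a pure second-moment bound. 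The paper's approach would transfer more readily if one needed high-probability statements rather than expectations; yours is tighter in spirit, since the finite-population correction $(N_i-k_i)/(N_i-1)\leq 1$ is made explicit, and in fact you could sharpen the $4M_m^2$ to $M_m^2$ by noting $\frac{1}{N_i}\sum_z c_z^2 = \mathrm{Var}_{D_t^i}(m_{\theta_{t+1}})\leq \E_{D_t^i}[m_{\theta_{t+1}}^2]\leq M_m^2$, though this is not needed to match the stated lemma.
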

\begin{proof}
\begin{align}
\label{eq:M_mse1}
&\E\left[\left\lVert \nabla_\omega M(\theta_{t+1},\omega_t,D)-\widehat{\nabla}_\omega M(\theta_{t+1},\omega_t,D_t) \right\rVert^2\right]\nonumber\\
&=\E\Big[\big\lVert \big(\E_{\scalebox{0.9}{$\substack{V^i\sim \mathcal{U}(D^i,\lfloor (1-p)|D^i|\rfloor)\\(x^i,y^i)\sim \mathcal{U}(V^i,1)}$}}[m_{\theta_{t+1}}(x^i,y^i)]\big)_{i=1}^{n}
- \big(
\nicefrac{1}{|V_t^i|}\sum\nolimits_{(x^i,y^i)\in V_t^i}m_{\theta_{t}}(x^i,y^i)\big)_{i=1}^{n}
\big\rVert^2\Big]\nonumber\\
&=\E\Big[\big\lVert \big(\E_{\scalebox{0.9}{$\substack{V^i\sim \mathcal{U}(D^i,\lfloor (1-p)|D^i|\rfloor)\\(x^i,y^i)\sim \mathcal{U}(V^i,1)}$}}[m_{\theta_{t+1}}(x^i,y^i)]-
\nicefrac{1}{|V_t^i|}\sum\nolimits_{(x^i,y^i)\in V_t^i}m_{\theta_{t}}(x^i,y^i)\big)_{i=1}^{n}
\big\rVert^2\Big]\nonumber\\
&=\E\Big[\sum_{i=1}^{n}\big\lvert \E_{\scalebox{0.9}{$\substack{V^i\sim \mathcal{U}(D^i,\lfloor (1-p)|D^i|\rfloor)\\(x^i,y^i)\sim \mathcal{U}(V^i,1)}$}}[m_{\theta_{t+1}}(x^i,y^i)]-
\nicefrac{1}{|V_t^i|}\sum\nolimits_{(x^i,y^i)\in V_t^i}m_{\theta_{t}}(x^i,y^i)
\big\rvert^2\Big]\nonumber\\
&=\E\Big[\sum_{i=1}^{n}\big\lvert \E_{V^i\sim \mathcal{U}(D^i,\lfloor (1-p)|D^i|\rfloor)}\big[ \E_{(x^i,y^i)\sim \mathcal{U}(V^i,1)}[m_{\theta_{t+1}}(x^i,y^i)]-
\nicefrac{1}{|V_t^i|}\sum\nolimits_{(x^i,y^i)\in V_t^i}m_{\theta_{t}}(x^i,y^i)\big]
\big\rvert^2\Big]\nonumber\\
&\stackrel{(a)}{\leq}\sum_{i=1}^{n}\E\Big[ \E_{V^i\sim \mathcal{U}(D^i,\lfloor (1-p)|D^i|\rfloor)}\big[ \big\lvert\E_{(x^i,y^i)\sim \mathcal{U}(V^i,1)}[m_{\theta_{t+1}}(x^i,y^i)]-
\nicefrac{1}{|V_t^i|}\sum\nolimits_{(x^i,y^i)\in V_t^i}m_{\theta_{t}}(x^i,y^i)
\big\rvert^2\big]\Big].
\end{align}
In the above, the step $(a)$ follows from the fact that $|\E[x]|^2\leq \E[|x|]^2$. From Assumption \ref{as:abt_m_bound_1}, we have $\left\lvert m_\theta(x,y) \right\rvert \leq M_m$. Let $V_{t_j}^i$ represents first $j$ elements of $V_t^i$, then\\ $\big\{j\big( \E_{(x^i,y^i)\sim \mathcal{U}(V^i,1)}\left[m_{\theta_{t+1}}(x^i,y^i)\right]- \frac{1}{|V_{t_j}^i|}\sum_{(x^i,y^i)\in V_{t_j}^i}m_{\theta_{t+1}}(x^i,y^i)\big)\big\}_{j=1}^{|V_t^i|}$ is a set of partial sums of bounded mean zero r.v.s and hence they are martingales. Using Azuma-Hoeffding inequality, we obtain
\begin{align}
\label{eq:M_mse2}
\p\Big(\big\lvert \E_{(x^i,y^i)\sim \mathcal{U}(V^i,1)}\left[m_{\theta_{t+1}}(x^i,y^i)\right]-
\frac{1}{|V_t^i|}\sum\nolimits_{(x^i,y^i)\in V_t^i}m_{\theta_{t+1}}(x^i,y^i)\big\rvert > \epsilon \Big) \leq 2 \exp{\frac{-|V_t^i|\epsilon^2}{2M_m^2}} ,
\end{align}
and
\begin{align}
\label{eq:M_mse3}
&\E_{V^i\sim \mathcal{U}(D^i,\lfloor (1-p)|D^i|\rfloor)}\Big[\big\lvert \E_{(x^i,y^i)\sim \mathcal{U}(V^i,1)}\left[m_{\theta_{t+1}}(x^i,y^i)\right]-
\frac{1}{|V_t^i|}\sum\nolimits_{(x^i,y^i)\in V_t^i}m_{\theta_{t+1}}(x^i,y^i)\big\rvert^2\Big]\nonumber\\
&=\int_0^{\infty} \p\Big(\big\lvert \E_{(x^i,y^i)\sim \mathcal{U}(V^i,1)}\left[m_{\theta_{t+1}}(x^i,y^i)\right]-
\frac{1}{|V_t^i|}\sum\nolimits_{(x^i,y^i)\in V_t^i}m_{\theta_{t+1}}(x^i,y^i)\big\rvert > \sqrt{\epsilon} \Big)\,d\epsilon\nonumber\\
&\stackrel{(a)}{\leq} \int_0^{\infty} 2 \exp{\frac{-|V_t^i|\epsilon}{2M_m^2}} \,d\epsilon \nonumber\\
&\stackrel{(b)}{=} \frac{4M_m^2}{|V_t^i|}.
\end{align}
In the above, the step $(a)$ follows from \eqref{eq:M_mse2}, and the step $(b)$ follows from the fact that $\int_0^\infty \exp(-a\epsilon)\,d\epsilon=\frac{1}{a},\;\forall a>0$.

Applying \eqref{eq:M_mse3} in \eqref{eq:M_mse1}, we obtain
\begin{align}
\label{eq:M_mse4}
\E\left[\left\lVert \nabla_\omega M(\theta_{t+1},\omega_t,D)-\widehat{\nabla}_\omega M(\theta_{t+1},\omega_t,D_t) \right\rVert^2\right]
&\leq \sum_{i=1}^n\E\left[\frac{4M_m^2}{|V_t^i|}\right]\nonumber\\
&= 4M_m^2\sum_{i=1}^n\E\left[\frac{1}{\lfloor(1-p)|D_t^i|\rfloor}\right].
\end{align}
\end{proof}
\begin{lemma}
\label{lm:cent_w_diff_bound}
$\forall \theta_1,\theta_2 \in \R^d,\, \forall \omega_1,\omega_2 \in \R^n$,
\begin{align*}
 \left\lVert\omega_1-\omega_2\right\rVert^2
\leq \frac{2}{\lambda_\omega}\left\langle\nabla_{\omega} M(\theta_1,\omega_1,D)-\nabla_{\omega} M(\theta_2,\omega_2,D), \omega_1-\omega_2 \right\rangle + \frac{nL_m^2}{\lambda_\omega^2}\left\lVert \theta_1-\theta_2\right\rVert^2 a.s.
\end{align*}
\end{lemma}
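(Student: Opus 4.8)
The plan is to recover the strong-convexity lower bound of Lemma~\ref{lm:cent_M_smooth_sc} from the mixed inner product by inserting an intermediate gradient, and then to absorb the resulting cross term using the $\theta$-Lipschitz estimate of Lemma~\ref{lm:cent_M_lip-2} together with Young's inequality. First I would decompose the inner product on the right-hand side by adding and subtracting $\nabla_{\omega} M(\theta_2,\omega_1,D)$:
\begin{align*}
&\langle \nabla_{\omega} M(\theta_1,\omega_1,D)-\nabla_{\omega} M(\theta_2,\omega_2,D), \omega_1-\omega_2 \rangle\\
&= \langle \nabla_{\omega} M(\theta_2,\omega_1,D)-\nabla_{\omega} M(\theta_2,\omega_2,D), \omega_1-\omega_2 \rangle + \langle \nabla_{\omega} M(\theta_1,\omega_1,D)-\nabla_{\omega} M(\theta_2,\omega_1,D), \omega_1-\omega_2 \rangle.
\end{align*}
The first inner product involves gradients evaluated at the common $\theta_2$, so the strong-convexity inequality of Lemma~\ref{lm:cent_M_smooth_sc} lower bounds it by $\lambda_\omega\lVert \omega_1-\omega_2\rVert^2$.

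Next I would handle the second inner product, which varies only $\theta$ at fixed $\omega_1$. Applying Cauchy--Schwarz and then Lemma~\ref{lm:cent_M_lip-2} bounds its magnitude by $\sqrt{n}L_m\lVert\theta_1-\theta_2\rVert\,\lVert\omega_1-\omega_2\rVert$. Substituting the strong-convexity lower bound for the first term and this Cauchy--Schwarz estimate for the second, then rearranging, yields
\begin{align*}
\lambda_\omega\lVert\omega_1-\omega_2\rVert^2 \leq \langle \nabla_{\omega} M(\theta_1,\omega_1,D)-\nabla_{\omega} M(\theta_2,\omega_2,D), \omega_1-\omega_2 \rangle + \sqrt{n}L_m\lVert\theta_1-\theta_2\rVert\,\lVert\omega_1-\omega_2\rVert.
\end{align*}

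Finally I would apply Young's inequality to the product term with weight exactly $\lambda_\omega$, namely $\sqrt{n}L_m\lVert\theta_1-\theta_2\rVert\,\lVert\omega_1-\omega_2\rVert \leq \tfrac{\lambda_\omega}{2}\lVert\omega_1-\omega_2\rVert^2 + \tfrac{nL_m^2}{2\lambda_\omega}\lVert\theta_1-\theta_2\rVert^2$, which absorbs half of the $\lVert\omega_1-\omega_2\rVert^2$ term onto the left side, leaving a coefficient of $\lambda_\omega/2$; dividing through by $\lambda_\omega/2$ produces the stated bound. The only delicate point is choosing the Young's inequality constant equal to $\lambda_\omega$ so that the leftover coefficient is precisely $\lambda_\omega/2$ and the $\theta$-dependent term lands with the claimed constant $nL_m^2/\lambda_\omega^2$; every other step is routine. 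Since all of the invoked lemmas hold almost surely, the final inequality holds a.s.
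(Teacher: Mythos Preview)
Your proof is correct. The route differs from the paper's in how the strong convexity is invoked: the paper uses the first-order (function-value) characterization of strong convexity at the two points $(\theta_1,\omega_1)$ and $(\theta_2,\omega_2)$, adds the resulting inequalities, and then exploits the explicit linear-in-$\omega$ structure of $M$ to rewrite the leftover $M$-value differences as the inner product $\langle\omega_2-\omega_1,(E[m_{\theta_1}-m_{\theta_2}])_{i=1}^n\rangle$ before applying Young's inequality. You instead work purely at the gradient level: you split the mixed inner product via the intermediate point $\nabla_\omega M(\theta_2,\omega_1,D)$, apply the monotonicity form of strong convexity from Lemma~\ref{lm:cent_M_smooth_sc} to the same-$\theta$ piece, and bound the same-$\omega$ piece through Cauchy--Schwarz and Lemma~\ref{lm:cent_M_lip-2}. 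Your argument is slightly more modular since it never unpacks the definition of $M$ and relies only on the already-established lemmas; the paper's argument is marginally more direct in that it avoids the intermediate gradient insertion. Both land on the same Young's-inequality step with the same constants.
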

\begin{proof}
From Lemma \ref{lm:cent_M_smooth_sc} and \citep[Definition~2.1.2]{nesterov_book_1}, we obtain
\begin{align}
M(\theta_1,\omega_2,D) \geq M(\theta_1,\omega_1,D)+ \left\langle\nabla_{\omega} M(\theta_1,\omega_1,D), \omega_2-\omega_1 \right\rangle+\frac{\lambda_\omega}{2}\left\lVert \omega_2-\omega_1\right\rVert^2; \label{eq:M_sc1}\\
M(\theta_2,\omega_1,D) \geq M(\theta_2,\omega_2,D)+ \left\langle\nabla_{\omega} M(\theta_2,\omega_2,D), \omega_1-\omega_2 \right\rangle+\frac{\lambda_\omega}{2}\left\lVert \omega_1-\omega_2\right\rVert^2. \label{eq:M_sc2}
\end{align}
Adding \eqref{eq:M_sc1} and \eqref{eq:M_sc2}, we obtain
\begin{align}
\label{eq:M_sc3}
& \left\langle\nabla_{\omega} M(\theta_1,\omega_1,D)-\nabla_{\omega} M(\theta_2,\omega_2,D), \omega_2-\omega_1 \right\rangle +\lambda_\omega\left\lVert \omega_1-\omega_2\right\rVert^2 \nonumber\\
 &\leq M(\theta_1,\omega_2,D)-M(\theta_2,\omega_2,D)-\left(M(\theta_1,\omega_1,D)-M(\theta_2,\omega_1,D)\right)\nonumber\\
 &=\big\langle\omega_2-\omega_1, \big(\E_{\scalebox{0.9}{$\substack{V^i\sim \mathcal{U}(D^i,\lfloor (1-p)|D^i|\rfloor)\\(x^i,y^i)\sim \mathcal{U}(V^i,1)}$}}[m_{\theta_1}(x^i,y^i)-m_{\theta_2}(x^i,y^i)]\big)_{i=1}^{n} \big\rangle \nonumber\\
 &\stackrel{(a)}{\leq}\frac{\lambda_\omega}{2}\left\lVert\omega_1-\omega_2\right\rVert^2
 +\frac{1}{2\lambda_\omega}\E_{\scalebox{0.9}{$\substack{V^1\sim \mathcal{U}(D^1,\lfloor (1-p)|D^1|\rfloor)\\(x^1,y^1)\sim \mathcal{U}(V^1,1)}, \ldots, \substack{V^n\sim \mathcal{U}(D^n,\lfloor (1-p)|D^n|\rfloor)\\(x^n,y^n)\sim \mathcal{U}(V^n,1)}$}}\left[\left\lVert (m_{\theta_1}(x^i,y^i)-m_{\theta_2}(x^i,y^i))_{i=1}^n\right\rVert^2\right]\nonumber\\
&=\frac{\lambda_\omega}{2}\left\lVert\omega_1-\omega_2\right\rVert^2
 +\frac{1}{2\lambda_\omega}\E_{\scalebox{0.9}{$\substack{V^1\sim \mathcal{U}(D^1,\lfloor (1-p)|D^1|\rfloor)\\(x^1,y^1)\sim \mathcal{U}(V^1,1)}, \ldots, \substack{V^n\sim \mathcal{U}(D^n,\lfloor (1-p)|D^n|\rfloor)\\(x^n,y^n)\sim \mathcal{U}(V^n,1)}$}}\left[\sum_{i=1}^{n}\left\lvert m_{\theta_1}(x^i,y^i)-m_{\theta_2}(x^i,y^i)\right\rvert^2\right]\nonumber\\
 &\stackrel{b}{\leq}\frac{\lambda_\omega}{2}\left\lVert\omega_1-\omega_2\right\rVert^2
 +\frac{L_m^2}{2\lambda_\omega}\E_{\scalebox{0.9}{$\substack{V^1\sim \mathcal{U}(D^1,\lfloor (1-p)|D^1|\rfloor)\\(x^1,y^1)\sim \mathcal{U}(V^1,1)}, \ldots, \substack{V^n\sim \mathcal{U}(D^n,\lfloor (1-p)|D^n|\rfloor)\\(x^n,y^n)\sim \mathcal{U}(V^n,1)}$}}\left[\sum_{i=1}^{n}\left\lVert \theta_1-\theta_2\right\rVert^2\right]\nonumber\\
 &=\frac{\lambda_\omega}{2}\left\lVert\omega_1-\omega_2\right\rVert^2
 +\frac{nL_m^2}{2\lambda_\omega}\left\lVert \theta_1-\theta_2\right\rVert^2.
\end{align}
In the above, the step $(a)$ follows since $<a,b>\leq \frac{\lVert a \rVert^2}{2\delta} + \frac{\delta\lVert b \rVert^2}{2},\;\delta>0$, and from the fact that $\left\lVert\E[x]\right\rVert^2\leq\E[\left\lVert x \right\rVert^2]$. The step $(b)$ follows from Lemma \ref{lm:m_lip}.

Re-arranging \eqref{eq:M_sc3}, we obtain
\begin{align}
\left\lVert\omega_1-\omega_2\right\rVert^2
\leq \frac{2}{\lambda_\omega}\left\langle\nabla_{\omega} M(\theta_1,\omega_1,D)-\nabla_{\omega} M(\theta_2,\omega_2,D), \omega_1-\omega_2 \right\rangle + \frac{nL_m^2}{\lambda_\omega^2}\left\lVert \theta_1-\theta_2\right\rVert^2.
\end{align}
\end{proof}
\begin{lemma}
\label{lm:cent_w_diff_bound-1}
\begin{align*}
\forall \theta \in \R^d,\, \forall \omega_1,\omega_2 \in \R^n,\; \left\lVert\omega_1-\omega_2\right\rVert^2
\leq \frac{1}{\lambda_\omega}\left\langle\nabla_{\omega} M(\theta,\omega_1,D)-\nabla_{\omega} M(\theta,\omega_2,D), \omega_1-\omega_2 \right\rangle \,a.s.
\end{align*}
\end{lemma}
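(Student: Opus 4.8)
The plan is to exploit the fact that, for fixed $\theta$, the map $\omega\mapsto M(\theta,\omega,D)$ is affine-plus-quadratic in $\omega$ with a constant Hessian, so the difference of gradients in $\omega$ is \emph{exactly linear} in $\omega_1-\omega_2$. Concretely, I would start from the gradient expression derived in Lemma~\ref{lm:abt_grad},
\begin{align*}
\nabla_{\omega} M(\theta,\omega,D)
=\big(\E_{\scalebox{0.9}{$\substack{V^i\sim \mathcal{U}(D^i,\lfloor (1-p)|D^i|\rfloor)\\(x^i,y^i)\sim \mathcal{U}(V^i,1)}$}}[m_{\theta}(x^i,y^i)]\big)_{i=1}^{n}+\lambda_\omega\omega,
\end{align*}
and observe that the first summand depends only on $\theta$ and $D$, not on $\omega$. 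Since $\theta$ is held fixed, this term is identical for $\omega_1$ and $\omega_2$ and cancels upon subtraction, leaving
\begin{align*}
\nabla_{\omega} M(\theta,\omega_1,D)-\nabla_{\omega} M(\theta,\omega_2,D)=\lambda_\omega(\omega_1-\omega_2).
\end{align*}

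Next I would take the inner product of both sides with $\omega_1-\omega_2$, yielding
\begin{align*}
\left\langle\nabla_{\omega} M(\theta,\omega_1,D)-\nabla_{\omega} M(\theta,\omega_2,D),\,\omega_1-\omega_2\right\rangle
=\lambda_\omega\left\lVert\omega_1-\omega_2\right\rVert^2.
\end{align*}
Because $\lambda_\omega>0$ by~\eqref{eq:M}, dividing through by $\lambda_\omega$ gives the stated bound (in fact with equality, which trivially implies the $\leq$ in the claim). An entirely equivalent route is to invoke the strong-convexity inequality already recorded in Lemma~\ref{lm:cent_M_smooth_sc}, namely $\langle\nabla_\omega M(\theta,\omega_1,D)-\nabla_\omega M(\theta,\omega_2,D),\omega_1-\omega_2\rangle\geq\lambda_\omega\lVert\omega_1-\omega_2\rVert^2$, and simply rearrange; since the Hessian is constant and equals $\lambda_\omega\mathbf{I}_n$, that inequality is actually tight here.

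There is essentially no substantive obstacle: the result is an immediate consequence of the affine form of $\nabla_\omega M$ in $\omega$ at fixed $\theta$. The only point requiring a moment's care is to note that the $m_\theta$-dependent vector is the \emph{same} in both gradient evaluations precisely because $\theta$ is frozen, so it cancels exactly rather than merely being controlled in norm; this is what distinguishes the present lemma (equality in $\omega$ alone) from Lemma~\ref{lm:cent_w_diff_bound}, where both $\theta$ and $\omega$ vary and one must pay an extra $\tfrac{nL_m^2}{\lambda_\omega^2}\lVert\theta_1-\theta_2\rVert^2$ term.
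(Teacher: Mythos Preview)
Your proposal is correct. Your primary route---canceling the $\theta$-dependent term in the explicit gradient formula from Lemma~\ref{lm:abt_grad} to obtain $\nabla_\omega M(\theta,\omega_1,D)-\nabla_\omega M(\theta,\omega_2,D)=\lambda_\omega(\omega_1-\omega_2)$ and hence equality---is a genuinely more direct argument than the paper's. The paper instead instantiates the first-order strong-convexity inequality of Lemma~\ref{lm:cent_M_smooth_sc} (via \citep[Definition~2.1.2]{nesterov_book_1}) at the pairs $(\omega_1,\omega_2)$ and $(\omega_2,\omega_1)$, adds the two inequalities so that the function values cancel, and rearranges; this is precisely the ``alternative route'' you mention. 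Your direct computation exploits the special structure (constant Hessian $\lambda_\omega\mathbf{I}_n$) to get equality in one line, whereas the paper's argument would work for any $\lambda_\omega$-strongly-convex function but only yields the inequality. Either is perfectly adequate here.
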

\begin{proof}
From Lemma \ref{lm:cent_M_smooth_sc} and \citep[Definition~2.1.2]{nesterov_book_1}, we obtain
\begin{align}
M(\theta,\omega_2,D) \geq M(\theta,\omega_1,D)+ \left\langle\nabla_{\omega} M(\theta,\omega_1,D), \omega_2-\omega_1 \right\rangle+\frac{\lambda_\omega}{2}\left\lVert \omega_2-\omega_1\right\rVert^2; \label{eq:M_sc1-1}\\
M(\theta,\omega_1,D) \geq M(\theta,\omega_2,D)+ \left\langle\nabla_{\omega} M(\theta,\omega_2,D), \omega_1-\omega_2 \right\rangle+\frac{\lambda_\omega}{2}\left\lVert \omega_1-\omega_2\right\rVert^2. \label{eq:M_sc2-1}
\end{align}
Adding \eqref{eq:M_sc1-1} and \eqref{eq:M_sc2-1}, we obtain
\begin{align}
\label{eq:M_sc3-1}
& \left\langle\nabla_{\omega} M(\theta,\omega_1,D)-\nabla_{\omega} M(\theta,\omega_2,D), \omega_2-\omega_1 \right\rangle +\lambda_\omega\left\lVert \omega_1-\omega_2\right\rVert^2 \leq 0.
\end{align}

Re-arranging \eqref{eq:M_sc3-1}, we obtain
\begin{align}
\left\lVert\omega_1-\omega_2\right\rVert^2
\leq \frac{1}{\lambda_\omega}\left\langle\nabla_{\omega} M(\theta,\omega_1,D)-\nabla_{\omega} M(\theta,\omega_2,D), \omega_1-\omega_2 \right\rangle.
\end{align}
\end{proof}
\begin{lemma}
\label{lm:cent_w_rec}
For any $\delta>0$,
\begin{align*}
\E\left[\left\lVert \omega^*(\theta_{t,k+1})- \omega_{t} \right\rVert^2 \right]
\leq \left(1+\delta\right)\E\left[\left\lVert\omega^*(\theta_{t,k})- \omega_{t} \right\rVert^2\right]+ \left(1+\frac{1}{\delta}\right)\frac{nL_m^2\alpha_{t}^2}{\lambda_\omega^2}\E\left[\left\lVert \widehat{\nabla}_\theta L(\theta_{t,k},\omega_t,D_t)\right\rVert^2\right].
\end{align*}
\end{lemma}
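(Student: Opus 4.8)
The plan is to bound the one-step change of the lower-level optimum $\omega^*(\theta_{t,k})$ relative to the fixed iterate $\omega_t$ by a telescoping-friendly recursion. First I would insert $\omega^*(\theta_{t,k})$ and write
\[
\omega^*(\theta_{t,k+1}) - \omega_t = \big(\omega^*(\theta_{t,k}) - \omega_t\big) + \big(\omega^*(\theta_{t,k+1}) - \omega^*(\theta_{t,k})\big),
\]
then apply the elementary inequality $\lVert a+b\rVert^2 \leq (1+\delta)\lVert a\rVert^2 + (1+\tfrac1\delta)\lVert b\rVert^2$, which follows from $2\langle a,b\rangle \leq \delta\lVert a\rVert^2 + \tfrac1\delta\lVert b\rVert^2$. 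This reproduces the $(1+\delta)\lVert\omega^*(\theta_{t,k})-\omega_t\rVert^2$ term exactly and isolates the displacement $\lVert\omega^*(\theta_{t,k+1}) - \omega^*(\theta_{t,k})\rVert^2$.

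The crux is a Lipschitz estimate $\lVert\omega^*(\theta_1)-\omega^*(\theta_2)\rVert^2 \leq \tfrac{nL_m^2}{\lambda_\omega^2}\lVert\theta_1-\theta_2\rVert^2$. I would derive it from Assumption \ref{as:cent_w*}: by Lemma \ref{lm:abt_grad}, $\nabla_\omega M(\theta,\omega,D)$ is affine in $\omega$ with $\omega$-independent part equal to the vector of validation averages of $m_\theta$, so the defining relation $\E_{D\sim\mathcal{D}^{\theta}}[\nabla_\omega M(\theta,\omega^*(\theta),D)]=0$ gives the closed form $\lambda_\omega\,\omega^*(\theta) = -\E_{D\sim\mathcal{D}^{\theta}}\big[(\E[m_\theta(x^i,y^i)])_{i=1}^n\big]$. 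Differencing at $\theta_1,\theta_2$, using Jensen ($\lVert\E[\cdot]\rVert^2\le\E[\lVert\cdot\rVert^2]$) and the coordinatewise $L_m$-Lipschitzness of $m_\theta$ (Lemma \ref{lm:m_lip}) yields precisely the constant $nL_m^2/\lambda_\omega^2$. The same constant can also be read off Lemma \ref{lm:cent_w_diff_bound} once its cross inner-product term is removed via the first-order optimality condition.

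Finally I would substitute the inner-loop step $\theta_{t,k+1}-\theta_{t,k} = -\alpha_t\,\widehat{\nabla}_\theta L(\theta_{t,k},\omega_t,D_t)$ from \eqref{eq:abt_theta_it} into the Lipschitz bound to get $\lVert\omega^*(\theta_{t,k+1})-\omega^*(\theta_{t,k})\rVert^2 \leq \tfrac{nL_m^2\alpha_t^2}{\lambda_\omega^2}\lVert\widehat{\nabla}_\theta L(\theta_{t,k},\omega_t,D_t)\rVert^2$, combine it with the Young split, and take expectations; since every preceding bound is pointwise in the batch randomness, the expectation step is purely cosmetic. The main obstacle is the Lipschitz estimate itself: because the data distribution $\mathcal{D}^{\theta}$ depends on $\theta$, the difference $\omega^*(\theta_1)-\omega^*(\theta_2)$ mixes changes in $m_\theta$ with changes in the sampling law, and it is exactly the existence and uniqueness structure imposed by Assumption \ref{as:cent_w*} (following \citep{doan23,zeng22}) that makes this displacement controllable by $\lVert\theta_1-\theta_2\rVert$ alone.
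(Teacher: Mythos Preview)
Your overall skeleton---insert $\omega^*(\theta_{t,k})$, apply Young's inequality $\lVert a+b\rVert^2\le(1+\delta)\lVert a\rVert^2+(1+\tfrac1\delta)\lVert b\rVert^2$, control $\lVert\omega^*(\theta_{t,k+1})-\omega^*(\theta_{t,k})\rVert^2$ by $\tfrac{nL_m^2}{\lambda_\omega^2}\lVert\theta_{t,k+1}-\theta_{t,k}\rVert^2$, and substitute the inner update \eqref{eq:abt_theta_it}---is exactly the paper's scheme. The difference is in how the displacement of $\omega^*$ is handled. The paper does \emph{not} establish a freestanding deterministic Lipschitz bound for $\theta\mapsto\omega^*(\theta)$; instead it applies Lemma~\ref{lm:cent_w_diff_bound} pointwise in $D$ at $(\theta_{t,k+1},\omega^*(\theta_{t,k+1}))$ and $(\theta_{t,k},\omega^*(\theta_{t,k}))$, keeps the cross term $\tfrac{2}{\lambda_\omega}\langle\nabla_\omega M(\theta_{t,k+1},\omega^*(\theta_{t,k+1}),D)-\nabla_\omega M(\theta_{t,k},\omega^*(\theta_{t,k}),D),\,\omega^*(\theta_{t,k+1})-\omega^*(\theta_{t,k})\rangle$, and only eliminates it \emph{after} taking expectation via Assumption~\ref{as:cent_w*}. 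So in the paper's argument the expectation is not cosmetic---it is precisely the device that removes the inner product. This is the second route you mention (``read off Lemma~\ref{lm:cent_w_diff_bound} once its cross term is removed via first-order optimality''), and it is the route you should commit to. Your primary closed-form route has the gap you yourself flag: because $\mathcal{D}^\theta$ carries $\theta$, differencing the identity $\lambda_\omega\omega^*(\theta)=-\E_{D\sim\mathcal{D}^\theta}[\cdot]$ produces a distributional-shift term that Lemma~\ref{lm:m_lip} alone does not bound, and Assumption~\ref{as:cent_w*} does not by itself supply Lipschitzness of $\omega^*$. The paper sidesteps this rather than resolving it, by working pointwise in $D$ through Lemma~\ref{lm:cent_w_diff_bound} and deferring the appeal to optimality until the expectation stage.
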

\begin{proof}
\begin{align}
\label{eq:w_rec-1}
&\left\lVert \omega^*(\theta_{t,k+1})- \omega_{t} \right\rVert^2 \nonumber\\
&=\left\lVert \omega^*(\theta_{t,k+1})-\omega^*(\theta_{t,k})+\omega^*(\theta_{t,k})- \omega_{t} \right\rVert^2 \nonumber\\
&=\left\lVert\omega^*(\theta_{t,k})- \omega_{t} \right\rVert^2+\left\lVert \omega^*(\theta_{t,k+1})-\omega^*(\theta_{t,k})\right\rVert^2+2\left\langle \omega^*(\theta_{t,k})- \omega_{t}, \omega^*(\theta_{t,k+1})-\omega^*(\theta_{t,k})\right\rangle \nonumber\\
&\stackrel{(a)}{\leq}\left(1+\delta\right)\left\lVert\omega^*(\theta_{t,k})- \omega_{t} \right\rVert^2+ \left(1+\frac{1}{\delta}\right)\left\lVert \omega^*(\theta_{t,k+1})-\omega^*(\theta_{t,k})\right\rVert^2 \nonumber\\
&\stackrel{(b)}{\leq}\left(1+\delta\right)\left\lVert\omega^*(\theta_{t,k})- \omega_{t} \right\rVert^2\nonumber\\
&\quad+ \left(1+\frac{1}{\delta}\right)\frac{2}{\lambda_\omega}\left\langle \nabla_{\omega} M(\theta_{t,k+1},\omega^*(\theta_{t,k+1}),D)-\nabla_{\omega} M(\theta_{t,k},\omega^*(\theta_{t,k}),D),\omega^*(\theta_{t,k+1})-\omega^*(\theta_{t,k}) \right\rangle \nonumber\\
&\quad+ \left(1+\frac{1}{\delta}\right)\frac{nL_m^2}{\lambda_\omega^2}\left\lVert \theta_{t,k+1}-\theta_{t,k}\right\rVert^2.
\end{align}
In the above, the step $(a)$ follows since $<a,b>\leq \frac{\lVert a \rVert^2}{2\delta} + \frac{\delta\lVert b \rVert^2}{2},\;\delta>0$., and the step $(b)$ follows from Lemma \ref{lm:cent_w_diff_bound}.

Taking expectation on both sides of \eqref{eq:w_rec-1}, we obtain
\begin{align}
\label{eq:w_rec-2}
&\E\left[\left\lVert \omega^*(\theta_{t,k+1})- \omega_{t} \right\rVert^2 \right]\nonumber\\
&\leq\left(1+\delta\right)\E\left[\left\lVert\omega^*(\theta_{t,k})- \omega_{t} \right\rVert^2\right]\nonumber\\
&\quad+ \left(1+\frac{1}{\delta}\right)\frac{2}{\lambda_\omega}\E\left[\left\langle\nabla_{\omega} M(\theta_{t,k+1},\omega^*(\theta_{t,k+1}),D)-\nabla_{\omega} M(\theta_{t,k},\omega^*(\theta_{t,k}),D), \omega^*(\theta_{t,k+1})-\omega^*(\theta_{t,k}) \right\rangle \right]\nonumber\\
&\quad+ \left(1+\frac{1}{\delta}\right)\frac{nL_m^2}{\lambda_\omega^2}\E\left[\left\lVert \theta_{t,k+1}-\theta_{t,k}\right\rVert^2\right]\nonumber\\
&\stackrel{(a)}{=}\left(1+\delta\right)\E\left[\left\lVert\omega^*(\theta_{t,k})- \omega_{t} \right\rVert^2\right]+ \left(1+\frac{1}{\delta}\right)\frac{nL_m^2}{\lambda_\omega^2}\E\left[\left\lVert \theta_{t,k+1}-\theta_{t,k}\right\rVert^2\right]\nonumber\\
&\stackrel{(b)}{=}\left(1+\delta\right)\E\left[\left\lVert\omega^*(\theta_{t,k})- \omega_{t} \right\rVert^2\right]+ \left(1+\frac{1}{\delta}\right)\frac{nL_m^2\alpha_{t}^2}{\lambda_\omega^2}\E\left[\left\lVert \widehat{\nabla}_\theta L(\theta_{t,k},\omega_t,D_t)\right\rVert^2\right].
\end{align}
In the above, the step $(a)$ follows since $\E_{D\sim \mathcal{D^{\theta}}}\left[\nabla_{\omega}M(\theta,\omega^*(\theta),D)\right]=0$ from Assumption \ref{as:cent_w*}, and the step $(b)$ follows from \eqref{eq:abt_theta_it}.
\end{proof}
\begin{lemma}
\begin{align*}
\E\left[\left\lVert \omega_{t+1}-\omega^*(\theta_{t+1})  \right\rVert^2\right]
&\leq \left(1- \beta_t\lambda_\omega +\beta_t^2\lambda_\omega^2\right)\E\left[\left\lVert \omega_{t}-\omega^*(\theta_{t})\right\rVert^2 \right]
+\left(3\beta_t^2+7\beta_t^3\lambda_\omega\right) \E\left[\left\lVert\nabla_\omega M(\theta_{t},\omega^*(\theta_{t}),D)\right\rVert^2\right]\nonumber\\
 &\quad+\left(3\beta_t^2+\frac{8\beta_t}{\lambda_\omega}+7\beta_t^3\lambda_\omega \right)\E\left[\left\lVert \widehat{\nabla}_\omega M(\theta_{t+1},\omega_t,D_t)-\nabla_\omega M(\theta_{t+1},\omega_t,D) \right\rVert^2\right] \nonumber\\
&\quad+\left( 1+ 7\beta_t^3\lambda_\omega^3+3\beta_t^2\lambda_\omega^2+\frac{7}{\beta_t\lambda_\omega}+17\beta_t\lambda_\omega\right)\frac{nL_m^2}{\lambda_\omega^2}\E\left[\left\lVert \theta_{t+1}-\theta_t\right\rVert^2\right].
\end{align*}
\end{lemma}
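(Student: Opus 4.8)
The plan is to track the fast-variable error $\omega_t - \omega^*(\theta_t)$ across one outer iteration via the $\omega$-update \eqref{eq:abt_omega_it}. The crucial structural fact I would exploit is that, by \eqref{eq:grad_M} (equivalently Lemma \ref{lm:cent_M_smooth_sc}), the Hessian $\nabla_\omega^2 M$ equals $\lambda_\omega\mathbf{I}_n$ exactly, so for fixed $\theta$ and $D$ the gradient is affine in $\omega$ and
\[
\nabla_\omega M(\theta_{t+1},\omega_t,D) - \nabla_\omega M(\theta_{t+1},\omega^*(\theta_{t+1}),D) = \lambda_\omega\big(\omega_t - \omega^*(\theta_{t+1})\big).
\]
Writing $\widehat{\nabla}_\omega M(\theta_{t+1},\omega_t,D_t) = \nabla_\omega M(\theta_{t+1},\omega_t,D) + \xi_t$ with $\xi_t$ the sampling error, the update becomes the clean affine recursion $\omega_{t+1} - \omega^*(\theta_{t+1}) = (1-\beta_t\lambda_\omega)\big(\omega_t - \omega^*(\theta_{t+1})\big) - \beta_t\big(\nabla_\omega M(\theta_{t+1},\omega^*(\theta_{t+1}),D) + \xi_t\big)$, centred at $\omega^*(\theta_{t+1})$.

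Second, I would take squared norms and split the two pieces with Young's inequality $\lVert a+b\rVert^2 \le (1+\delta)\lVert a\rVert^2 + (1+\nicefrac{1}{\delta})\lVert b\rVert^2$, choosing $\delta = \beta_t\lambda_\omega$. This is precisely the parameter that degrades $(1-\beta_t\lambda_\omega)^2$ into the advertised contraction factor, since $(1+\beta_t\lambda_\omega)(1-\beta_t\lambda_\omega)^2 \le 1-\beta_t\lambda_\omega+\beta_t^2\lambda_\omega^2$ using $\beta_t\lambda_\omega<1$ (from $\beta_t\in(0,1)$, $\lambda_\omega\le 0.5$); the complementary factor $1+\nicefrac{1}{(\beta_t\lambda_\omega)}$ multiplies $\beta_t^2\lVert\nabla_\omega M(\theta_{t+1},\omega^*(\theta_{t+1}),D)+\xi_t\rVert^2$, producing the $\beta_t^2$ and $\nicefrac{\beta_t}{\lambda_\omega}$ scales visible in the error coefficients.

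Third, I would convert every $\theta_{t+1}$-centred quantity to the $\theta_t$-centred counterpart that appears in the statement; this is where the $\lVert\theta_{t+1}-\theta_t\rVert^2$ drift is generated. Concretely: (i) $\lVert\omega_t - \omega^*(\theta_{t+1})\rVert^2$ is split against $\lVert\omega_t - \omega^*(\theta_t)\rVert^2$ plus $\lVert\omega^*(\theta_t)-\omega^*(\theta_{t+1})\rVert^2$, the latter bounded by $\tfrac{nL_m^2}{\lambda_\omega^2}\lVert\theta_{t+1}-\theta_t\rVert^2$ through Lipschitz continuity of $\omega^*$ (from Lemma \ref{lm:cent_w_diff_bound}, whose inner-product term vanishes in expectation by $\E_{D}[\nabla_\omega M(\theta,\omega^*(\theta),D)]=0$, Assumption \ref{as:cent_w*}); and (ii) $\lVert\nabla_\omega M(\theta_{t+1},\omega^*(\theta_{t+1}),D)\rVert^2$ is split against $\lVert\nabla_\omega M(\theta_t,\omega^*(\theta_t),D)\rVert^2$ plus a difference controlled by Lemmas \ref{lm:cent_M_lip-1} and \ref{lm:cent_M_lip-2}, each again contributing $\lVert\theta_{t+1}-\theta_t\rVert^2$ via the same Lipschitz bound on $\omega^*$. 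Expectations are taken last, and the cross term $\langle\cdot,\xi_t\rangle$ is handled by Young rather than by a zero-mean argument—consistent with the bound carrying the full second moment $\E\lVert\widehat{\nabla}_\omega M-\nabla_\omega M\rVert^2$ instead of a variance-reduced term.

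The main obstacle is not conceptual but a long exercise in Young's-inequality bookkeeping. The delicate part is choosing the auxiliary splitting parameters at each of the several conversions so that the main term collapses to exactly $1-\beta_t\lambda_\omega+\beta_t^2\lambda_\omega^2$, and so that the drift contributions—arising from three or four separate splits—aggregate into the single coefficient $1+7\beta_t^3\lambda_\omega^3+3\beta_t^2\lambda_\omega^2+\tfrac{7}{\beta_t\lambda_\omega}+17\beta_t\lambda_\omega$ on $\tfrac{nL_m^2}{\lambda_\omega^2}\lVert\theta_{t+1}-\theta_t\rVert^2$, with the gradient-at-optimum and noise terms likewise matching their stated coefficients. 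Reproducing these exact constants, rather than merely recovering the correct orders in $\beta_t$, is the error-prone heart of the argument.
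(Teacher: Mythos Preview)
Your plan is sound and the key ingredients are all correctly identified, but the organization differs from the paper's. The paper does not first pass through the clean affine recursion centred at $\omega^*(\theta_{t+1})$. Instead it expands $\omega_{t+1}-\omega^*(\theta_{t+1})$ directly as a sum of four blocks centred at $\omega^*(\theta_t)$ from the outset---namely $(\omega_t-\omega^*(\theta_t))$, $-(\omega^*(\theta_{t+1})-\omega^*(\theta_t))$, $-\beta_t(\nabla_\omega M(\theta_{t+1},\omega_t,D)-\nabla_\omega M(\theta_{t+1},\omega^*(\theta_t),D))$, and a residual block---then writes out $\lVert a-b-c-d\rVert^2$ in full with all six cross terms. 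The contraction is extracted not via the factor $(1-\beta_t\lambda_\omega)^2$ but through the strong-convexity inner-product bound of Lemma~\ref{lm:cent_w_diff_bound-1}, which turns $-2\beta_t\langle\omega_t-\omega^*(\theta_t),\nabla_\omega M(\theta_{t+1},\omega_t,D)-\nabla_\omega M(\theta_{t+1},\omega^*(\theta_t),D)\rangle$ into $-2\beta_t\lambda_\omega\lVert\omega_t-\omega^*(\theta_t)\rVert^2$; the remaining cross terms are each Young-split with individually tuned constants (e.g.\ $\tfrac{\beta_t\lambda_\omega}{7}$, $7\beta_t\lambda_\omega$) so that the coefficient on $\lVert\omega_t-\omega^*(\theta_t)\rVert^2$ collapses to exactly $1-\beta_t\lambda_\omega+\beta_t^2\lambda_\omega^2$. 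Only after this long pointwise bound does the paper take expectations and invoke Assumption~\ref{as:cent_w*} to kill the three surviving inner products containing $\nabla_\omega M(\theta,\omega^*(\theta),D)$. Your route---exploiting the exact identity $\nabla_\omega^2 M=\lambda_\omega\mathbf{I}_n$ to get $(1-\beta_t\lambda_\omega)$ as an explicit multiplicative factor before any splitting---is conceptually cleaner and should yield bounds of the same order with less cross-term bookkeeping; the paper's decomposition, by contrast, is what produces the particular numerical constants ($7$, $17$, $\tfrac{3}{7}$, etc.)\ in the statement, and reproducing those exactly from your organization would require re-tuning all the Young parameters rather than following the paper verbatim.
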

\begin{proof}
\begin{align}
\label{eq:wt-w*_1}
&\left\lVert \omega_{t+1}-\omega^*(\theta_{t+1})  \right\rVert^2\nonumber\\
&=\left\lVert \omega_{t}-\beta_t \widehat{\nabla}_\omega M(\theta_{t+1},\omega_t,D_t) -\omega^*(\theta_{t+1}) \right\rVert^2 \nonumber\\
&=\left\lVert \left(\omega_{t}-\omega^*(\theta_{t})\right)-\left(\omega^*(\theta_{t+1}) -\omega^*(\theta_{t}) \right)-\beta_t\left( \nabla_\omega M(\theta_{t+1},\omega_t,D)-\nabla_\omega M(\theta_{t+1},\omega^*(\theta_{t}),D)\right)\right. \nonumber\\
&\quad-\beta_t \left(\nabla_\omega M(\theta_{t},\omega^*(\theta_{t}),D)+\left(\nabla_\omega M(\theta_{t+1},\omega^*(\theta_{t}),D)-\nabla_\omega M(\theta_{t},\omega^*(\theta_{t}),D)\right) \right.\nonumber\\
&\quad\left.\left.+\left( \widehat{\nabla}_\omega M(\theta_{t+1},\omega_t,D_t)-\nabla_\omega M(\theta_{t+1},\omega_t,D)\right)\right) \right\rVert^2 \nonumber\\
&\stackrel{(a)}{\leq}\left\lVert \omega_{t}-\omega^*(\theta_{t})\right\rVert^2+\left\lVert\omega^*(\theta_{t+1}) -\omega^*(\theta_{t}) \right\rVert^2 \nonumber\\
&\quad+\beta_t^2 \left\lVert \nabla_\omega M(\theta_{t+1},\omega_t,D)-\nabla_\omega M(\theta_{t+1},\omega^*(\theta_{t}),D)\right\rVert^2\nonumber\\
&\quad+3\beta_t^2 \left\lVert\nabla_\omega M(\theta_{t},\omega^*(\theta_{t}),D)\right\rVert^2
 +3\beta_t^2 \left\lVert\nabla_\omega M(\theta_{t+1},\omega^*(\theta_{t}),D)-\nabla_\omega M(\theta_{t},\omega^*(\theta_{t}),D)\right\rVert^2 \nonumber\\
 &\quad+3\beta_t^2 \left\lVert \widehat{\nabla}_\omega M(\theta_{t+1},\omega_t,D_t)-\nabla_\omega M(\theta_{t+1},\omega_t,D) \right\rVert^2 \nonumber\\
 &\quad-2\left\langle \omega_{t}-\omega^*(\theta_{t}), \omega^*(\theta_{t+1}) -\omega^*(\theta_{t})\right\rangle
-2\beta_t\left\langle \omega_{t}-\omega^*(\theta_{t}), \nabla_\omega M(\theta_{t},\omega^*(\theta_{t}),D)\right\rangle \nonumber\\
&\quad-2\beta_t\left\langle \omega_{t}-\omega^*(\theta_{t}), \nabla_\omega M(\theta_{t+1},\omega_t,D)-\nabla_\omega M(\theta_{t+1},\omega^*(\theta_{t}),D) \right\rangle\nonumber\\
&\quad-2\beta_t\left\langle \omega_{t}-\omega^*(\theta_{t}), \nabla_\omega M(\theta_{t+1},\omega^*(\theta_{t}),D)-\nabla_\omega M(\theta_{t},\omega^*(\theta_{t}),D) \right\rangle\nonumber\\
&\quad-2\beta_t\left\langle \omega_{t}-\omega^*(\theta_{t}), \widehat{\nabla}_\omega M(\theta_{t+1},\omega_t,D_t)-\nabla_\omega M(\theta_{t+1},\omega_t,D)\right\rangle \nonumber\\
&\quad-2\beta_t\left\langle \omega^*(\theta_{t+1}) -\omega^*(\theta_{t}), \nabla_\omega M(\theta_{t},\omega^*(\theta_{t}),D)\right\rangle \nonumber\\
&\quad-2\beta_t\left\langle \omega^*(\theta_{t+1}) -\omega^*(\theta_{t}), \nabla_\omega M(\theta_{t+1},\omega_t,D)-\nabla_\omega M(\theta_{t+1},\omega^*(\theta_{t}),D) \right\rangle\nonumber\\
&\quad-2\beta_t\left\langle  \omega^*(\theta_{t+1}) -\omega^*(\theta_{t}), \nabla_\omega M(\theta_{t+1},\omega^*(\theta_{t}),D)-\nabla_\omega M(\theta_{t},\omega^*(\theta_{t}),D) \right\rangle\nonumber\\
&\quad-2\beta_t\left\langle \omega^*(\theta_{t+1}) -\omega^*(\theta_{t}), \widehat{\nabla}_\omega M(\theta_{t+1},\omega_t,D_t)-\nabla_\omega M(\theta_{t+1},\omega_t,D)\right\rangle\nonumber\\
&\quad-2\beta_t^2\left\langle  \nabla_\omega M(\theta_{t+1},\omega_t,D)-\nabla_\omega M(\theta_{t+1},\omega^*(\theta_{t}),D), \nabla_\omega M(\theta_{t},\omega^*(\theta_{t}),D)\right\rangle\nonumber\\
&\quad-2\beta_t^2\left\langle  \nabla_\omega M(\theta_{t+1},\omega_t,D)-\nabla_\omega M(\theta_{t+1},\omega^*(\theta_{t}),D), \nabla_\omega M(\theta_{t+1},\omega^*(\theta_{t}),D)-\nabla_\omega M(\theta_{t},\omega^*(\theta_{t}),D)\right\rangle\nonumber\\
&\quad-2\beta_t^2\left\langle  \nabla_\omega M(\theta_{t+1},\omega_t,D)-\nabla_\omega M(\theta_{t+1},\omega^*(\theta_{t}),D), \widehat{\nabla}_\omega M(\theta_{t+1},\omega_t,D_t)-\nabla_\omega M(\theta_{t+1},\omega_t,D)\right\rangle\nonumber\\
&\stackrel{(b)}{\leq}\left\lVert \omega_{t}-\omega^*(\theta_{t})\right\rVert^2+\left\lVert\omega^*(\theta_{t+1}) -\omega^*(\theta_{t}) \right\rVert^2 \nonumber\\
&\quad+\beta_t^2 \left\lVert \nabla_\omega M(\theta_{t+1},\omega_t,D)-\nabla_\omega M(\theta_{t+1},\omega^*(\theta_{t}),D)\right\rVert^2\nonumber\\
 &\quad+3\beta_t^2 \left\lVert\nabla_\omega M(\theta_{t},\omega^*(\theta_{t}),D)\right\rVert^2
 +3\beta_t^2 \left\lVert\nabla_\omega M(\theta_{t+1},\omega^*(\theta_{t}),D)-\nabla_\omega M(\theta_{t},\omega^*(\theta_{t}),D)\right\rVert^2 \nonumber\\
 &\quad+3\beta_t^2 \left\lVert \widehat{\nabla}_\omega M(\theta_{t+1},\omega_t,D_t)-\nabla_\omega M(\theta_{t+1},\omega_t,D) \right\rVert^2 \nonumber\\
 &\quad+\frac{\beta_t\lambda_\omega}{7}\left\lVert \omega_{t}-\omega^*(\theta_{t}) \right\rVert^2+ \frac{7}{\beta_t\lambda_\omega}\left\lVert \omega^*(\theta_{t+1}) -\omega^*(\theta_{t})\right\rVert^2\nonumber\\
&\quad - 2\beta_t\lambda_\omega \left\lVert\omega_t-\omega^*(\theta_t)\right\rVert^2\nonumber\\
&\quad+\frac{\beta_t\lambda_\omega}{7}\left\lVert \omega_{t}-\omega^*(\theta_{t})\right\rVert^2+\frac{7\beta_t}{\lambda_\omega}\left\lVert \nabla_\omega M(\theta_{t+1},\omega^*(\theta_{t}),D)-\nabla_\omega M(\theta_{t},\omega^*(\theta_{t}),D)\right\rVert^2 \nonumber\\
&\quad+\frac{\beta_t\lambda_\omega}{7}\left\lVert \omega_{t}-\omega^*(\theta_{t})\right\rVert^2+\frac{7\beta_t}{\lambda_\omega}\left\lVert \widehat{\nabla}_\omega M(\theta_{t+1},\omega_t,D_t)-\nabla_\omega M(\theta_{t+1},\omega_t,D)\right\rVert^2 \nonumber\\
&\quad-2\beta_t\left\langle \omega_{t}-\omega^*(\theta_{t}), \nabla_\omega M(\theta_{t},\omega^*(\theta_{t}),D)\right\rangle -2\beta_t\left\langle \omega^*(\theta_{t+1}) -\omega^*(\theta_{t}), \nabla_\omega M(\theta_{t},\omega^*(\theta_{t}),D)\right\rangle \nonumber\\
&\quad+7\beta_t\lambda_\omega\left\lVert \omega^*(\theta_{t+1}) -\omega^*(\theta_{t})\right\rVert^2+ \frac{\beta_t}{7\lambda_\omega}\left\lVert\nabla_\omega M(\theta_{t+1},\omega_t,D)-\nabla_\omega M(\theta_{t+1},\omega^*(\theta_{t}),D) \right\rVert^2\nonumber\\
&\quad+\beta_t\lambda_\omega\left\lVert \omega^*(\theta_{t+1}) -\omega^*(\theta_{t})\right\rVert^2+ \frac{\beta_t}{\lambda_\omega}\left\lVert\nabla_\omega M(\theta_{t+1},\omega^*(\theta_{t}),D)-\nabla_\omega M(\theta_{t},\omega^*(\theta_{t}),D) \right\rVert^2\nonumber\\
&\quad+\beta_t \lambda_\omega\left\lVert \omega^*(\theta_{t+1}) -\omega^*(\theta_{t})\right\rVert^2+ \frac{\beta_t}{\lambda_\omega}\left\lVert\widehat{\nabla}_\omega M(\theta_{t+1},\omega_t,D_t)-\nabla_\omega M(\theta_{t+1},\omega_t,D)\right\rVert^2\nonumber\\
&\quad+\frac{3\beta_t}{7\lambda_\omega}\left\lVert  \nabla_\omega M(\theta_{t+1},\omega_t,D)-\nabla_\omega M(\theta_{t+1},\omega^*(\theta_{t}),D)\right\rVert^2 \nonumber\\
&\quad+7\beta_t^3\lambda_\omega\left\lVert\nabla_\omega M(\theta_{t},\omega^*(\theta_{t}),D)\right\rVert^2\nonumber\\
&\quad+7\beta_t^3\lambda_\omega \left\lVert\nabla_\omega M(\theta_{t+1},\omega^*(\theta_{t}),D)-\nabla_\omega M(\theta_{t},\omega^*(\theta_{t}),D)\right\rVert^2\nonumber\\
&\quad+7\beta_t^3\lambda_\omega \left\lVert\widehat{\nabla}_\omega M(\theta_{t+1},\omega_t,D_t)-\nabla_\omega M(\theta_{t+1},\omega_t,D)\right\rVert^2\nonumber\\
&=\left(1- 2\beta_t\lambda_\omega +\frac{3\beta_t\lambda_\omega}{7}\right)\left\lVert \omega_{t}-\omega^*(\theta_{t})\right\rVert^2+ \left( 1+ \frac{7}{\beta_t\lambda_\omega}+9\lambda_\omega\beta_t\right)\left\lVert\omega^*(\theta_{t+1}) -\omega^*(\theta_{t}) \right\rVert^2 \nonumber\\
&\quad+\left(3\beta_t^2+7\beta_t^3\lambda_\omega \right)\left\lVert\nabla_\omega M(\theta_{t},\omega^*(\theta_{t}),D)\right\rVert^2\nonumber\\
 &\quad+\left(\beta_t^2+\frac{4\beta_t}{7\lambda_\omega}\right) \left\lVert \nabla_\omega M(\theta_{t+1},\omega_t,D)-\nabla_\omega M(\theta_{t+1},\omega^*(\theta_{t}),D)\right\rVert^2\nonumber\\
 &\quad+\left(3\beta_t^2+\frac{8\beta_t}{\lambda_\omega}+7\beta_t^3\lambda_\omega\right) \left\lVert \nabla_\omega M(\theta_{t+1},\omega^*(\theta_{t}),D)-\nabla_\omega M(\theta_{t},\omega^*(\theta_{t}),D)\right\rVert^2\nonumber\\
 &\quad+\left(3\beta_t^2+\frac{8\beta_t}{\lambda_\omega} +7\beta_t^3\lambda_\omega\right)\left\lVert \widehat{\nabla}_\omega M(\theta_{t+1},\omega_t,D_t)-\nabla_\omega M(\theta_{t+1},\omega_t,D) \right\rVert^2 \nonumber\\
&\quad-2\beta_t\left\langle \omega_{t}-\omega^*(\theta_{t}), \nabla_\omega M(\theta_{t},\omega^*(\theta_{t}),D)\right\rangle -2\beta_t\left\langle \omega^*(\theta_{t+1}) -\omega^*(\theta_{t}), \nabla_\omega M(\theta_{t},\omega^*(\theta_{t}),D)\right\rangle \nonumber\\
&\stackrel{(c)}{\leq}\left(1- 2\beta_t\lambda_\omega +\frac{3\beta_t\lambda_\omega}{7}\right)\left\lVert \omega_{t}-\omega^*(\theta_{t})\right\rVert^2+ \left( 1+ \frac{7}{\beta_t\lambda_\omega}+9\lambda_\omega\beta_t\right)\left\lVert\omega^*(\theta_{t+1}) -\omega^*(\theta_{t}) \right\rVert^2 \nonumber\\
&\quad+\left(3\beta_t^2+7\beta_t^3\lambda_\omega\right) \left\lVert\nabla_\omega M(\theta_{t},\omega^*(\theta_{t}),D)\right\rVert^2\nonumber\\
 &\quad+\left(\beta_t^2\lambda_\omega^2+\frac{4\beta_t\lambda_\omega}{7}\right) \left\lVert \omega_t-\omega^*(\theta_{t})\right\rVert^2+\left(3\beta_t^2+\frac{8\beta_t}{\lambda_\omega}+7\beta_t^3\lambda_\omega\right) nL_m^2\left\lVert \theta_{t+1}-\theta_{t}\right\rVert^2\nonumber\\
 &\quad+\left(3\beta_t^2+\frac{8\beta_t}{\lambda_\omega} +7\beta_t^3\lambda_\omega\right)\left\lVert \widehat{\nabla}_\omega M(\theta_{t+1},\omega_t,D_t)-\nabla_\omega M(\theta_{t+1},\omega_t,D) \right\rVert^2 \nonumber\\
&\quad-2\beta_t\left\langle \omega_{t}-\omega^*(\theta_{t}), \nabla_\omega M(\theta_{t},\omega^*(\theta_{t}),D)\right\rangle -2\beta_t\left\langle \omega^*(\theta_{t+1}) -\omega^*(\theta_{t}), \nabla_\omega M(\theta_{t},\omega^*(\theta_{t}),D)\right\rangle \nonumber\\
&\stackrel{(d)}{\leq}\left(1- \beta_t\lambda_\omega +\beta_t^2\lambda_\omega^2\right)\left\lVert \omega_{t}-\omega^*(\theta_{t})\right\rVert^2+\left(3\beta_t^2+7\beta_t^3\lambda_\omega\right) \left\lVert\nabla_\omega M(\theta_{t},\omega^*(\theta_{t}),D)\right\rVert^2\nonumber\\
 &\quad+\left(3\beta_t^2+\frac{8\beta_t}{\lambda_\omega}+7\beta_t^3\lambda_\omega \right)\left\lVert \widehat{\nabla}_\omega M(\theta_{t+1},\omega_t,D_t)-\nabla_\omega M(\theta_{t+1},\omega_t,D) \right\rVert^2 \nonumber\\
&\quad-2\beta_t\left\langle \omega_{t}-\omega^*(\theta_{t}), \nabla_\omega M(\theta_{t},\omega^*(\theta_{t}),D)\right\rangle -2\beta_t\left\langle \omega^*(\theta_{t+1}) -\omega^*(\theta_{t}), \nabla_\omega M(\theta_{t},\omega^*(\theta_{t}),D)\right\rangle \nonumber\\
&\quad+\left( \frac{2}{\lambda_\omega}+ \frac{14}{\beta_t\lambda_\omega^2}+18\beta_t\right)\left\langle\nabla_{\omega} M(\theta_{t+1},\omega^*(\theta_{t+1}),D)-\nabla_{\omega} M(\theta_t,\omega^*(\theta_{t}),D), \omega^*(\theta_{t+1})-\omega^*(\theta_{t}) \right\rangle \nonumber\\
&\quad+ \left( 1+ 7\beta_t^3\lambda_\omega^3+3\beta_t^2\lambda_\omega^2+\frac{7}{\beta_t\lambda_\omega}+17\beta_t\lambda_\omega\right)\frac{nL_m^2}{\lambda_\omega^2}\left\lVert \theta_{t+1}-\theta_t\right\rVert^2.
\end{align}
The step $(a)$ follows the facts that $\left\lVert a-b-c-d \right\rVert^2 = \left\lVert a \right\rVert^2+\left\lVert b \right\rVert^2+\left\lVert c \right\rVert^2+\left\lVert d \right\rVert^2 -2\left\langle a,b \right\rangle -2\left\langle a,c \right\rangle-2\left\langle a,d \right\rangle -2\left\langle b,c \right\rangle-2\left\langle b,d \right\rangle-2\left\langle c,d \right\rangle$, $<a,b+c+d>=<a,b>+<a,c>+<a,d>$, and $\left\lVert a+b+c \right\rVert^2 \leq 3\left\lVert a \right\rVert^2+3\left\lVert b \right\rVert^2+3\left\lVert c \right\rVert^2$. The step $(b)$ follows from Lemma \ref{lm:cent_w_diff_bound-1}, and $\forall a,b,\,<a,b>\leq \frac{\lVert a \rVert^2}{2\delta} + \frac{\delta\lVert b \rVert^2}{2},\,\delta>0$. The step $(c)$ follows from Lemma \ref{lm:cent_M_lip-1} and \ref{lm:cent_M_lip-2}, and the step $(d)$ follows from Lemma \ref{lm:cent_w_diff_bound}.

Taking expectation on both sides of \eqref{eq:wt-w*_1}, we obtain
\begin{align}
\label{eq:wt-w*_2}
&\E\left[\left\lVert \omega_{t+1}-\omega^*(\theta_{t+1})  \right\rVert^2\right]\nonumber\\
&\leq \left(1- \beta_t\lambda_\omega+\beta_t^2\lambda_\omega^2\right)\E\left[\left\lVert \omega_{t}-\omega^*(\theta_{t})\right\rVert^2 \right]
+\left(3\beta_t^2+7\beta_t^3\lambda_\omega\right) \E\left[\left\lVert\nabla_\omega M(\theta_{t},\omega^*(\theta_{t}),D)\right\rVert^2\right]\nonumber\\
 &\quad+\left(3\beta_t^2+\frac{8\beta_t}{\lambda_\omega}+7\beta_t^3\lambda_\omega \right)\E\left[\left\lVert \widehat{\nabla}_\omega M(\theta_{t+1},\omega_t,D_t)-\nabla_\omega M(\theta_{t+1},\omega_t,D) \right\rVert^2\right] \nonumber\\
&\quad-2\beta_t\E\left[\left\langle \omega_{t}-\omega^*(\theta_{t}), \nabla_\omega M(\theta_{t},\omega^*(\theta_{t}),D)\right\rangle\right] -2\beta_t\E\left[\left\langle \omega^*(\theta_{t+1}) -\omega^*(\theta_{t}), \nabla_\omega M(\theta_{t},\omega^*(\theta_{t}),D)\right\rangle \right]\nonumber\\
&\quad+\left( \frac{2}{\lambda_\omega}+ \frac{14}{\beta_t\lambda_\omega^2}+18\beta_t\right)\E\left[\left\langle\nabla_{\omega} M(\theta_{t+1},\omega^*(\theta_{t+1}),D)-\nabla_{\omega} M(\theta_t,\omega^*(\theta_{t}),D), \omega^*(\theta_{t+1})-\omega^*(\theta_{t}) \right\rangle \right]\nonumber\\
&\quad+ \left( 1+ 7\beta_t^3\lambda_\omega^3+3\beta_t^2\lambda_\omega^2+\frac{7}{\beta_t\lambda_\omega}+17\beta_t\lambda_\omega\right)\frac{nL_m^2}{\lambda_\omega^2}\E\left[\left\lVert \theta_{t+1}-\theta_t\right\rVert^2\right]\nonumber\\
&\stackrel{(a)}{=} \left(1- \beta_t\lambda_\omega +\beta_t^2\lambda_\omega^2\right)\E\left[\left\lVert \omega_{t}-\omega^*(\theta_{t})\right\rVert^2 \right]
+\left(3\beta_t^2+7\beta_t^3\lambda_\omega\right) \E\left[\left\lVert\nabla_\omega M(\theta_{t},\omega^*(\theta_{t}),D)\right\rVert^2\right]\nonumber\\
 &\quad+\left(3\beta_t^2+\frac{8\beta_t}{\lambda_\omega}+7\beta_t^3\lambda_\omega \right)\E\left[\left\lVert \widehat{\nabla}_\omega M(\theta_{t+1},\omega_t,D_t)-\nabla_\omega M(\theta_{t+1},\omega_t,D) \right\rVert^2\right] \nonumber\\
&\quad+\left( 1+ 7\beta_t^3\lambda_\omega^3+3\beta_t^2\lambda_\omega^2+\frac{7}{\beta_t\lambda_\omega}+17\beta_t\lambda_\omega\right)\frac{nL_m^2}{\lambda_\omega^2}\E\left[\left\lVert \theta_{t+1}-\theta_t\right\rVert^2\right].
\end{align}
In the above, the step $(a)$ follows since $\E_{D\sim \mathcal{D^{\theta}}}\left[\nabla_{\omega}M(\theta,\omega^*(\theta),D)\right]=0$ from Assumption \ref{as:cent_w*}.
\end{proof}
\begin{lemma}
	\label{lm:hayes}
 Let $\{\hat{X}_i\}_{i=1}^n$ be i.i.d, vector-valued r.v.s., such that $\chi=\E\left[\hat{X}_i\right]$, $\forall \hat{X}_i$. Let $S_n=\frac{1}{n}\sum_{i=1}^n \hat{X}_i$. Assume $\forall i$, $\lVert \hat{X}_i \rVert \leq M$ a.s. Then,
 \begin{align*}
 \forall \epsilon>0,\; \p\left(\left\lVert S_n - \chi \right\rVert \geq \epsilon \right)\leq 2e^2 \exp \left( \frac{-n\epsilon^2}{8M^2}\right).
 \end{align*}
\end{lemma}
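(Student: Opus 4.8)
The plan is to reduce the claim to a dimension-free vector-valued Azuma--Hoeffding inequality for bounded-increment martingales and then apply a Chernoff argument. First I would center and rescale. Define $Y_i = \hat{X}_i - \chi$, so that $\E[Y_i]=0$ and, since $\lVert \chi\rVert = \lVert \E[\hat{X}_i]\rVert \le \E[\lVert \hat{X}_i\rVert] \le M$, we have $\lVert Y_i \rVert \le 2M$ a.s. Writing $T_n = \sum_{i=1}^n Y_i = n(S_n - \chi)$, the partial sums $\{T_k\}$ form a mean-zero random walk (a martingale with respect to the natural filtration) with increments bounded in norm by $2M$, and the event of interest becomes $\{\lVert S_n - \chi\rVert \ge \epsilon\} = \{\lVert T_n\rVert \ge n\epsilon\}$.

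The core step is a dimension-free tail bound for $\lVert T_n\rVert$. The difficulty is that $\lVert T_k\rVert$ is not itself a martingale (the triangle inequality introduces a positive radial drift), and a naive reduction to a scalar martingale along a fixed direction, followed by a union bound over a net of directions, would introduce an unwanted dependence on the ambient dimension $d$ that the stated bound avoids. To sidestep this I would control a radial exponential potential directly: for fixed $v$ and mean-zero $Y$ with $\lVert Y\rVert \le c$, one establishes a one-step inequality of the form $\E[\cosh(\lambda \lVert v + Y\rVert)] \le e^{\lambda^2 c^2/2}\cosh(\lambda\lVert v\rVert)$, exploiting $\E[\lVert v+Y\rVert^2] = \lVert v\rVert^2 + \E[\lVert Y\rVert^2] \le \lVert v\rVert^2 + c^2$ together with the even, convex structure of $\cosh$. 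Iterating this across the $n$ increments shows that $\cosh(\lambda\lVert T_k\rVert)\,e^{-\lambda^2 c^2 k/2}$ is a supermartingale, whence $\E[\cosh(\lambda\lVert T_n\rVert)] \le e^{\lambda^2 c^2 n/2}$.

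With this in hand, a Chernoff--Markov bound finishes the dimension-free estimate: $\p(\lVert T_n\rVert \ge a) \le \E[\cosh(\lambda\lVert T_n\rVert)]/\cosh(\lambda a) \le 2 e^{\lambda^2 c^2 n/2 - \lambda a}$, and optimizing at $\lambda = a/(c^2 n)$ gives a bound of order $\exp(-a^2/(2 c^2 n))$. The small-radius regime, where the potential-function inequality degrades, is absorbed into a constant prefactor, and this is exactly where the factor $e^2$ in $2e^2$ originates; this matches the constant in the vector-valued concentration inequality of Hayes, which I would cite to supply the precise prefactor. Finally I would substitute $c = 2M$ and $a = n\epsilon$: the exponent becomes $-(n\epsilon)^2/(2(2M)^2 n) = -n\epsilon^2/(8M^2)$, and undoing the rescaling yields $\p(\lVert S_n - \chi\rVert \ge \epsilon) \le 2e^2\exp(-n\epsilon^2/(8M^2))$, as claimed. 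The main obstacle throughout is securing the dimension-free radial supermartingale property; everything else is centering, rescaling, and a routine Chernoff optimization.
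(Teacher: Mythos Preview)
Your proposal is correct and follows essentially the same route as the paper: center and rescale to obtain a vector-valued martingale with increments bounded by $2M$ (the paper normalizes by $1/(2M)$ so increments are bounded by $1$, but this is cosmetic), then invoke Hayes' dimension-free concentration inequality \citep[Theorem~1.8]{hayes2005large} and undo the rescaling to read off the exponent $-n\epsilon^2/(8M^2)$. The paper's proof simply cites Hayes as a black box, whereas you additionally sketch the radial-potential supermartingale idea behind that result; since you ultimately rely on the citation for the exact prefactor $2e^2$, the two arguments coincide.
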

\begin{proof}
Let
\begin{align*}
Y_{n'}= \begin{cases}\frac{1}{2M}\sum\limits_{i=1}^{n'} \left(\hat{X}_i - \chi\right),& \textrm{ for }n'=\{1,\cdots,n\}\\
0,&\textrm{ for }n'=0.
\end{cases}
\end{align*}
Then $\{Y_{n'}\}_{n'=1}^{n}$ is a set of partial sums of bounded mean zero r.v.s. Hence it is a martingale, and $\forall n'>0$,
\begin{align*}
\left\lVert Y_{n'}-Y_{n'-1} \right\rVert &= \frac{1}{2M}\left\lVert\sum_{i=1}^{n'}\left(\hat{X}_i - \chi \right) - \sum_{i=1}^{n'-1}\left(\hat{X}_i - \chi \right) \right\rVert\\
&= \frac{1}{2M}\left\lVert\hat{X}_{n'} - \chi \right\rVert\\
&\leq \frac{1}{2M} 2M=1.
\end{align*}
Now,
\begin{align*}
\p\left(\left\lVert S_n - \chi \right\rVert \geq \epsilon\right)&=\p\left(\left\lVert Y_n\right\rVert\geq \frac{n\epsilon}{2M} \right)\\
&\stackrel{(a)}{\leq} 2e^2 \exp \left( \frac{-n\epsilon^2}{8M^2}\right).
\end{align*}
In the above, the step $(a)$ follows from \citep[Theorem 1.8]{hayes2005large}, and the fact that every martingale is a very weak martingale (cf. \citep[Definition 1.3]{hayes2005large}).
\end{proof}
\begin{lemma}
\label{lm:cent_mse_gard_L}
\begin{align*}
\E\left[\left \lVert \nabla_\theta L(\theta_{t,k},\omega^*(\theta_{t,k}),D) -\widehat{\nabla}_\theta L(\theta_{t,k},\omega^*(\theta_{t,k}),D_t)  \right \rVert^2 \right]\leq 16{L_l}^2e^2n_b\sum_{i=1}^n \E\left[\frac{1}{\left\lfloor p|D^i_t|\right\rfloor}\right]
\end{align*}
\end{lemma}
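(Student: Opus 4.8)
The plan is to exploit the matrix--vector structure of both gradients and reduce the bound to a per-agent concentration estimate, which is then controlled by the vector-valued concentration inequality in Lemma~\ref{lm:hayes}. Writing $A=\big[\E[\nabla_\theta l_{\theta_{t,k}}(x^i,y^i)]\big]_{i=1}^n$ for the matrix appearing in \eqref{eq:grad_L} and $\widehat{A}=\big[\tfrac{1}{|T^i_{t,k}|}\sum_{(x^i,y^i)\in T^i_{t,k}}\nabla_\theta l_{\theta_{t,k}}(x^i,y^i)\big]_{i=1}^n$ for its estimator in \eqref{eq:abt_nablahat_L_theta}, both gradients share the common weight vector $h_{\omega^*(\theta_{t,k})}$, so that
\begin{align*}
\nabla_\theta L(\theta_{t,k},\omega^*(\theta_{t,k}),D)-\widehat{\nabla}_\theta L(\theta_{t,k},\omega^*(\theta_{t,k}),D_t) = (A-\widehat{A})\,h_{\omega^*(\theta_{t,k})}.
\end{align*}
Denoting by $a_i,\widehat{a}_i$ the $i$-th columns of $A,\widehat{A}$, the right-hand side equals $\sum_{i=1}^n (a_i-\widehat{a}_i)\,h_{\omega^*(\theta_{t,k})}(i)$. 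Since $h_{\omega}$ is a softmax output, it is a probability vector with $h_\omega(i)\in[0,1]$ and $\sum_i h_\omega(i)=1$; applying Jensen's inequality to the convex map $\lVert\cdot\rVert^2$ and then using $h_\omega(i)\le 1$ gives the first key reduction
\begin{align*}
\big\lVert \nabla_\theta L-\widehat{\nabla}_\theta L\big\rVert^2 \le \sum_{i=1}^n h_{\omega^*(\theta_{t,k})}(i)\,\lVert a_i-\widehat{a}_i\rVert^2 \le \sum_{i=1}^n \lVert a_i-\widehat{a}_i\rVert^2 .
\end{align*}

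Next I would bound each term $\E[\lVert a_i-\widehat{a}_i\rVert^2]$. Conditioning on $D_t$ and on $\theta_{t,k}$ (which is determined by the earlier, disjoint batches $T^i_{t,0},\dots,T^i_{t,k-1}$ and is therefore independent of the fresh batch $T^i_{t,k}$), the estimator $\widehat{a}_i$ is a sample mean of the bounded vector-valued variables $\nabla_\theta l_{\theta_{t,k}}(x^i,y^i)$, each of norm at most $L_l$ by Assumption~\ref{as:grad_l_bound}, with common expectation $a_i$. Lemma~\ref{lm:hayes} with $M=L_l$ and sample size $|T^i_{t,k}|$ then yields
\begin{align*}
\p\big(\lVert \widehat{a}_i-a_i\rVert \ge \sqrt{\epsilon}\,\big)\le 2e^2\exp\!\Big(\tfrac{-|T^i_{t,k}|\,\epsilon}{8L_l^2}\Big).
\end{align*}
Using the layer-cake identity $\E[\lVert \widehat{a}_i-a_i\rVert^2]=\int_0^\infty \p(\lVert \widehat{a}_i-a_i\rVert\ge\sqrt{\epsilon})\,d\epsilon$ and integrating the exponential tail (exactly as in the proof of Lemma~\ref{lm:cent_M_mse}) gives $\E[\lVert \widehat{a}_i-a_i\rVert^2]\le 16e^2L_l^2/|T^i_{t,k}|$.

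Finally I would relate the batch size to the data size: since $T^i_t$ of size $\lfloor p|D^i_t|\rfloor$ is split into $n_b$ equal batches, $|T^i_{t,k}|=\lfloor p|D^i_t|\rfloor/n_b$, so $1/|T^i_{t,k}|=n_b/\lfloor p|D^i_t|\rfloor$. Summing over $i$ and taking the outer expectation over $D_t$ (which keeps $|D^i_t|$ random) yields
\begin{align*}
\E\big[\lVert \nabla_\theta L-\widehat{\nabla}_\theta L\rVert^2\big]\le \sum_{i=1}^n 16e^2L_l^2 n_b\,\E\Big[\tfrac{1}{\lfloor p|D^i_t|\rfloor}\Big]=16L_l^2e^2n_b\sum_{i=1}^n\E\Big[\tfrac{1}{\lfloor p|D^i_t|\rfloor}\Big],
\end{align*}
which is the claim. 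The main obstacle is the conditional-independence bookkeeping needed to apply the concentration bound at the \emph{data-dependent} iterate $\theta_{t,k}$: one must argue that, because each agent's training set is partitioned into disjoint batches consumed one at a time, $\theta_{t,k}$ is measurable with respect to the earlier batches and hence independent of $T^i_{t,k}$, so that Lemma~\ref{lm:hayes} applies conditionally. A secondary subtlety is the reduction from the matrix--vector gradient to the per-agent column bound, which relies crucially on $h_\omega$ being a probability vector.
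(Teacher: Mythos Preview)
Your proof is correct and follows essentially the same route as the paper's: reduce the matrix--vector difference to a per-agent sum, apply Lemma~\ref{lm:hayes} for vector concentration, integrate the exponential tail via the layer-cake formula, and convert the batch size using $|T^i_{t,k}|=\lfloor p|D^i_t|\rfloor/n_b$. The only cosmetic difference is in the reduction step: you bound $\lVert(A-\widehat A)h_\omega\rVert^2\le\sum_i\lVert a_i-\widehat a_i\rVert^2$ via Jensen's inequality using that $h_\omega$ is a probability vector, whereas the paper uses the generic inequality $\lVert Aa\rVert^2\le\lVert a\rVert^2\sum_i\lVert A_i\rVert^2$ together with $\lVert h_\omega\rVert\le1$; both give the same sum, and your explicit remark on the conditional independence of $\theta_{t,k}$ from the fresh batch $T^i_{t,k}$ is a point the paper leaves implicit.
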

\begin{proof}
\begin{align}
\label{eq:cent_mse_gard_L1}
&\E\left[\left \lVert \nabla_\theta L(\theta_{t,k},\omega^*(\theta_{t,k}),D) -\widehat{\nabla}_\theta L(\theta_{t,k},\omega^*(\theta_{t,k}),D_t)  \right \rVert^2 \right]\nonumber\\
&= \E\Big[\big \lVert
\big[ \E_{\scalebox{0.9}{$\substack{T^i\sim \mathcal{U}(D^i,\lceil p|D^i|\rceil)\\(x^i,y^i)\sim \mathcal{U}(T^i,1)}$}}[\nabla_\theta l_{\theta}(x^i,y^i)]-\frac{1}{|T^i_{t,k}|}\sum\nolimits_{(x^i,y^i)\in T^i_{t,k}}\nabla_{\theta}l_{\theta_{t,k}}(x^i,y^i)\big]_{i=1}^{n} h_{\omega^*(\theta_{t,k})}\big \rVert^2 \Big]\nonumber\\
 &\stackrel{(a)}{\leq} \E\Big[\sum_{i=1}^n\big \lVert \E_{\scalebox{0.9}{$\substack{T^i\sim \mathcal{U}(D^i,\lceil p|D^i|\rceil)\\(x^i,y^i)\sim \mathcal{U}(T^i,1)}$}}[\nabla_\theta l_{\theta_{t,k}}(x^i,y^i)]- \frac{1}{|T^i_{t,k}|}\sum\nolimits_{(x^i,y^i)\in T^i_{t,k}}\nabla_{\theta}l_{\theta_{t,k}}(x^i,y^i)\big\rVert^2
 \left \lVert h_{\omega^*(\theta_{t,k})}\right \rVert^2 \Big]\nonumber\\
 &\stackrel{(b)}{\leq} \sum_{i=1}^n\E\Big[\big \lVert \E_{\scalebox{0.9}{$\substack{T^i\sim \mathcal{U}(D^i,\lceil p|D^i|\rceil)\\(x^i,y^i)\sim \mathcal{U}(T^i,1)}$}}[\nabla_\theta l_{\theta_{t,k}}(x^i,y^i)]- \frac{1}{|T^i_{t,k}|}\sum\nolimits_{(x^i,y^i)\in T^i_{t,k}}\nabla_{\theta}l_{\theta_{t,k}}(x^i,y^i)\big\rVert^2
 \Big]\nonumber\\
 &=\sum_{i=1}^n\E\Big[\big \lVert \E_{T^i\sim \mathcal{U}(D^i,\lceil p|D^i|\rceil)}\big[\E_{(x^i,y^i)\sim \mathcal{U}(T^i,1)}[\nabla_\theta l_{\theta_{t,k}}(x^i,y^i)]- \frac{1}{|T^i_{t,k}|}\sum\nolimits_{(x^i,y^i)\in T^i_{t,k}}\nabla_{\theta}l_{\theta_{t,k}}(x^i,y^i)\big]\big\rVert^2
 \Big]\nonumber\\
  &\stackrel{(c)}{\leq}\sum_{i=1}^n\E\Big[ \E_{T^i\sim \mathcal{U}(D^i,\lceil p|D^i|\rceil)}\big[\big \lVert\E_{(x^i,y^i)\sim \mathcal{U}(T^i,1)}[\nabla_\theta l_{\theta_{t,k}}(x^i,y^i)]- \frac{1}{|T^i_{t,k}|}\sum\nolimits_{(x^i,y^i)\in T^i_{t,k}}\nabla_{\theta}l_{\theta_{t,k}}(x^i,y^i)\big\rVert^2\big]
 \Big]\nonumber\\
  &= \sum_{i=1}^n\E\Big[ \E_{T^i\sim \mathcal{U}(D^i,\lceil p|D^i|\rceil)}\big[\big \lVert \E_{(x^i,y^i)\sim \mathcal{U}(T^i,1)}\left[\nabla_\theta l_{\theta_{t,k}}(x^i,y^i)\right]- \frac{1}{|T^i_{t,k}|}\sum\nolimits_{(x^i,y^i)\in T^i_{t,k}}\nabla_{\theta}l_{\theta_{t,k}}(x^i,y^i)\big\rVert^2
 \big]\Big]
\end{align}
The step $(a)$ follows from the fact that for a vector $a$ and a matrix $A$, $\lVert Aa \rVert^2 \leq  \Vert a \rVert^2\left(\sum_{i=1}^n\lVert  A_i \rVert^2\right)$, where $A_i$ is the $i^{th}$ column of $A$. The step $(b)$ follows since $\forall \theta \in \R^d,\; \left \lVert h_{\omega^*(\theta)}\right \rVert \leq 1$. The step $(c)$ follows from the fact that $|\E[x]|^2\leq \E[|x|]^2$

From Assumption \ref{as:grad_l_bound}, we have $\nabla_{\theta}l_{\theta}(x,y)<=L_l$. Using Lemma \ref{lm:hayes}, we obtain
$\forall i$ and $\forall \theta_{t,k}$,
\begin{align}
\label{eq:mse_p}
&\p\Big(\big\lVert \E_{(x^i,y^i)\sim \mathcal{U}(T^i,1)}\left[\nabla_\theta l_{\theta_{t,k}}(x^i,y^i)\right]- \frac{1}{|T^i_{t,k}|}\sum\nolimits_{(x^i,y^i)\in T^i_{t,k}}\nabla_{\theta}l_{\theta_{t,k}}(x^i,y^i) \big\rVert \geq \epsilon\Big)\leq 2e^2 \exp \left( \frac{-b^i_{t,k}\epsilon^2}{8{L_l}^2}\right),
\end{align}
and
\begin{align}
\label{eq:mse_e}
&\E_{T^i\sim \mathcal{U}(D^i,\lceil p|D^i|\rceil)}\Big[\big \lVert \E_{(x^i,y^i)\sim \mathcal{U}(T^i,1)}\left[\nabla_\theta l_{\theta_{t,k}}(x^i,y^i)\right]- \frac{1}{|T^i_{t,k}|}\sum\limits_{(x^i,y^i)\in T^i_{t,k}}\nabla_{\theta}l_{\theta_{t,k}}(x^i,y^i)\big\rVert^2
 \Big]\nonumber\\
 &=\int_0^\infty \p\Big(\big\lVert \E_{(x^i,y^i)\sim \mathcal{U}(T^i,1)}\left[\nabla_\theta l_{\theta_{t,k}}(x^i,y^i)\right]- \frac{1}{|T^i_{t,k}|}\sum\limits_{(x^i,y^i)\in T^i_{t,k}}\nabla_{\theta}l_{\theta_{t,k}}(x^i,y^i) \big\rVert \geq \sqrt{\epsilon}\Big)\, d\epsilon\nonumber\\
 &\stackrel{(a)}{\leq} \int_0^\infty 2e^2 \exp \left( \frac{-T^i_{t,k}\epsilon}{8{L_l}^2}\right) \,d\epsilon\nonumber\\
 &\stackrel{(b)}{=}\frac{16{L_l}^2e^2}{b^i_{t,k}},
\end{align}
where the step $(a)$ follows from \eqref{eq:mse_p}, and  the step $(b)$ follows from the fact that $\int_0^\infty \exp(-a\epsilon)\,d\epsilon=\frac{1}{a},\;\forall a>0$.

Applying \eqref{eq:mse_e} in \eqref{eq:cent_mse_gard_L1}, we obtain
\begin{align}
\label{eq:cent_mse_gard_L2}
\E\left[\left \lVert \nabla_\theta L(\theta_{t,k},\omega^*(\theta_{t,k}),D) -\widehat{\nabla}_\theta L(\theta_{t,k},\omega^*(\theta_{t,k}),D_t)  \right \rVert^2 \right]
&\leq 16{L_l}^2e^2\sum_{i=1}^n \E\left[\frac{1}{T^i_{t,k}}\right]
= 16{L_l}^2e^2\sum_{i=1}^n \E\left[\frac{n_b}{|T^i_t|}\right]\nonumber\\
&= 16{L_l}^2e^2n_b\sum_{i=1}^n \E\left[\frac{1}{\left\lfloor p|D^i_t|\right\rfloor}\right].
\end{align}
\end{proof}
\begin{lemma}
\label{lm:cent_hat_nabla_L_diff}
\begin{align*}
 \forall t,k, \;\E\left[\left \lVert \widehat{\nabla}_\theta L(\theta_{t,k},\omega^*(\theta_{t,k}),D_t) -\widehat{\nabla}_\theta L(\theta_{t,k},\omega_t,D_t)\right \rVert^2 \right] \leq nL_l^2\E\left[\left \lVert \omega^*(\theta_{t,k})-\omega_t \right \rVert^2 \right].
\end{align*}
\end{lemma}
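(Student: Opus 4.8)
The plan is to exploit the fact that the stochastic gradient estimator in \eqref{eq:abt_nablahat_L_theta} factorizes as a matrix times the weight vector, where only the weight vector depends on the $\omega$-argument. Writing $A \triangleq \big[\nicefrac{1}{|T^i_{t,k}|}\sum_{(x^i,y^i)\in T^i_{t,k}}\nabla_{\theta}l_{\theta_{t,k}}(x^i,y^i)\big]_{i=1}^{n}$, I observe that this matrix depends only on $\theta_{t,k}$ and the data $D_t$, and is therefore identical in both $\widehat{\nabla}_\theta L(\theta_{t,k},\omega^*(\theta_{t,k}),D_t)$ and $\widehat{\nabla}_\theta L(\theta_{t,k},\omega_t,D_t)$. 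By linearity, the difference collapses to $A\big(h_{\omega^*(\theta_{t,k})}-h_{\omega_t}\big)$, so I only need to control the norm of a single matrix-vector product.

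Next I would apply the matrix-vector inequality already invoked in Lemma \ref{lm:cent_mse_gard_L} (step $(a)$), namely $\lVert A a\rVert^2 \leq \lVert a\rVert^2 \sum_{i=1}^n \lVert A_i\rVert^2$ where $A_i$ is the $i$-th column of $A$, with $a = h_{\omega^*(\theta_{t,k})}-h_{\omega_t}$. Each column $A_i$ is an average of gradients $\nabla_\theta l_{\theta_{t,k}}(x^i,y^i)$, so by the triangle inequality together with Assumption \ref{as:grad_l_bound} we get $\lVert A_i\rVert \leq L_l$, and hence $\sum_{i=1}^n \lVert A_i\rVert^2 \leq nL_l^2$. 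This yields the pathwise bound $\lVert A(h_{\omega^*(\theta_{t,k})}-h_{\omega_t})\rVert^2 \leq nL_l^2\,\lVert h_{\omega^*(\theta_{t,k})}-h_{\omega_t}\rVert^2$.

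Finally, I would eliminate the softmax weights in favour of the raw weight parameters using the Lipschitz property established in Lemma \ref{lm:cent_softmax_lip}, which gives $\lVert h_{\omega^*(\theta_{t,k})}-h_{\omega_t}\rVert \leq \lVert \omega^*(\theta_{t,k})-\omega_t\rVert$. Substituting this and taking expectations on both sides delivers the claimed bound $\E[\lVert \widehat{\nabla}_\theta L(\theta_{t,k},\omega^*(\theta_{t,k}),D_t) -\widehat{\nabla}_\theta L(\theta_{t,k},\omega_t,D_t)\rVert^2] \leq nL_l^2\,\E[\lVert \omega^*(\theta_{t,k})-\omega_t\rVert^2]$.

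This argument is essentially mechanical, and I do not anticipate a genuine obstacle; the only point requiring care is the initial observation that the gradient-averaging matrix is common to both estimators so that it factors out cleanly, after which the result follows by chaining the column-norm bound from Assumption \ref{as:grad_l_bound} with the softmax Lipschitz estimate of Lemma \ref{lm:cent_softmax_lip}.
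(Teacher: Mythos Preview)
Your proposal is correct and follows essentially the same approach as the paper's own proof: factor the difference as the common gradient-averaging matrix times $h_{\omega^*(\theta_{t,k})}-h_{\omega_t}$, apply the column-norm bound $\lVert Aa\rVert^2 \leq \lVert a\rVert^2 \sum_i \lVert A_i\rVert^2$, use Assumption~\ref{as:grad_l_bound} to obtain $\sum_i \lVert A_i\rVert^2 \leq nL_l^2$, and finish with Lemma~\ref{lm:cent_softmax_lip}. The sequence of steps and the cited ingredients match the paper exactly.
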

\begin{proof}
\begin{align}
&\E\left[\left \lVert \widehat{\nabla}_\theta L(\theta_{t,k},\omega^*(\theta_{t,k}),D_t) -\widehat{\nabla}_\theta L(\theta_{t,k},\omega_t,D_t)\right \rVert^2 \right]  \nonumber\\
&= \E\big[\big \lVert \big[\frac{1}{|T^i_{t,k}|}\sum\nolimits_{(x^i,y^i)\in T^i_{t,k}}\nabla_{\theta}l_{\theta_{t,k}}(x^i,y^i)\big]_{i=1}^{n} \left(h_{\omega^*(\theta_{t,k})}-h_{\omega_t}\right)\big \rVert^2 \big]\nonumber\\
 &\stackrel{(a)}{\leq} \E\big[\sum_{i=1}^{n}\big \lVert \frac{1}{|T^i_{t,k}|}\sum\nolimits_{(x^i,y^i)\in T^i_{t,k}}\nabla_{\theta}l_{\theta_{t,k}}(x^i,y^i)\big \rVert^2\left \lVert h_{\omega^*(\theta_{t,k})}-h_{\omega_t}\right \rVert^2 \big]\nonumber\\
 &\stackrel{(b)}{\leq} nL_l^2\E\left[\left \lVert h_{\omega^*(\theta_{t,k})}-h_{\omega_t}\right \rVert^2 \right]\nonumber\\
 &\stackrel{(c)}{\leq} nL_l^2\E\left[\left \lVert \omega^*(\theta_{t,k})-\omega_t \right \rVert^2 \right].
\end{align}
The step $(a)$ follows from the fact that for a vector $a$ and a matrix $A$, $\lVert Aa \rVert^2 \leq  \Vert a \rVert^2\left(\sum_{i=1}^n\lVert  A_i \rVert^2\right)$, where $A_i$ is the $i^{th}$ column of $A$. The step $(b)$ follows from Assumption \ref{as:grad_l_bound} and from the fact that $\lVert \sum_{i=1}^{n} a_i\rVert^2 \leq n \sum_{i=1}^{n}\lVert  a_i\rVert^2$. The step $(c)$ follows from Lemma \ref{lm:cent_softmax_lip}.
\end{proof}
\begin{lemma}
\label{lm:cent_hat_grad_L_bnd}
\begin{align*}
\E\left[\left\lVert  \widehat{\nabla}_\theta L(\theta_{t,k},\omega_t,D_t) \right\rVert^2\right]\leq nL_l^2.
\end{align*}
\end{lemma}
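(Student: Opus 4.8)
The plan is to establish a deterministic (almost sure) bound and then take expectations, mirroring the structure used in Lemma~\ref{lm:cent_hat_nabla_L_diff}. Starting from the definition of the estimator in \eqref{eq:abt_nablahat_L_theta}, I view $\widehat{\nabla}_\theta L(\theta_{t,k},\omega_t,D_t)$ as a matrix-vector product $A\,h_{\omega_t}$, where the $i$-th column of $A$ is the empirical average $\frac{1}{|T^i_{t,k}|}\sum_{(x^i,y^i)\in T^i_{t,k}}\nabla_\theta l_{\theta_{t,k}}(x^i,y^i)$. The first step is to apply the inequality $\lVert Aa\rVert^2 \leq \lVert a\rVert^2\big(\sum_{i=1}^n \lVert A_i\rVert^2\big)$, the same bound invoked in Lemmas~\ref{lm:cent_mse_gard_L} and~\ref{lm:cent_hat_nabla_L_diff}, which separates the softmax weight vector $h_{\omega_t}$ from the per-agent gradient averages.

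The second step handles the weight vector: since $h_{\omega_t}$ defined in \eqref{eq:abt_h} is a probability distribution over $N$, each component lies in $[0,1]$ and they sum to one, so $\lVert h_{\omega_t}\rVert^2 = \sum_{i=1}^n h_{\omega_t}(i)^2 \leq \sum_{i=1}^n h_{\omega_t}(i) = 1$. The third step bounds each column: by convexity of $\lVert\cdot\rVert^2$ (Jensen's inequality applied to the uniform empirical average), one has $\big\lVert \frac{1}{|T^i_{t,k}|}\sum_{(x^i,y^i)\in T^i_{t,k}}\nabla_\theta l_{\theta_{t,k}}(x^i,y^i)\big\rVert^2 \leq \frac{1}{|T^i_{t,k}|}\sum_{(x^i,y^i)\in T^i_{t,k}}\lVert \nabla_\theta l_{\theta_{t,k}}(x^i,y^i)\rVert^2 \leq L_l^2$, where the last step is Assumption~\ref{as:grad_l_bound}. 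Combining the three steps yields $\lVert \widehat{\nabla}_\theta L(\theta_{t,k},\omega_t,D_t)\rVert^2 \leq 1\cdot\sum_{i=1}^n L_l^2 = nL_l^2$ almost surely, and since this is a deterministic upper bound, taking expectations preserves it.

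There is no substantial obstacle here; the result is an almost sure bound that survives the expectation trivially. The only points requiring minor care are recognizing that the softmax output has squared Euclidean norm at most one (each coordinate is in $[0,1]$ and the coordinates sum to one) and invoking Jensen's inequality to move the squared norm inside the empirical average before applying the uniform gradient bound from Assumption~\ref{as:grad_l_bound}.
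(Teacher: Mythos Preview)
Your proposal is correct and follows essentially the same approach as the paper: both view the estimator as a matrix-vector product, apply the inequality $\lVert Aa\rVert^2 \leq \lVert a\rVert^2\sum_{i=1}^n\lVert A_i\rVert^2$, bound each column norm by $L_l^2$ via Assumption~\ref{as:grad_l_bound}, and use $\lVert h_{\omega_t}\rVert^2\leq 1$. The only cosmetic differences are that you invoke Jensen's inequality explicitly (the paper cites the equivalent fact $\lVert\sum_{i=1}^m a_i\rVert^2\leq m\sum_{i=1}^m\lVert a_i\rVert^2$) and you establish the almost-sure bound before taking expectations, whereas the paper keeps the expectation throughout.
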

\begin{proof}
\begin{align}
\E\left[\left\lVert  \widehat{\nabla}_\theta L(\theta_{t,k},\omega_t,D_t) \right\rVert^2\right]
&= \E\big[\big \lVert \big[\frac{1}{|T^i_{t,k}|}\sum\nolimits_{(x^i,y^i)\in T^i_{t,k}}\nabla_{\theta}l_{\theta_{t,k}}(x^i,y^i)\big]_{i=1}^{n}
h_{\omega_t}\big \rVert^2 \big]\nonumber\\
 &\stackrel{(a)}{\leq} \E\big[\sum_{i=1}^{n}\big \lVert \frac{1}{|T^i_{t,k}|}\sum\nolimits_{(x^i,y^i)\in T^i_{t,k}}\nabla_{\theta}l_{\theta_{t,k}}(x^i,y^i)\big \rVert^2 \left\lVert h_{\omega_t}\right \rVert^2 \big]\nonumber\\
 &\stackrel{(b)}{\leq} nL_l^2\E\left[\left \lVert h_{\omega_t}\right \rVert^2 \right]\nonumber\\
  &\stackrel{(c)}{\leq} nL_l^2.
\end{align}
The step $(a)$ follows from the fact that for a vector $a$ and a matrix $A$, $\lVert Aa \rVert^2 \leq  \Vert a \rVert^2\left(\sum_{i=1}^n\lVert  A_i \rVert^2\right)$, where $A_i$ is the $i^{th}$ column of $A$. The step $(b)$ follows from Assumption \ref{as:grad_l_bound} and from the fact that $\lVert \sum_{i=1}^{n} a_i\rVert^2 \leq n \sum_{i=1}^{n}\lVert  a_i\rVert^2$. The step $(c)$ follows from the fact that $\left \lVert h_{\omega_t}\right \rVert^2 \leq 1$.
\end{proof}
\subsubsection*{Proof [\textbf{Theorem \ref{thm:abt}}]}
\label{subsubsec:pf_thm_abt}
Using the fundamental theorem of calculus, we obtain
    \begin{align}
        & J(\theta_{t,k+1}) -J(\theta_{t,k}) \nonumber\\
        &=\langle \nabla J(\theta_{t,k}),  \theta_{t,k+1}-\theta_{t,k}  \rangle
        + \int_0^1 \left\langle  \nabla J(\theta_{t,k}+\tau(\theta_{t,k+1}-\theta_{t,k}))-\nabla J(\theta_{t,k}), \theta_{t,k+1} - \theta_{t,k} \right\rangle d\tau\nonumber\\
        &\leq\langle \nabla J(\theta_{t,k}), \theta_{t,k+1} - \theta_{t,k} \rangle
        +\int_0^1 \left\lVert\nabla J(\theta_{t,k}+\tau(\theta_{t,k+1} - \theta_{t,k}))-\nabla J(\theta_{t,k})\right\rVert \left\lVert \theta_{t,k+1} - \theta_{t,k} \right\rVert d\tau\nonumber\\
        &\stackrel{(a)}{\leq} \left \langle \nabla J(\theta_{t,k}), \theta_{t,k+1} - \theta_{t,k} \right \rangle
        + L_{l'}\left\lVert \theta_{t,k+1} - \theta_{t,k} \right\rVert^2  \int_0^1 \tau d\tau \nonumber\\
        &= \left \langle \nabla J(\theta_{t,k}), \theta_{t,k+1} - \theta_{t,k} \right \rangle + \frac{L_{l'}}{2}\left\lVert \theta_{t,k+1} - \theta_{t,k} \right\rVert^2 \nonumber\\
        &= \alpha_{t} \left \langle \nabla J(\theta_{t,k}),  -\widehat{\nabla}_\theta L(\theta_{t,k},\omega_t,D_t) \right \rangle + \frac{L_{l'}\alpha_{t}^2}{2}\left\lVert  \widehat{\nabla}_\theta L(\theta_{t,k},\omega_t,D_t) \right\rVert^2 \nonumber\\
         &= \alpha_{t} \left \langle \nabla J(\theta_{t,k}), \nabla_\theta L(\theta_{t,k},\omega^*(\theta_{t,k}),D) -\widehat{\nabla}_\theta L(\theta_{t,k},\omega_t,D_t)  \right \rangle -\alpha_{t} \left \langle \nabla J(\theta_{t,k}), \nabla_\theta L(\theta_{t,k},\omega^*(\theta_{t,k}),D)  \right \rangle\nonumber\\
         &\quad+ \frac{L_{l'}\alpha_{t}^2}{2}\left\lVert  \widehat{\nabla}_\theta L(\theta_{t,k},\omega_t,D_t) \right\rVert^2 \nonumber\\
         &\stackrel{(b)}{\leq} \frac{\alpha_{t}}{2} \left\lVert \nabla J(\theta_{t,k})\right\rVert^2+ \frac{\alpha_{t}}{2}\left \lVert \nabla_\theta L(\theta_{t,k},\omega^*(\theta_{t,k}),D) -\widehat{\nabla}_\theta L(\theta_{t,k},\omega_t,D_t)  \right \rVert^2 -\alpha_{t} \left \langle \nabla J(\theta_{t,k}), \nabla_\theta L(\theta_{t,k},\omega^*(\theta_{t,k}),D)  \right \rangle\nonumber\\
         &\quad+ \frac{L_{l'}\alpha_{t}^2}{2}\left\lVert  \widehat{\nabla}_\theta L(\theta_{t,k},\omega_t,D_t) \right\rVert^2 \label{eq:cent_pf1}
    \end{align}
In the above, the step $(a)$ follows from Lemma \ref{lm:grad_J_lip}, and the step $(b)$ follows from the fact that $2\langle x, y \rangle \leq \lVert x \rVert^2+ \Vert y \rVert^2$.
Taking expectations on both sides of \eqref{eq:cent_pf1}, we obtain
\begin{align}
& \E\left[J(\theta_{t,k+1})\right] - \E\left[J(\theta_{t,k})\right] \nonumber\\
 &\leq \frac{\alpha_{t}}{2} \E\left[\left\lVert \nabla J(\theta_{t,k})\right\rVert^2 \right]+ \frac{\alpha_{t}}{2} \E\left[\left \lVert \nabla_\theta L(\theta_{t,k},\omega^*(\theta_{t,k}),D) -\widehat{\nabla}_\theta L(\theta_{t,k},\omega_t,D_t)  \right \rVert^2 \right]\nonumber\\
&\quad-\alpha_{t}\E\left[ \left \langle \nabla J(\theta_{t,k}), \nabla_\theta L(\theta_{t,k},\omega^*(\theta_{t,k}),D)  \right \rangle\right] + \frac{L_{l'}\alpha_{t}^2}{2}\E\left[\left\lVert  \widehat{\nabla}_\theta L(\theta_{t,k},\omega_t,D_t) \right\rVert^2\right]\nonumber\\
&\stackrel{(a)}{=} \frac{\alpha_{t}}{2} \E\left[\left\lVert \nabla J(\theta_{t,k})\right\rVert^2 \right]+ \frac{\alpha_{t}}{2} \E\left[\left \lVert \nabla_\theta L(\theta_{t,k},\omega^*(\theta_{t,k}),D) -\widehat{\nabla}_\theta L(\theta_{t,k},\omega_t,D_t)  \right \rVert^2 \right]-\alpha_{t}\E\left[\left\lVert \nabla J(\theta_{t,k})\right\rVert^2 \right] \nonumber\\
&\quad+ \frac{L_{l'}\alpha_{t}^2}{2}\E\left[\left\lVert  \widehat{\nabla}_\theta L(\theta_{t,k},\omega_t,D_t) \right\rVert^2\right]\nonumber\\
&= -\frac{\alpha_{t}}{2} \E\left[\left\lVert \nabla J(\theta_{t,k})\right\rVert^2 \right]+ \frac{\alpha_{t}}{2} \E\left[\left \lVert \nabla_\theta L(\theta_{t,k},\omega^*(\theta_{t,k}),D) -\widehat{\nabla}_\theta L(\theta_{t,k},\omega_t,D_t)  \right \rVert^2 \right]\nonumber\\
&\quad+ \frac{L_{l'}\alpha_{t}^2}{2}\E\left[\left\lVert  \widehat{\nabla}_\theta L(\theta_{t,k},\omega_t,D_t) \right\rVert^2\right]\nonumber\\
&= -\frac{\alpha_{t}}{2} \E\left[\left\lVert \nabla J(\theta_{t,k})\right\rVert^2 \right]+ \alpha_{t}\E\left[\left \lVert \nabla_\theta L(\theta_{t,k},\omega^*(\theta_{t,k}),D) -\widehat{\nabla}_\theta L(\theta_{t,k},\omega^*(\theta_{t,k}),D_t)  \right \rVert^2 \right]\nonumber\\
& \quad+\alpha_{t}\E\left[\left \lVert \widehat{\nabla}_\theta L(\theta_{t,k},\omega^*(\theta_{t,k}),D_t) -\widehat{\nabla}_\theta L(\theta_{t,k},\omega_t,D_t)  \right \rVert^2 \right]+ \frac{L_{l'}\alpha_{t}^2}{2}\E\left[\left\lVert  \widehat{\nabla}_\theta L(\theta_{t,k},\omega_t,D_t) \right\rVert^2\right],\label{eq:a}
\end{align}
where the step $(a)$ follows from Assumption \ref{as:cent_w*} since $\nabla_\theta J(\theta)= \E_{D\sim\mathcal{D^{\theta}}}\left[\nabla_{\theta} L(\theta,\omega^*(\theta),D)\right]$.

Let
\begin{align}
\label{eq:r_t_k}
R_{t,k}=\E\left[J(\theta_{t,k})\right]+ r_{t,k} \E\left[ \left\lVert \omega^*(\theta_{t,k})-\omega_t \right\rVert^2\right].
\end{align}
Now,
\begin{align}
\label{eq:r_t_k-1}
&R_{t,{k+1}}\nonumber\\
&=\E\left[J(\theta_{t,{k+1}})\right]+ r_{t,k+1} \E\left[ \left\lVert \omega^*(\theta_{t,{k+1}})-\omega_t \right\rVert^2\right]\nonumber\\
&\stackrel{(a)}{\leq} \E\left[J(\theta_{t,k})\right] + r_{t,k+1} \E\left[ \left\lVert \omega^*(\theta_{t,{k+1}})-\omega_t \right\rVert^2\right] -\frac{\alpha_{t}}{2} \E\left[\left\lVert \nabla J(\theta_{t,k})\right\rVert^2 \right]\nonumber\\
&\quad+ \alpha_{t}\E\left[\left \lVert \nabla_\theta L(\theta_{t,k},\omega^*(\theta_{t,k}),D) -\widehat{\nabla}_\theta L(\theta_{t,k},\omega^*(\theta_{t,k}),D_t)  \right \rVert^2 \right]\nonumber\\
& \quad+\alpha_{t}\E\left[\left \lVert \widehat{\nabla}_\theta L(\theta_{t,k},\omega^*(\theta_{t,k}),D_t) -\widehat{\nabla}_\theta L(\theta_{t,k},\omega_t,D_t)  \right \rVert^2 \right]+ \frac{L_{l'}\alpha_{t}^2}{2}\E\left[\left\lVert  \widehat{\nabla}_\theta L(\theta_{t,k},\omega_t,D_t) \right\rVert^2\right] \nonumber\\
&\stackrel{(b)}{\leq} \E\left[J(\theta_{t,k})\right] + r_{t,k+1}  \left(1+\frac{1}{n_b}\right)\E\left[\left\lVert\omega^*(\theta_{t,k})- \omega_{t} \right\rVert^2\right]-\frac{\alpha_{t}}{2} \E\left[\left\lVert \nabla J(\theta_{t,k})\right\rVert^2 \right]\nonumber\\
&\quad+ \alpha_{t}\E\left[\left \lVert \nabla_\theta L(\theta_{t,k},\omega^*(\theta_{t,k}),D) -\widehat{\nabla}_\theta L(\theta_{t,k},\omega^*(\theta_{t,k}),D_t)  \right \rVert^2 \right]\nonumber\\
& \quad+\alpha_{t}\E\left[\left \lVert \widehat{\nabla}_\theta L(\theta_{t,k},\omega^*(\theta_{t,k}),D_t) -\widehat{\nabla}_\theta L(\theta_{t,k},\omega_t,D_t)  \right \rVert^2 \right]\nonumber\\
&\quad+ \alpha_{t}^2\left(\frac{L_{l'}}{2}+r_{t,k+1}\left(1+n_b\right)\frac{nL_m^2}{\lambda_\omega^2} \right)\E\left[\left\lVert  \widehat{\nabla}_\theta L(\theta_{t,k},\omega_t,D_t) \right\rVert^2\right] \nonumber\\
&\stackrel{(c)}{\leq} \E\left[J(\theta_{t,k})\right] + \left(r_{t,k+1}\left(1+\frac{1}{n_b}\right) + \alpha_{t}nL_l^2 \right)\E\left[\left\lVert\omega^*(\theta_{t,k})- \omega_{t} \right\rVert^2\right]-\frac{\alpha_{t}}{2} \E\left[\left\lVert \nabla J(\theta_{t,k})\right\rVert^2 \right]\nonumber\\
&\quad+ \alpha_{t}\E\left[\left \lVert \nabla_\theta L(\theta_{t,k},\omega^*(\theta_{t,k}),D) -\widehat{\nabla}_\theta L(\theta_{t,k},\omega^*(\theta_{t,k}),D_t)  \right \rVert^2 \right]\nonumber\\
&\quad+ \alpha_{t}^2\left(\frac{L_{l'}}{2}+r_{t,k+1}\left(1+n_b\right)\frac{nL_m^2}{\lambda_\omega^2} \right)\E\left[\left\lVert  \widehat{\nabla}_\theta L(\theta_{t,k},\omega_t,D_t) \right\rVert^2\right],
\end{align}
where the step $(a)$ follows from \eqref{eq:a}. The step $(b)$ follows from Lemma \ref{lm:cent_w_rec} with $\delta=\nicefrac{1}{n_b}$, and step $(c)$ follows from Lemma \ref{lm:cent_hat_nabla_L_diff}.

Let
\begin{align}
\label{eq:cent_r_k_rec1}
r_{t,k}=
\begin{cases}
\alpha_{t}nL_l^2+r_{t,k+1}\left(1+\frac{1}{n_b}\right) & \textrm{ for } k \in \{0,\cdots,n_b-1\}\\
0& \textrm{ for } k\geq n_b.
\end{cases}
\end{align}
Solving the recursion in \eqref{eq:cent_r_k_rec1}, we obtain
\begin{align}
\label{eq:cent_r_k_rec2}
r_{t,k}=\alpha_{t}nL_l^2n_b\left(\left(1+\frac{1}{n_b}\right)^{n_b-k}-1\right).
\end{align}
We can see that
\begin{align}
\label{eq:cent_r_k_bnd}
r_{t,k}&\leq r_{t,0},\; \forall k, \textrm { and } \nonumber\\
r_{t,0}&= \alpha_{t}nL_l^2n_b\left(\left(1+\frac{1}{n_b}\right)^{n_b}-1\right)\nonumber\\
&\stackrel{(a)}{\leq}\alpha_{t}nL_l^2n_b\left(e-1\right)
\leq\frac{7}{4}\alpha_{t}nL_l^2n_b,
\end{align}
where the step $(a)$ follows since $\lim_{n\to \infty}\left(1+\nicefrac{1}{n}\right)^n = e$.

From \eqref{eq:cent_r_k_bnd}, we obtain
\begin{align}
\label{eq:cent_c_k}
\mathop{\argmax}_{k\in \{0,\cdots n_b-1\}}\alpha_{t}^2\left(\frac{L_{l'}}{2}+r_{t,k+1}\left(1+n_b\right)\frac{nL_m^2}{\lambda_\omega^2} \right)
&=\alpha_{t}^2\left(\frac{L_{l'}}{2}+r_{t,0}\left(1+n_b\right)\frac{nL_m^2}{\lambda_\omega^2} \right)\nonumber\\
&\leq \alpha_{t}^2\left(\frac{L_{l'}}{2}+ \alpha_{t}\frac{7n^2L_m^2L_l^2n_b\left(1+n_b\right)}{4\lambda_\omega^2} \right).
\end{align}
Applying \eqref{eq:r_t_k}, \eqref{eq:cent_r_k_rec1} and \eqref{eq:cent_c_k} in \eqref{eq:r_t_k-1}, we obtain
\begin{align}
\label{eq:r_t_k-2}
R_{t,{k+1}}
&\leq R_{t,{k}}-\frac{\alpha_{t}}{2} \E\left[\left\lVert \nabla J(\theta_{t,k})\right\rVert^2 \right]
+ \alpha_{t}\E\left[\left \lVert \nabla_\theta L(\theta_{t,k},\omega^*(\theta_{t,k}),D) -\widehat{\nabla}_\theta L(\theta_{t,k},\omega^*(\theta_{t,k}),D_t)  \right \rVert^2 \right]\nonumber\\
&\quad+ \left(\alpha_{t}^2\left(\frac{L_{l'}}{2}+ \alpha_{t}\frac{7n^2L_m^2L_l^2n_b\left(1+n_b\right)}{4\lambda_\omega^2} \right)\right)\E\left[\left\lVert  \widehat{\nabla}_\theta L(\theta_{t,k},\omega_t,D_t) \right\rVert^2\right],
\end{align}

Summing \eqref{eq:r_t_k-2} from $k=0,\cdots,n_b-1$, we obtain
\begin{align}
\label{eq:r_t_k-3}
R_{t,{n_b}}&\leq R_{t,{0}}- \sum_{k=0}^{n_b-1}\frac{\alpha_{t}}{2} \E\left[\left\lVert \nabla J(\theta_{t,k})\right\rVert^2 \right]\nonumber\\
&\quad+ \sum_{k=0}^{n_b-1}\alpha_{t}\E\left[\left \lVert \nabla_\theta L(\theta_{t,k},\omega^*(\theta_{t,k}),D) -\widehat{\nabla}_\theta L(\theta_{t,k},\omega^*(\theta_{t,k}),D_t)  \right \rVert^2 \right]\nonumber\\
&\quad+\sum_{k=0}^{n_b-1} \left(\alpha_{t}^2\left(\frac{L_{l'}}{2}+ \alpha_{t}\frac{7n^2L_m^2L_l^2n_b\left(1+n_b\right)}{4\lambda_\omega^2} \right)\right)\E\left[\left\lVert  \widehat{\nabla}_\theta L(\theta_{t,k},\omega_t,D_t) \right\rVert^2\right].
\end{align}
From \eqref{eq:r_t_k}, we obtain
\begin{align}
\label{eq:R_t_nb}
R_{t,{n_b}} &= \E\left[J(\theta_{t,{n_b}})\right]+ r_{t,n_b} \E\left[ \left\lVert \omega^*(\theta_{t,{n_b}})-\omega_t \right\rVert^2\right] \nonumber\\
&\stackrel{(a)}{=} \E\left[J(\theta_{t,{n_b}})\right]\stackrel{(b)}{=} \E\left[J(\theta_{t+1})\right],
\end{align}
and
\begin{align}
\label{eq:R_t_0}
R_{t,{0}}&= \E\left[J(\theta_{t,0})\right]+ r_{t,0} \E\left[ \left\lVert \omega^*(\theta_{t,0})-\omega_t \right\rVert^2\right]\nonumber\\
&\stackrel{(c)}{=} \E\left[J(\theta_{t})\right]+ r_{t,0} \E\left[ \left\lVert \omega^*(\theta_{t})-\omega_t \right\rVert^2\right].
\end{align}
where step $(a)$ follows since $r_{t,n_b}=0$ from \eqref{eq:cent_r_k_rec1}, and step $(b)$ and $(c)$ follows since $\theta_{t,{n_b}}=\theta_{t+1}$ and $\theta_{t,0}=\theta_{t}$ from \eqref{eq:abt_theta_it}.

Applying \eqref{eq:R_t_nb} and \eqref{eq:R_t_0} in \eqref{eq:r_t_k-3}, we obtain
\begin{align}
\label{eq:r_t_k-4}
\E\left[J(\theta_{t+1})\right]&\leq \E\left[J(\theta_{t})\right]+ r_{t,0} \E\left[ \left\lVert \omega^*(\theta_{t})-\omega_t \right\rVert^2\right]- \sum_{k=0}^{n_b-1}\frac{\alpha_{t}}{2} \E\left[\left\lVert \nabla J(\theta_{t,k})\right\rVert^2 \right]\nonumber\\
&\quad+ \sum_{k=0}^{n_b-1}\alpha_{t}\E\left[\left \lVert \nabla_\theta L(\theta_{t,k},\omega^*(\theta_{t,k}),D) -\widehat{\nabla}_\theta L(\theta_{t,k},\omega^*(\theta_{t,k}),D_t)  \right \rVert^2 \right]\nonumber\\
&\quad+\sum_{k=0}^{n_b-1} \left(\alpha_{t}^2\left(\frac{L_{l'}}{2}+ \alpha_{t}\frac{7n^2L_m^2L_l^2n_b\left(1+n_b\right)}{4\lambda_\omega^2} \right)\right)\E\left[\left\lVert  \widehat{\nabla}_\theta L(\theta_{t,k},\omega_t,D_t) \right\rVert^2\right].
\end{align}


Let
\begin{align}
\label{eq:s_t}
S_{t}=\E\left[J(\theta_{t})\right]+ s_t \E\left[ \left\lVert \omega^*(\theta_{t})-\omega_t \right\rVert^2\right].
\end{align}
Now,
\begin{align}
\label{eq:s_t-1}
&S_{t+1}\nonumber\\
&=\E\left[J(\theta_{t+1})\right]+ s_{t+1} \E\left[ \left\lVert \omega^*(\theta_{t+1})-\omega_{t+1} \right\rVert^2\right]\nonumber\\
&\leq \E\left[J(\theta_{t})\right] + s_{t+1} \E\left[ \left\lVert \omega^*(\theta_{t+1})-\omega_{t+1} \right\rVert^2\right]+r_{t,0} \E\left[ \left\lVert \omega^*(\theta_{t})-\omega_t \right\rVert^2\right]\nonumber\\
&\quad+ \sum_{k=0}^{n_b-1}\alpha_{t}\E\left[\left \lVert \nabla_\theta L(\theta_{t,k},\omega^*(\theta_{t,k}),D) -\widehat{\nabla}_\theta L(\theta_{t,k},\omega^*(\theta_{t,k}),D_t)  \right \rVert^2 \right]\nonumber\\
&\quad+\sum_{k=0}^{n_b-1} \left(\alpha_{t}^2\left(\frac{L_{l'}}{2}+ \alpha_{t}\frac{7n^2L_m^2L_l^2n_b\left(1+n_b\right)}{4\lambda_\omega^2} \right)\right)\E\left[\left\lVert  \widehat{\nabla}_\theta L(\theta_{t,k},\omega_t,D_t) \right\rVert^2\right]
- \sum_{k=0}^{n_b-1}\frac{\alpha_{t}}{2} \E\left[\left\lVert \nabla J(\theta_{t,k})\right\rVert^2 \right]\nonumber\\
&\leq \E\left[J(\theta_{t})\right]+\left(r_{t,0}+s_{t+1}\left(1- \left(\beta_t\lambda_\omega -\beta_t^2\lambda_\omega^2\right)\right)\right)\E\left[\left\lVert \omega_{t}-\omega^*(\theta_{t})\right\rVert^2 \right]\nonumber\\
&\quad+s_{t+1}\left(\left(3\beta_t^2+7\beta_t^3\lambda_\omega\right) \E\left[\left\lVert\nabla_\omega M(\theta_{t},\omega^*(\theta_{t}),D)\right\rVert^2\right]\right.\nonumber\\
&\qquad+\left(3\beta_t^2+\frac{8\beta_t}{\lambda_\omega}+7\beta_t^3\lambda_\omega \right)\E\left[\left\lVert \widehat{\nabla}_\omega M(\theta_{t+1},\omega_t,D_t)-\nabla_\omega M(\theta_{t+1},\omega_t,D) \right\rVert^2\right] \nonumber\\
&\qquad\left.+\left( 1+ 7\beta_t^3\lambda_\omega^3+3\beta_t^2\lambda_\omega^2+\frac{7}{\beta_t\lambda_\omega}+17\beta_t\lambda_\omega\right)\frac{nL_m^2}{\lambda_\omega^2}\E\left[\left\lVert \theta_{t+1}-\theta_t\right\rVert^2\right]\right) \nonumber\\
&\quad+ \sum_{k=0}^{n_b-1}\alpha_{t}\E\left[\left \lVert \nabla_\theta L(\theta_{t,k},\omega^*(\theta_{t,k}),D) -\widehat{\nabla}_\theta L(\theta_{t,k},\omega^*(\theta_{t,k}),D_t)  \right \rVert^2 \right]\nonumber\\
&\quad+\sum_{k=0}^{n_b-1} \left(\alpha_{t}^2\left(\frac{L_{l'}}{2}+ \alpha_{t}\frac{7n^2L_m^2L_l^2n_b\left(1+n_b\right)}{4\lambda_\omega^2} \right)\right)\E\left[\left\lVert  \widehat{\nabla}_\theta L(\theta_{t,k},\omega_t,D_t) \right\rVert^2\right]
- \sum_{k=0}^{n_b-1}\frac{\alpha_{t}}{2} \E\left[\left\lVert \nabla J(\theta_{t,k})\right\rVert^2 \right]\nonumber\\
&\stackrel{(a)}{\leq} \E\left[J(\theta_{t})\right]+\left(r_{t,0}+s_{t+1}\left(1- \left(\beta_t\lambda_\omega -\beta_t^2\lambda_\omega^2\right)\right)\right)\E\left[\left\lVert \omega_{t}-\omega^*(\theta_{t})\right\rVert^2 \right]\nonumber\\
&\quad+s_{t+1}\left(\left(3\beta_t^2+7\beta_t^3\lambda_\omega\right) \E\left[\left\lVert\nabla_\omega M(\theta_{t},\omega^*(\theta_{t}),D)\right\rVert^2\right]\right.\nonumber\\
&\qquad+\left(3\beta_t^2+\frac{8\beta_t}{\lambda_\omega}+7\beta_t^3\lambda_\omega \right)\E\left[\left\lVert \widehat{\nabla}_\omega M(\theta_{t+1},\omega_t,D_t)-\nabla_\omega M(\theta_{t+1},\omega_t,D) \right\rVert^2\right] \nonumber\\
&\qquad\left.+\left( 1+ 7\beta_t^3\lambda_\omega^3+3\beta_t^2\lambda_\omega^2+\frac{7}{\beta_t\lambda_\omega}+17\beta_t\lambda_\omega\right)\frac{nL_m^2}{\lambda_\omega^2}\alpha_t^2n_b\sum_{k=0}^{n_b-1}\E\left[\left\lVert \widehat{\nabla}_\theta L(\theta_{t,k},\omega_t,D_t)\right\rVert^2\right]\right) \nonumber\\
&\quad+ \sum_{k=0}^{n_b-1}\alpha_{t}\E\left[\left \lVert \nabla_\theta L(\theta_{t,k},\omega^*(\theta_{t,k}),D) -\widehat{\nabla}_\theta L(\theta_{t,k},\omega^*(\theta_{t,k}),D_t)  \right \rVert^2 \right]\nonumber\\
&\quad+\sum_{k=0}^{n_b-1}\left(\alpha_{t}^2\left(\frac{L_{l'}}{2}+ \alpha_{t}\frac{7n^2L_m^2L_l^2n_b\left(1+n_b\right)}{4\lambda_\omega^2} \right)\right)\E\left[\left\lVert  \widehat{\nabla}_\theta L(\theta_{t,k},\omega_t,D_t) \right\rVert^2\right]
- \sum_{k=0}^{n_b-1}\frac{\alpha_{t}}{2} \E\left[\left\lVert \nabla J(\theta_{t,k})\right\rVert^2 \right]\nonumber\\
&= \E\left[J(\theta_{t})\right]+\left(r_{t,0}+s_{t+1}\left(1- \left(\beta_t\lambda_\omega -\beta_t^2\lambda_\omega^2\right)\right)\right)\E\left[\left\lVert \omega_{t}-\omega^*(\theta_{t})\right\rVert^2 \right]\nonumber\\
&\quad+s_{t+1}\left(\left(3\beta_t^2+7\beta_t^3\lambda_\omega\right) \E\left[\left\lVert\nabla_\omega M(\theta_{t},\omega^*(\theta_{t}),D)\right\rVert^2\right]\right.\nonumber\\
 &\qquad+\left(3\beta_t^2+\frac{8\beta_t}{\lambda_\omega}+7\beta_t^3\lambda_\omega \right)\E\left[\left\lVert \widehat{\nabla}_\omega M(\theta_{t+1},\omega_t,D_t)-\nabla_\omega M(\theta_{t+1},\omega_t,D) \right\rVert^2\right] \nonumber\\
&\qquad + \alpha_t^2\left(  \beta_t^3 7\lambda_\omega L_m^2nn_b +\beta_t^2 3L_m^2nn_b+\frac{\beta_t 17L_m^2nn_b}{\lambda_\omega}+\frac{\alpha_{t}7n^2L_m^2L_l^2n_b\left(1\!+n_b\right)}{4\lambda_\omega^2}
+\frac{\lambda_\omega^2 L_{l'}+2 L_m^2nn_b}{2\lambda_\omega^2} +\frac{7L_m^2nn_b}{\beta_t\lambda_\omega^3} \right)\nonumber\\
&\qquad\quad\left.\times\sum_{k=0}^{n_b-1}\E\left[\left\lVert \widehat{\nabla}_\theta L(\theta_{t,k},\omega_t,D_t)\right\rVert^2\right]\right) \nonumber\\
&\quad+ \sum_{k=0}^{n_b-1}\alpha_{t}\E\left[\left \lVert \nabla_\theta L(\theta_{t,k},\omega^*(\theta_{t,k}),D) -\widehat{\nabla}_\theta L(\theta_{t,k},\omega^*(\theta_{t,k}),D_t)  \right \rVert^2 \right]
- \sum_{k=0}^{n_b-1}\frac{\alpha_{t}}{2} \E\left[\left\lVert \nabla J(\theta_{t,k})\right\rVert^2 \right].
\end{align}
In the above, the step $(a)$ follows from \eqref{eq:abt_theta_it} and the fact that $\left\lVert \sum_{i=1}^{n}a_i\right\rVert^2\leq n\sum_{i=1}^{n} \left\lVert a_i\right\rVert^2$.
Let
\begin{align}
\label{eq:cent_s_t_rec}
s_t=
\begin{cases}
r_{t,0}+s_{t+1}\left(1-\left(\beta_t\lambda_\omega -\beta_t^2\lambda_\omega^2\right)\right) & \textrm{ for } t \in \{0,T-1\}\\
0& \textrm{ for } t\geq T,
\end{cases}
\end{align}
Solving the recursion in \eqref{eq:cent_s_t_rec1}, we obtain
\begin{align}
\label{eq:cent_s_t_rec1}
s_t=\sum_{i=t}^{T-1}r_{i,0}\prod_{j=t}^{i-1}\left(1-\left(\beta_j\lambda_\omega -\beta_j^2\lambda_\omega^2\right)\right), \textrm{ where }
\sum_{i=t}^{t-1}r_{i,0}=0 \textrm{ and } \prod_{j=t}^{t-1}\left(1-\left(\beta_j\lambda_\omega -\beta_j^2\lambda_\omega^2\right)\right)=1.
\end{align}

We can see that
\begin{align}
\label{eq:cent_s_t_bnd}
\forall t, s_t&\leq s_0,
\end{align}
and
\begin{align}
\label{eq:cent_s_0_bnd}
s_0 &= \sum_{i=0}^{T-1}r_{i,0}\prod_{j=0}^{i-1}\left(1-\left(\beta_j\lambda_\omega -\beta_j^2\lambda_\omega^2\right)\right)\nonumber\\
&= \sum_{i=0}^{T-1}\alpha_{i}nL_l^2n_b\left(\left(1+\frac{1}{n_b}\right)^{n_b}-1\right)\prod_{j=0}^{i-1}\left(1-\left(\beta_j\lambda_\omega -\beta_j^2\lambda_\omega^2\right)\right)\nonumber\\
&\leq\sum_{i=0}^{T-1}\frac{7}{4}\alpha_{i}nn_bL_l^2\prod_{j=0}^{i-1}\left(1-\left(\beta_j\lambda_\omega -\beta_j^2\lambda_\omega^2\right)\right)\nonumber\\
&\stackrel{(a)}{\leq}\sum_{i=0}^{T-1}\frac{7}{4}\alpha_{i}nn_bL_l^2 \exp\left(-\sum_{j=0}^{i-1}\left(\beta_j\lambda_\omega -\beta_j^2\lambda_\omega^2\right)\right).
\end{align}
In the above, the step $(a)$ follows from the fact that $\prod_{i=0}^{n}(1-a_i) \leq \exp\left(-\sum_{i=0}^{n}a_i\right),\,a_i \geq 0$ along with $\left(\beta_j\lambda_\omega -\beta_j^2\lambda_\omega^2\right)\geq0$ as $\forall t,\, \beta_t\in(0,1)$ and $\lambda_\omega \in (0,0.5]$.

Applying \eqref{eq:cent_s_t_rec} and \eqref{eq:cent_s_t_rec1} in \eqref{eq:s_t-1}, we obtain
\begin{align}
\label{eq:s_t-2}
&S_{t+1}\leq S_t\nonumber\\
&+s_{t+1}\left(\left(3\beta_t^2+7\beta_t^3\lambda_\omega\right) \E\left[\left\lVert\nabla_\omega M(\theta_{t},\omega^*(\theta_{t}),D)\right\rVert^2\right]\right.\nonumber\\
&\quad+\left(3\beta_t^2+\frac{8\beta_t}{\lambda_\omega}+7\beta_t^3\lambda_\omega \right)\E\left[\left\lVert \widehat{\nabla}_\omega M(\theta_{t+1},\omega_t,D_t)-\nabla_\omega M(\theta_{t+1},\omega_t,D) \right\rVert^2\right] \nonumber\\
&\quad + \alpha_t^2\left(  \beta_t^3 7\lambda_\omega L_m^2nn_b +\beta_t^2 3L_m^2nn_b+\frac{\beta_t 17L_m^2nn_b}{\lambda_\omega}+\frac{\alpha_{t}7n^2L_m^2L_l^2n_b\left(1+n_b\right)}{4\lambda_\omega^2}
+\frac{\lambda_\omega^2 L_{l'}+2 L_m^2nn_b}{2\lambda_\omega^2} +\frac{7L_m^2nn_b}{\beta_t\lambda_\omega^3} \right)\nonumber\\
&\qquad\left.\times\sum_{k=0}^{n_b-1}\E\left[\left\lVert \widehat{\nabla}_\theta L(\theta_{t,k},\omega_t,D_t)\right\rVert^2\right]\right) \nonumber\\
&+ \sum_{k=0}^{n_b-1}\alpha_{t}\E\left[\left \lVert \nabla_\theta L(\theta_{t,k},\omega^*(\theta_{t,k}),D) -\widehat{\nabla}_\theta L(\theta_{t,k},\omega^*(\theta_{t,k}),D_t)  \right \rVert^2 \right]
- \sum_{k=0}^{n_b-1}\frac{\alpha_{t}}{2} \E\left[\left\lVert \nabla J(\theta_{t,k})\right\rVert^2 \right].
\end{align}

Summing \eqref{eq:s_t-2} from $t=0,\cdots, T-1$, we obtain
\begin{align}
\label{eq:s_t-3}
&S_{T} \leq S_0 \nonumber\\
&+\sum_{t=0}^{T-1} s_{t+1}\left(\left(3\beta_t^2+7\beta_t^3\lambda_\omega\right) \E\left[\left\lVert\nabla_\omega M(\theta_{t},\omega^*(\theta_{t}),D)\right\rVert^2\right]\right.\nonumber\\
&\quad+\left(3\beta_t^2+\frac{8\beta_t}{\lambda_\omega}+7\beta_t^3\lambda_\omega \right)\E\left[\left\lVert \widehat{\nabla}_\omega M(\theta_{t+1},\omega_t,D_t)-\nabla_\omega M(\theta_{t+1},\omega_t,D) \right\rVert^2\right] \nonumber\\
&\quad + \alpha_t^2\left(  \beta_t^3 7\lambda_\omega L_m^2nn_b +\beta_t^2 3L_m^2nn_b+\frac{\beta_t 17L_m^2nn_b}{\lambda_\omega}+\frac{\alpha_{t}7n^2L_m^2L_l^2n_b\left(1+n_b\right)}{4\lambda_\omega^2}
+\frac{\lambda_\omega^2 L_{l'}+2 L_m^2nn_b}{2\lambda_\omega^2} +\frac{7L_m^2nn_b}{\beta_t\lambda_\omega^3} \right)\nonumber\\
&\qquad \times\left.\sum_{k=0}^{n_b-1}\E\left[\left\lVert \widehat{\nabla}_\theta L(\theta_{t,k},\omega_t,D_t)\right\rVert^2\right]\right) \nonumber\\
&+ \sum_{t=0}^{T-1}\sum_{k=0}^{n_b-1}\alpha_{t}\E\left[\left \lVert \nabla_\theta L(\theta_{t,k},\omega^*(\theta_{t,k}),D) -\widehat{\nabla}_\theta L(\theta_{t,k},\omega^*(\theta_{t,k}),D_t)  \right \rVert^2 \right]
- \sum_{t=0}^{T-1}\sum_{k=0}^{n_b-1}\frac{\alpha_{t}}{2} \E\left[\left\lVert \nabla J(\theta_{t,k})\right\rVert^2 \right].
\end{align}

From \eqref{eq:s_t}, we obtain
\begin{align}
\label{eq:S_T}
S_T &= \E\left[J(\theta_{T})\right]+ s_T \E\left[ \left\lVert \omega^*(\theta_{T})-\omega_T \right\rVert^2\right]
\stackrel{(a)}{=}  \E\left[J(\theta_{T})\right],
\end{align}
and
\begin{align}
\label{eq:S_0}
S_0 &= \E\left[J(\theta_{0})\right]+ s_0 \E\left[ \left\lVert \omega^*(\theta_{0})-\omega_0 \right\rVert^2\right].
\end{align}
where step $(a)$ follows since $s_T=0$ from \eqref{eq:cent_s_t_rec}.

Applying \eqref{eq:S_T} and \eqref{eq:S_0} in \eqref{eq:s_t-3}, we obtain
\begin{align}
\label{eq:s_t-4}
&\E\left[J(\theta_{T})\right]\nonumber\\
&\leq \E\left[J(\theta_{0})\right]+ s_0 \E\left[ \left\lVert \omega^*(\theta_{0})-\omega_0 \right\rVert^2\right]+\sum_{t=0}^{T-1} s_{t+1}\left(3\beta_t^2+7\beta_t^3\lambda_\omega\right) \E\left[\left\lVert\nabla_\omega M(\theta_{t},\omega^*(\theta_{t}),D)\right\rVert^2\right]\nonumber\\
 &\quad+\sum_{t=0}^{T-1} s_{t+1}\left(3\beta_t^2+\frac{8\beta_t}{\lambda_\omega}+7\beta_t^3\lambda_\omega \right)\E\left[\left\lVert \widehat{\nabla}_\omega M(\theta_{t+1},\omega_t,D_t)-\nabla_\omega M(\theta_{t+1},\omega_t,D) \right\rVert^2\right] \nonumber\\
&\quad+\sum_{t=0}^{T-1} s_{t+1}\alpha_t^2\left(  \beta_t^3 7\lambda_\omega L_m^2nn_b +\beta_t^2 3L_m^2nn_b+\frac{\beta_t 17L_m^2nn_b}{\lambda_\omega}+\frac{\alpha_{t}7n^2L_m^2L_l^2n_b\left(1+n_b\right)}{4\lambda_\omega^2}\right.\nonumber\\
&\qquad\left.+\frac{\lambda_\omega^2 L_{l'}+2 L_m^2nn_b}{2\lambda_\omega^2} +\frac{7L_m^2nn_b}{\beta_t\lambda_\omega^3} \right)\sum_{k=0}^{n_b-1}\E\left[\left\lVert \widehat{\nabla}_\theta L(\theta_{t,k},\omega_t,D_t)\right\rVert^2\right] \nonumber\\
&\quad+ \sum_{t=0}^{T-1}\sum_{k=0}^{n_b-1}\alpha_{t}\E\left[\left \lVert \nabla_\theta L(\theta_{t,k},\omega^*(\theta_{t,k}),D) -\widehat{\nabla}_\theta L(\theta_{t,k},\omega^*(\theta_{t,k}),D_t)  \right \rVert^2 \right]
- \sum_{t=0}^{T-1}\sum_{k=0}^{n_b-1}\frac{\alpha_{t}}{2} \E\left[\left\lVert \nabla J(\theta_{t,k})\right\rVert^2 \right]\nonumber\\
&\stackrel{(a)}{\leq} \E\left[J(\theta_{0})\right]+ s_0 \E\left[ \left\lVert \omega^*(\theta_{0})-\omega_0 \right\rVert^2\right]+\sum_{t=0}^{T-1} s_{0}\left(3\beta_t^2+7\beta_t^3\lambda_\omega\right) \left(2nM_m^2 + 2\lambda_\omega^2M_{\omega^*}^2\right)\nonumber\\
 &\quad+\sum_{t=0}^{T-1} s_{0}\left(3\beta_t^2+\frac{8\beta_t}{\lambda_\omega}+7\beta_t^3\lambda_\omega \right)4M_m^2\sum_{i=1}^n\E\left[\frac{1}{\lfloor(1-p)|D_t^i|\rfloor}\right] \nonumber\\
&\quad+\sum_{t=0}^{T-1} s_{0}\alpha_t^2\left(  \beta_t^3 7\lambda_\omega L_m^2nn_b +\beta_t^2 3L_m^2nn_b+\frac{\beta_t 17L_m^2nn_b}{\lambda_\omega}+\frac{\alpha_{t}7n^2L_m^2L_l^2n_b\left(1+n_b\right)}{4\lambda_\omega^2}\right.\nonumber\\
&\qquad\left.+\frac{\lambda_\omega^2 L_{l'}+2 L_m^2nn_b}{2\lambda_\omega^2} +\frac{7L_m^2nn_b}{\beta_t\lambda_\omega^3} \right)nn_bL_l^2
+ \sum_{t=0}^{T-1}\alpha_{t}16{L_l}^2e^2n_b^2\sum_{i=1}^n \E\left[\frac{1}{\left\lfloor p|D^i_t|\right\rfloor}\right]
- \sum_{t=0}^{T-1}\sum_{k=0}^{n_b-1}\frac{\alpha_{t}}{2} \E\left[\left\lVert \nabla J(\theta_{t,k})\right\rVert^2 \right]\nonumber\\
&\stackrel{(b)}{\leq} \E\left[J(\theta_{0})\right]+ s_0 \E\left[ \left\lVert \omega^*(\theta_{0})-\omega_0 \right\rVert^2\right]+\sum_{t=0}^{T-1} s_{0}\left(3\beta_t^2+7\beta_t^3\lambda_\omega\right) \left(2nM_m^2 + 2\lambda_\omega^2M_{\omega^*}^2\right)\nonumber\\
 &\quad+\sum_{t=0}^{T-1} \left(s_{0}\left(3\beta_t^2+\frac{8\beta_t}{\lambda_\omega}+7\beta_t^3\lambda_\omega \right)4M_m^2+\alpha_{t}16{L_l}^2e^2n_b^2\right)\sum_{i=1}^n\E\left[\frac{1}{\lfloor (1-p)|D_t^i|\rfloor}\right] \nonumber\\
&\quad+\sum_{t=0}^{T-1} s_{0}\alpha_t^2\left(  \beta_t^3 7\lambda_\omega L_m^2nn_b +\beta_t^2 3L_m^2nn_b+\frac{\beta_t 17L_m^2nn_b}{\lambda_\omega}+\frac{\alpha_{t}7n^2L_m^2L_l^2n_b\left(1+n_b\right)}{4\lambda_\omega^2}\right.\nonumber\\
&\qquad\left.+\frac{\lambda_\omega^2 L_{l'}+2 L_m^2nn_b}{2\lambda_\omega^2} +\frac{7L_m^2nn_b}{\beta_t\lambda_\omega^3} \right)nn_bL_l^2 \nonumber\\
&\quad - \sum_{t=0}^{T-1}\sum_{k=0}^{n_b-1}\frac{\alpha_{t}}{2} \E\left[\left\lVert \nabla J(\theta_{t,k})\right\rVert^2 \right].
\end{align}
In the above, the step $(a)$ follows from \eqref{eq:cent_s_t_bnd}, and Lemmas \ref{lm:cent_M_bnd}, \ref{lm:cent_M_mse}, \ref{lm:cent_hat_grad_L_bnd} and \ref{lm:cent_mse_gard_L}. The step $(b)$ follows since $p\in(0.5,1)$ from \eqref{eq:abt_tv}.

As $\forall t,\,\alpha_t=\frac{1}{T^{\nicefrac{3}{5}}};\;\beta_t=\frac{1}{T^{\nicefrac{2}{5}}}$, from \eqref{eq:cent_s_0_bnd}, we obtain
\begin{align}
\label{eq:cent_s_0_bnd1}
s_0
& \leq\sum_{i=0}^{T-1}\frac{7}{4}\alpha_{i}nn_bL_l^2 \exp\left(-\sum_{j=0}^{i-1}\left(\beta_j\lambda_\omega -\beta_j^2\lambda_\omega^2\right)\right)\nonumber\\
& \leq\frac{7}{4}\frac{1}{T^{\nicefrac{3}{5}}}nn_bL_l^2\sum_{i=0}^{T-1} \exp\left(-i\left(\frac{1}{T^{\nicefrac{2}{5}}}\lambda_\omega -\frac{1}{T^{\nicefrac{4}{5}}}\lambda_\omega^2\right)\right)\nonumber\\
& \leq\frac{7}{4}\frac{1}{T^{\nicefrac{3}{5}}}nn_bL_l^2\sum_{i=0}^{\infty} \exp\left(-i\left(\frac{1}{T^{\nicefrac{2}{5}}}\lambda_\omega-\frac{1}{T^{\nicefrac{4}{5}}}\lambda_\omega^2\right)\right)\nonumber\\
&\stackrel{(a)}{=}\frac{7}{4}\frac{1}{T^{\nicefrac{3}{5}}}nn_bL_l^2\frac{1}{1-\exp\left(-\left(\frac{1}{T^{\nicefrac{2}{5}}}\lambda_\omega-\frac{1}{T^{\nicefrac{4}{5}}}\lambda_\omega^2\right)\right)}\nonumber\\
&\stackrel{(b)}{\leq}\frac{7}{4}\frac{1}{T^{\nicefrac{3}{5}}}nn_bL_l^2\frac{2}{\frac{1}{T^{\nicefrac{2}{5}}}\lambda_\omega-\frac{1}{T^{\nicefrac{4}{5}}}\lambda_\omega^2}\nonumber\\
&=\frac{7}{2}nn_bL_l^2\frac{1}{T^{\nicefrac{1}{5}}\lambda_\omega-\frac{1}{T^{\nicefrac{1}{5}}}\lambda_\omega^2}\nonumber\\
&\stackrel{(c)}{\leq}\frac{7}{2}nn_bL_l^2\frac{1}{\lambda_\omega^2T^{\nicefrac{1}{5}}}.
\end{align}
In the above, the step $(a)$ follows from the fact that $\sum_{i=0}^{\infty}r^i=\nicefrac{1}{1-r},\,\lvert r \rvert <1$. The step $(b)$ follows from the fact that $\exp(-x) \leq 1-\nicefrac{x}{2},\,x\in[0,1.59]$. The step $(c)$ follows from the fact that $\nicefrac{1}{\left(ax^{\nicefrac{1}{5}}-\nicefrac{a^{2}}{x^{\nicefrac{1}{5}}}\right)}\leq \nicefrac{1}{a^{2}x^{\nicefrac{1}{5}}},\,x\geq1,a\in(0,0.5]$ along with $T\geq1$ and $\lambda_\omega\in (0,0.5]$ from \eqref{eq:M}.

Let $J^*=\argmin_{\theta\in\R^d} J(\theta)$. From \eqref{eq:abt_omega_it} we have $\omega_0=\mathbf{0}$, and from Assumption \ref{as:cent_w*}, we have $\omega^*(\theta_{0})\leq M_{\omega^*}^2$. We also have $\forall t,\,\alpha_t=\frac{1}{T^{\nicefrac{3}{5}}};\;\beta_t=\frac{1}{T^{\nicefrac{2}{5}}}$. Then, applying \eqref{eq:cent_s_0_bnd1} and   in \eqref{eq:s_t-4}, we obtain

\begin{align}
\label{eq:s_t-5}
&\sum_{t=0}^{T-1}\sum_{k=0}^{n_b-1}\frac{1}{2T^{\nicefrac{3}{5}}} \E\left[\left\lVert \nabla J(\theta_{t,k})\right\rVert^2 \right]
\leq J(\theta_{0}) - J^* \nonumber\\
&+ \frac{21nn_bL_l^2}{\lambda_\omega^2}\left(nM_m^2 + \lambda_\omega^2M_{\omega^*}^2\right)+ \frac{7nn_bL_l^2M_{\omega^*}^2}{2\lambda_\omega^2T^{\nicefrac{1}{5}}}+\frac{49nn_bL_l^2}{\lambda_\omega T^{\nicefrac{2}{5}}}\left(nM_m^2 + \lambda_\omega^2M_{\omega^*}^2\right)\nonumber\\
&+ \left(\frac{98nn_bL_l^2M_m^2}{\lambda_\omega T^{\nicefrac{7}{5}}}+\frac{42nn_bL_l^2M_m^2}{\lambda_\omega^2T}+\frac{16{L_l}^2n_b\left(\nicefrac{7nM_m^2}{\lambda_\omega^3}+ e^2n_b\right)}{T^{\nicefrac{3}{5}}}\right)\sum_{t=0}^{T-1}\sum_{i=1}^n\E\left[\frac{1}{\lfloor (1-p)|D_t^i|\rfloor}\right] \nonumber\\
&+\frac{7n^2n_b^2L_l^4}{2\lambda_\omega^2}\left(  \frac{7\lambda_\omega L_m^2nn_b}{T^{\nicefrac{8}{5}}}  +\frac{3L_m^2nn_b}{T^{\nicefrac{6}{5}}} +\frac{7n^2L_m^2L_l^2n_b\left(1+n_b\right)}{4\lambda_\omega^2 T}+\frac{ 17L_m^2nn_b}{\lambda_\omega T^{\nicefrac{4}{5}}}+\frac{\lambda_\omega^2 L_{l'}+2 L_m^2nn_b}{2\lambda_\omega^2 T^{\nicefrac{2}{5}}} +\frac{7L_m^2nn_b}{\lambda_\omega^3} \right).
\end{align}

Since $\p\left (R_1=t\right)=\nicefrac{1}{T}$ and $\p\left (R_2=k \right)= \nicefrac{1}{n_b}$, from \eqref{eq:s_t-5} we obtain
\begin{align}
\label{eq:SR-1}
&\E\left[\left\lVert \nabla J(\theta_{R_1,R_2})\right\rVert^2 \right]
=\frac{\sum_{t=0}^{T-1}\sum_{k=0}^{n_b-1} \E\left[\left\lVert \nabla J(\theta_{t,k})\right\rVert^2 \right]}{Tn_b}
=\frac{2\sum_{t=0}^{T-1}\sum_{k=0}^{n_b-1}\frac{1}{2T^{\nicefrac{3}{5}}} \E\left[\left\lVert \nabla J(\theta_{t,k})\right\rVert^2 \right]}{T^{\nicefrac{2}{5}}n_b}\nonumber\\
&\leq \frac{2\left(J(\theta_{0}) - J^*\right)}{n_bT^{\nicefrac{2}{5}}} +\frac{7L_l^2n\left(7L_l^2 L_m^2n^2n_b^2+ 6\lambda_\omega^3\left(M_m^2n + \lambda_\omega^2M_{\omega^*}^2\right)\right)}{\lambda_\omega^5 T^{\nicefrac{2}{5}}}\nonumber\\
&\quad+ \left(\frac{196 nL_l^2M_m^2}{\lambda_\omega T^{\nicefrac{4}{5}}}+\frac{84nL_l^2M_m^2}{\lambda_\omega^2T^{\nicefrac{2}{5}}}+\frac{32{L_l}^2}{\lambda_\omega^3 }\left(7nM_m^2+ \lambda_\omega^3e^2n_b\right)\right)\sum_{i=1}^n\E\left[\frac{1}{\lfloor (1-p)|D_{R_1}^i|\rfloor}\right] \nonumber\\
&\quad +\frac{7nL_l^2M_{\omega^*}^2}{\lambda_\omega^2T^{\nicefrac{3}{5}}}+ \frac{7L_l^2n\left(nn_bL_l^2\left(\lambda_\omega^2 L_{l'}+2 L_m^2nn_b\right)+28\lambda_\omega^3\left(M_m^2 n +  28\lambda_\omega^2M_{\omega^*}^2\right)\right)}{2\lambda_\omega^4 T^{\nicefrac{4}{5}}}\nonumber\\
&\quad+\frac{7n^3n_b^2L_l^4L_m^2}{\lambda_\omega^2}\left(  \frac{7\lambda_\omega }{T^{2}}  +\frac{3}{T^{\nicefrac{8}{5}}} +\frac{7L_l^2n\left(1+n_b\right)}{4\lambda_\omega^2 T^{\nicefrac{7}{5}}}+\frac{ 17}{\lambda_\omega T^{\nicefrac{6}{5}}}\right).
\end{align}
\hfill{\qed}


\end{document}